\documentclass{article} 
\PassOptionsToPackage{table,xcdraw,dvipsnames}{xcolor}
\usepackage{iclr2027_conference,times}

\usepackage{amsmath,amsfonts,bm}

\def\eqref#1{equation~\ref{#1}}

\def\1{\bm{1}}

\def\vzero{{\bm{0}}}

\def\vb{{\bm{b}}}

\def\vu{{\bm{u}}}
\def\vv{{\bm{v}}}
\def\vw{{\bm{w}}}

\def\mW{{\bm{W}}}

\DeclareMathAlphabet{\mathsfit}{\encodingdefault}{\sfdefault}{m}{sl}
\SetMathAlphabet{\mathsfit}{bold}{\encodingdefault}{\sfdefault}{bx}{n}

\def\gF{{\mathcal{F}}}

\def\gH{{\mathcal{H}}}
\def\gI{{\mathcal{I}}}

\def\gP{{\mathcal{P}}}

\newcommand{\E}{\mathbb{E}}
\newcommand{\Ls}{\mathcal{L}}
\newcommand{\R}{\mathbb{R}}

\usepackage{hyperref}
\usepackage{url}

\title{LoCA: Enabling Multi-Modality in Geospatial Location Encoders}

\usepackage{authblk}
\makeatletter
\DeclareRobustCommand{\jointfirst}{\protected@xdef\@thanks{\@thanks\protect\footnotetext[1]{Joint first authorship.}}}
\makeatother

\author{
\textbf{Panagiotis Koromilas}\,$^{* 1,2}$\jointfirst\quad
\textbf{Eleanna Vali}\,$^{* 1,3}$\quad
\textbf{Diogenis Antonopoulos}\,$^{4,5}$\quad
\textbf{Eleni Saka}\,$^{4}$\\
\textbf{Dimitris Christofis}\,$^{1}$\quad
\textbf{Nikolaos Ioannis Bountos}\,$^{6,4}$\quad
\textbf{Konstantin Klemmer}\,$^{7,8}$\\
\textbf{Ioannis Papoutsis}\,$^{3,4,9}$\quad
\textbf{Yannis Panagakis}\,$^{1,3}$\quad
\textbf{Mihalis A. Nicolaou}\,$^{10,2}$
}
\affil{
$^{1}\,$University of Athens\quad
$^{2}\,$The Cyprus Institute\quad
$^{3}\,$Archimedes, Athena Research Center\\
$^{4}\,$National Technical University of Athens\quad
$^{5}\,$Harokopio University\\
$^{6}\,$European Space Agency\quad
$^{7}\,$University College London\quad
$^{8}\,$LGND AI, Inc.\\
$^{9}\,$National Observatory of Athens\quad
$^{10}\,$University of Cyprus
}

\PassOptionsToPackage{numbers, compress}{natbib}
\usepackage{iclr2027_conference}

\usepackage[utf8]{inputenc}
\usepackage[T1]{fontenc}
\usepackage{microtype}
\usepackage{xspace}

\usepackage{amsmath,amsfonts,amssymb,amsthm}
\usepackage{bm}
\usepackage{nicefrac}

\usepackage[table,xcdraw,dvipsnames]{xcolor}
\usepackage{colortbl}
\usepackage{graphicx}
\usepackage{rotating}
\usepackage{tikz}
\usepackage{pgfplots}
\usepackage{subcaption}
\usepgfplotslibrary{groupplots}
\pgfplotsset{compat=1.18}
\usetikzlibrary{arrows.meta, positioning, shapes}

\usepackage{array}
\usepackage{booktabs}
\usepackage{multirow}
\usepackage{multicol}
\usepackage{makecell}
\usepackage{tabularx}
\usepackage{adjustbox}
\usepackage{siunitx}
\usepackage{float}
\usepackage{placeins}
\usepackage{stfloats}
\usepackage{lscape}
\usepackage{afterpage}

\usepackage{enumitem}
\usepackage{algorithmic}
\usepackage{fontawesome5}
\usepackage{pifont}
\usepackage{textcomp}
\usepackage{verbatim}

\usepackage{url}
\usepackage{xr}

\usepackage{hyperref}

\definecolor{cecolor}{RGB}{140,81,10}
\definecolor{ntcecolor}{RGB}{46,139,87}
\definecolor{sntcecolor}{RGB}{255,191,0}
\definecolor{nonlcolor}{RGB}{65,105,225}
\definecolor{cifar10best}{RGB}{0, 120, 0}
\definecolor{cifar100best}{RGB}{200, 100, 0}
\definecolor{baseline}{RGB}{248, 248, 248}
\definecolor{header}{RGB}{250, 250, 250}
\definecolor{bestresult}{HTML}{228B22}
\definecolor{HeaderGray}{RGB}{248,249,250}
\definecolor{AccentGray}{RGB}{242,244,246}
\definecolor{SubtleHighlight}{RGB}{250,251,252}
\definecolor{TextGray}{RGB}{90,90,90}
\definecolor{BorderLight}{RGB}{220,220,220}
\definecolor{MethodGray}{RGB}{110,110,110}
\definecolor{lightgray}{gray}{0.95}
\definecolor{darkgray}{gray}{0.8}
\definecolor{ForestGreen}{RGB}{34, 139, 34}
\definecolor{BrickRed}{RGB}{178, 34, 34}
\definecolor{NavyBlue}{RGB}{0, 100, 200}

\newtheorem{theorem}{Theorem}
\newtheorem{lemma}{Lemma}
\newtheorem{corollary}{Corollary}
\newtheorem{definition}{Definition}

\usepackage{cleveref}

\newcommand{\eg}{\emph{e.g.}\xspace}
\newcommand{\ie}{\emph{i.e.}\xspace}

\crefname{theorem}{Theorem}{Theorems}
\crefname{lemma}{Lemma}{Lemmas}
\crefname{corollary}{Corollary}{Corollaries}
\crefname{definition}{Definition}{Definitions}
\Crefname{theorem}{Theorem}{Theorems}
\Crefname{lemma}{Lemma}{Lemmas}
\Crefname{corollary}{Corollary}{Corollaries}
\Crefname{definition}{Definition}{Definitions}

\usepackage{graphicx}
\usepackage[dvipsnames]{xcolor}
\usepackage{booktabs}
\usepackage{caption}

\newcommand{\locarepo}{\href{https://github.com/eleannavali/LoCA}{\texttt{github.com/eleannavali/LoCA}}}
\providecommand{\E}{\mathbb{E}}
\providecommand{\R}{\mathbb{R}}

\providecommand{\Ls}{\mathcal{L}}
\providecommand{\gF}{\mathcal{F}}
\providecommand{\gH}{\mathcal{H}}
\providecommand{\gI}{\mathcal{I}}
\providecommand{\gP}{\mathcal{P}}

\providecommand{\mW}{\bm{W}}
\providecommand{\vb}{\bm{b}}
\providecommand{\vphi}{\bm{\phi}}
\providecommand{\vu}{\bm{u}}
\providecommand{\vv}{\bm{v}}
\providecommand{\vw}{\bm{w}}
\providecommand{\vzero}{\bm{0}}

\renewcommand{\E}{\mathbb{E}}

\iclrpreprinttrue
\begin{document}

\maketitle

\vspace{-2.2em}
\begin{center}
{\hypersetup{pdfborder={0 0 0}}%
\href{https://github.com/eleannavali/LoCA}{\faGithub\ \texttt{github.com/eleannavali/LoCA}}}
\end{center}
\vspace{0.3em}

\begin{abstract}
 
Location encoders map geographic coordinates to high-dimensional descriptors through contrastive alignment with paired Earth Observation imagery. 
We prove that this construction is structurally limited: 
any Lipschitz location encoder produces an output of intrinsic dimension at most 2, regardless of architecture or bandwidth, which forces the 
contrastive loss to strictly exceed its asymptotic optimum. Empirically,  
joint training also collapses the encoder's embedding
covariance to extremely low rank. 
Although dimension-limited, current location encoders sidestep this spectral collapse by training against a frozen image encoder, which we reinterpret as a covariance regularizer that supplies a high-rank target for the location encoder’s output to fold onto. However, extending this approach to multimodal inputs is fundamentally challenging, as location encoders should simultaneously map a single location to multiple, potentially incompatible, descriptors. 
We identify two requirements a regularizer must satisfy for location encoding in multimodal regimes: \emph{co-alignment} across modalities, since adapters alone cannot align independently pretrained encoders; and \emph{high effective rank}, since the location encoder inherits its target's spectrum. We propose \textbf{LoCA} (\underline{Lo}cation \underline{C}ontrastive \underline{A}lignment), a two-stage pipeline that satisfies both.
\textit{Stage 1} jointly trains Sentinel-1 and Sentinel-2 
encoders with contrastive learning, producing a co-aligned, high-rank multimodal target. \textit{Stage 2} freezes the image encoders and trains a location and a tabular metadata encoder as projection adapters with multi-view contrastive learning. Against five state-of-the-art encoders on 17 tasks ranging from land cover and wildfire forecasting to geophysical, socio-economic, biodiversity and climate applications, LoCA achieves the most task wins, is the only model to outperform every other on a majority of tasks, and wins 68.8\% of all per-task comparisons.
\end{abstract}

\begin{figure}[t]
    \centering
    \includegraphics[width=0.95\linewidth]{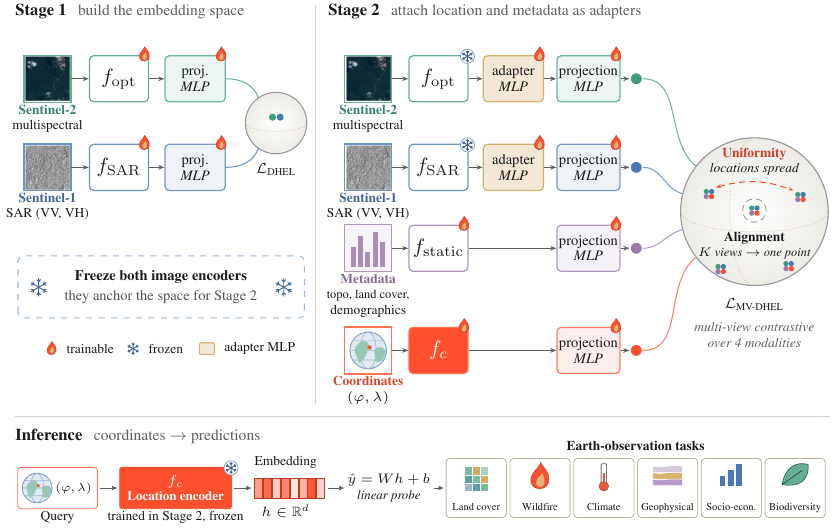}
    \caption{\textbf{LoCA pipeline.} \textbf{\textit{Stage 1}:} jointly train Sentinel-1 SAR and Sentinel-2 optical encoders with DHEL to produce a co-aligned, high-rank multimodal target. \textbf{\textit{Stage 2}:} freeze the image encoders and fit a location  and a metadata encoder as adapters against the four-view system. \textbf{Inference:} a coordinate query passes through the encoder alone to produce an embedding for downstream tasks.}
    \label{fig:loca}
\end{figure}

\section{Introduction}

The rich and heterogeneous information available through Earth Observation (EO) instruments creates massive opportunities for understanding the complex processes governing our planet. Each EO sensor captures distinct aspects of the Earth's surface, from spectral reflectance to atmospheric dynamics, yet no single modality offers a complete representation. Effectively distilling and fusing these diverse sources holds the potential for a universal descriptor of any location on the planet. Such a \textit{location encoder} could propel research across applications with inherent multimodal dependencies, including wildfire forecasting \citep{prapas2023televit}, species distribution modeling \citep{sastry2024birdsat}, and land cover classification \citep{sumbul2021bigearthnet}.

A large body of geospatial applications rely on the persistent geophysical and semantic properties of a location. Recent work on EO foundation models \citep{tseng2025galileo,jakubik2025terramind,brown2025alphaearth,wang2025towards} aims at encoding such diverse information applicable to arbitrary downstream tasks. However, many such models are trained as functions of imagery: an observation image is required at inference, while the same location admits many vectors depending on which image is supplied. Coordinate-indexed embedding data products published from these models \citep{brown2025alphaearth,wang2025towards} are snapshots of the imagery at release time, not functions on coordinates, and cannot return a vector outside the published grid or the release-time observation window. The desired vector that describes the inherent characteristics of a location needs a function defined on all coordinates, easily queryable from latitude and longitude alone. 

Contrastive learning is the natural fit for building such a function. The InfoNCE objective \citep{oord2018representation,chen2020simple} reduces to categorical cross-entropy over a positive and $K-1$ negatives, the same loss shape as a linear classifier on a frozen representation, and produces globally separable semantic embeddings \citep{wang2020understanding}. Contrastive location encoders, such as SatCLIP \citep{klemmer2025satclip}, GeoCLIP \citep{cepeda2023geoclip} and CSP \citep{pmlr-v202-mai23a}, align location embeddings to frozen imagery embeddings with a two-view InfoNCE objective.  
However, we show that \textit{a location encoder is structurally incompatible with the geometry of contrastive learning}. 

First, we note that a Lipschitz map cannot increase intrinsic dimension, so any location encoder ($f_c$), including the spherical-harmonic and Siren architectures used by SatCLIP, sends $S^k$ onto a set of intrinsic dimension at most $k$. With $k=2$ for geographic coordinates and embedding dimension $d{-}1=255$, the location embedding lies on a $2$-dimensional surface inside a $255$-dimensional sphere, regardless of bandwidth or parameterization. Second, we prove that this is the wrong substrate for contrastive learning: the asymptotic optimum of the InfoNCE objective is the uniform distribution on $S^{d-1}$ \citep{wang2020understanding}, which such a surface cannot represent. Consequently, the contrastive loss strictly exceeds its unconstrained minimum at every aligned Lipschitz pair (\Cref{thm:lipschitz}), uniformly over SatCLIP-type encoder families (\Cref{cor:satclip}). Empirically, we further observe that joint training exhibits a spectral collapse: alignment drags the image encoder onto the same surface, and the empirical covariance of the \textbf{location encoder collapses to low rank}. A $255$-dimensional embedding sphere ends up used along only a few linearly independent directions, severely limiting downstream concept separability.

Despite these geometric limitations, existing location encoders remain effective. We argue this is because, in practice, they freeze a pretrained unimodal encoder and thereby avoid the empirical spectral collapse. SatCLIP \citep{klemmer2025satclip} and GeoCLIP \citep{cepeda2023geoclip} keep a pretrained image encoder fixed and only fit the location side, which changes the problem from \emph{learning a representation jointly with another encoder} to \emph{fitting an inverse map from coordinates to a fixed point cloud on the embedding sphere}. The geometric image of $f_c$ remains $2$-dimensional, but its empirical sample and spectrum now track those of the pretrained target. The frozen image encoder thus acts as a covariance regularizer, turning $f_c$ into an adapter onto a high-rank cloud the pretrained model already produced.

However, \textit{this paradigm does not extend to multiple sensors}. Two independently pretrained satellite image encoders, a Sentinel-1 SAR model and a Sentinel-2 optical model, \textbf{are not co-aligned}~\citep{zavras2025mind}: nothing in their separate objectives places paired inputs at the same point, so no single location function can fit both. Trainable MLP adapters absorb these per-modality transforms but cannot manufacture co-alignment that isn't already there.
Multimodal location encoders therefore need a target that is co-aligned by construction. Motivated by the above limitations, we introduce \textbf{LoCA} (\underline{Lo}cation \underline{C}ontrastive \underline{A}lignment), a two-stage pipeline determined by two key requirements for the targets: \emph{co-alignment} across modalities, so that the location function fits one constraint per coordinate rather than one per modality; and \emph{high covariance rank}, so that the spectrum the location encoder inherits is not itself collapsed. \textit{Stage 1} enforces co-alignment by jointly training a Sentinel-1 and a Sentinel-2 satellite image encoders with DHEL\citep{koromilas2024bridging}, the contrastive objective that attains the highest effective rank. \textit{Stage 2} freezes both image encoders, attaches a small projection adapter to each, and trains a location encoder $f_c$ and a metadata encoder $f_m$ (MLP on tabular data) against the four-encoder pipeline with MV-DHEL \citep{koromilas2025principled}, the multi-view extension of DHEL. In this way, LoCA produces multimodal location embeddings of high variability across the 2 available intrinsic dimensions. We extensively benchmark LoCA comparing it against state-of-the-art baselines on a diverse benchmark suite covering all relevant semantic factors of variability inherent in the EO domain as defined in \cite{kondylatos2025generalization}. Our contributions can be summarized as follows:

\begin{enumerate}[label=\textbf{C\arabic*:},leftmargin=*]
    \item We \textbf{prove} that location encoders constrain contrastive learning away from its optimum, and empirically show that \textbf{image-location contrastive training from scratch collapses}.
    
    \item We recast existing contrastive location encoders as adapters over pretrained image representations and argue that extending this formulation to multiple modalities requires (i) \textbf{cross-modal co-alignment} and (ii) \textbf{high effective rank} of the target space.
    
    \item  We \textbf{extend location encoders to utilize multimodal satellite representations and  metadata} by introducing \textbf{LoCA}, a training pipeline achieving co-aligned, high-rank features by first jointly training Sentinel-1 and Sentinel-2 encoders and then adapting location and metadata encoders.
    
    \item LoCA achieves the \textbf{best performance on more tasks} than state-of-the-art competitors and \textbf{beats each examined location encoders} on head-to-head comparisons.
\end{enumerate}

Overall, our work provides a \textbf{principled framework for location encoders}: it identifies their failure modes, derives the requirements for extending them to multiple modalities, and uses them to enable multimodality through LoCA, laying the groundwork for future advances in location encoding.

\section{Related Work}
\label{sec:related}
\noindent\textbf{Geospatial Foundation Models (GFMs) \& Embeddings.}
Several GFMs use Masked Image Modelling (MIM) with various inductive biases: multi-scale or scale-aware encodings \cite{reed2023scale, tang2023cross}, spectral-aware masking \cite{irvin2023usat, zhu2023spectralmae}, and sensor-agnostic backbones that absorb heterogeneous EO inputs \cite{tseng2023lightweight, bountos2025fomo, xiong2024neural, wang2025towards, herzog2025olmoearth, forgaard2026thor}. A second family blends MIM with contrastive or reconstruction objectives across satellite modalities \cite{tseng2025galileo, guo2024skysense, fuller2023croma, astruc2024omnisat}, and a third uses generative or diffusion-based formulations \cite{jakubik2025terramind, khanna2023diffusionsat, jia2025can}. Building on this momentum, recent works release precomputed embedding products at fixed grids: AlphaEarth~\cite{brown2025alphaearth} fuses optical, radar, LiDAR, and climate reanalysis into $64$-dimensional annual embeddings at $10$\,m, Copernicus-FM~\cite{wang2025towards} provides $0.25^{\circ}$ multi-sensor averages, and TESSERA~\cite{feng2026tessera} releases annual Barlow-Twins-pretrained embeddings with \textit{int8} quantization. These products are observation-based rather than location-based: each cell is a snapshot of imagery tied to a specific acquisition window, not a function of coordinates. Recent work positions embeddings generated via vision models as one instantiation of an ``Earth embedding'', an AI-native representation of a given point in geographic space (and time) \cite{klemmer2026earth}.

\noindent\textbf{Location Encoders.}
A second branch of models skips the image encoding step and produces embeddings directly as a function of coordinates. Early designs train MLPs to predict object presence from coordinates~\cite{MacAodha2019PresenceOnlyGP}, followed by geodesic-preserving multi-scale spherical encodings~\cite{mai2023sphere2vec} and the spherical-harmonic / sinusoidal-MLP architecture~\cite{russwurm2023geographic} that has since become standard. Contrastive learning is now the dominant regime for self-supervised training: GeoCLIP~\cite{cepeda2023geoclip} aligns coordinates with CLIP image embeddings, and SatCLIP~\cite{klemmer2025satclip} aligns them with embeddings stemming from frozen Sentinel-2 encoders to produce general-purpose location embeddings. Multimodal extensions either add new contrastive losses per modality~\cite{liu2025gair, shatwell2025gt} or align all pairwise combinations of image, location, and time through separate InfoNCE terms~\cite{shatwell2026tiger}. These designs share two structural costs: heterogeneous loss terms produce conflicting gradients, and the number of terms grows quadratically with the number of modalities. We address both by generalizing the SatCLIP framework to arbitrary modalities through a single multi-view objective.

\noindent\textbf{Multi-View Contrastive Learning.}
Multi-view CL has been studied through aggregations of pairwise losses across heads, scales, classes, or feature levels \citep{wang2024adaptive, li2023global, shah2023multi, xu2022multi}, by treating each view combination as an independent task \citep{tian2020contrastive}, through alternative aggregations \citep{shidani2024polyview}, and via optimal transport over similarity tensors\cite{piran2024contrasting}. MV-DHEL \citep{koromilas2025principled} is the multi-view objective whose optimum scales with the number of views without dimensional collapse, making it suitable to train our multimodal location encoders.

\section{Revisiting the inner workings of Location Encoders}
In this section we explore the modeling limitations in Location Encoders. In \cref{sec:prelims,sec:thm} we formulate the task of Location Encoding and provide theoretical bounds on the optima of contrastive learning with location encoders and the dimensionality of the produced representations. In \cref{sec:label-assigner} we analyse the inner workings of existing location encoders, and reformulate them through the lens of a \textit{coordinate-indexed lookup table}.

\label{sec:optimal-geometry}

\subsection{Setup and Notation}
\label{sec:prelims}

Let the location domain be the $k$-sphere $S^k \subset \R^{k+1}$ with an atomless probability measure $\mu$ (for SatCLIP, $k = 2$ and $\mu$ is the geographic distribution of training locations on Earth). The image domain $\gI$ is an arbitrary measurable space. Pairs $(c, I)$ are drawn from a joint distribution $p$ with each image tied to a unique location, $p(c \mid I) = \delta_{c(I)}$. Two encoders $f_c : S^k \to S^{d-1}$ and $f_I : \gI \to S^{d-1}$ share the embedding sphere of dimension $d - 1$, with $d - 1 > k$ (for SatCLIP, $d - 1 = 255$). They minimize the population NT-Xent loss \citep{wang2020understanding,koromilas2024bridging}
\begin{equation}
\begin{split}
\Ls_\tau(f_c, f_I) = -\E_{(c,I) \sim p}\!\!\left[\frac{f_c(c)^\top f_I(I)}{\tau}\right] &+ \tfrac{1}{2}\, \E_c \log \E_{I'}\!\!\left[e^{f_c(c)^\top f_I(I')/\tau}\right] \\
&+ \tfrac{1}{2}\, \E_I \log \E_{c'}\!\!\left[e^{f_c(c')^\top f_I(I)/\tau}\right],
\end{split}
\label{eq:popnce}
\end{equation}
the population (infinite-batch) limit of the symmetric CLIP loss \citep{pmlr-v139-radford21a}. The first term rewards same-pair embeddings being close, while the next two push different-pair embeddings apart.

We use the \emph{pushforward measure} $f_\#\mu$ on $S^{d-1}$, the distribution of $f(X)$ when $X \sim \mu$, so $f_{c\#}\mu$ is the distribution of location embeddings produced by $f_c$. We will use the \emph{Hausdorff dimension} $\dim_H E$, which is related to an intrinsic-dimension notion on smooth manifolds and extends to irregular sets (definition in \Cref{app:prelims}). We write $p$-a.e.\ for ``$p$-almost every,'' meaning except on a set of $p$-measure zero. Let $\Ls_\tau^{\mathrm{free}}$ denote the infimum of $\Ls_\tau$ over all measurable encoder pairs. Among aligned pairs (satisfying $f_c(c) = f_I(I)$ for $p$-a.e.\ $(c, I)$), this infimum is attained when additionally $f_{c\#}\mu = \sigma$, the uniform measure on $S^{d-1}$ \citep{wang2020understanding}. We denote $V^*$ for the value at this aligned-uniform optimum: $V^* \ge \Ls_\tau^{\mathrm{free}}$ by definition, with the reverse inequality not generally true because of the modality gap \citep{liang2022mind,cai2026geometric}. We prove a strict gap to $V^*$, which suffices for a strict gap to $\Ls_\tau^{\mathrm{free}}$.

\subsection{Location encoders constrain contrastive learning away from its optimum}
\label{sec:thm}

\begin{theorem}[Dimensional gap for Lipschitz encoders]
\label{thm:lipschitz}
Let $k \geq 1$, $d > 2$ with $k < d - 1$. Let $f_c : S^k \to S^{d-1}$ be Lipschitz and $f_I : \gI \to S^{d-1}$ measurable. Set $\nu := f_{c\#}\mu$. Then
\begin{enumerate}[topsep=2pt,itemsep=2pt,leftmargin=2.2em]
\item[(i)] $\dim_H f_c(S^k) \le k$. The location encoder maps $S^k$ onto a set of intrinsic dimension at most $k$, regardless of how $f_c$ is parameterized or how high the embedding dimension $d$ is.
\item[(ii)] $\nu \perp \sigma$, in the sense that $\nu$ and $\sigma$ live on disjoint Borel subsets of $S^{d-1}$. The location-embedding distribution and the uniform target on the sphere have zero overlap.
\item[(iii)] If $(f_c, f_I)$ is aligned, $\Ls_\tau(f_c, f_I) > \Ls_\tau^{\mathrm{free}}$. The contrastive loss strictly exceeds its unconstrained minimum at every aligned Lipschitz pair.
\end{enumerate}
\end{theorem}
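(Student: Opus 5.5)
We prove the three parts in order, each feeding the next.

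\emph{Part (i).} We use the standard fact that a Lipschitz map does not increase Hausdorff dimension. If $f_c$ is $L$-Lipschitz and $\{U_j\}$ is a $\delta$-cover of $S^k$, then $\{f_c(U_j)\}$ is an $L\delta$-cover of $f_c(S^k)$ with $\operatorname{diam} f_c(U_j) \le L \operatorname{diam} U_j$. Hence $\gH^s_{L\delta}(f_c(S^k)) \le L^s \gH^s_\delta(S^k)$, and so $\gH^s(f_c(S^k)) \le L^s \gH^s(S^k)$ for every $s$. For $s>k$ we have $\gH^s(S^k)=0$, because $S^k$ is a compact smooth $k$-manifold. Therefore $\gH^s(f_c(S^k))=0$ for all $s>k$, which gives $\dim_H f_c(S^k)\le k$.

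\emph{Part (ii).} Let $E := f_c(S^k)$. It is compact, being the continuous image of a compact set, so it is Borel. The measure $\nu$ is concentrated on $E$, since $\nu(E)=\mu(f_c^{-1}(E))=1$. The uniform measure $\sigma$ is a constant multiple of $\gH^{d-1}$ restricted to $S^{d-1}$. Since $k<d-1$, part (i) gives $\gH^{d-1}(E)=0$, hence $\sigma(E)=0$. Thus $\nu$ lives on $E$ and $\sigma$ lives on $S^{d-1}\setminus E$, so $\nu\perp\sigma$.

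\emph{Part (iii).} For an aligned pair we have $f_I(I)=f_c(c(I))$ for $p$-a.e.\ $I$. The loss \eqref{eq:popnce} therefore becomes a functional of $\nu$ alone:
\[
\Ls_\tau = -\tfrac{1}{\tau} + \E_{x\sim\nu}\log \E_{y\sim\nu} e^{x^\top y/\tau} =: -\tfrac1\tau + \Phi(\nu).
\]
We then invoke the characterization of \citet{wang2020understanding}: among aligned pairs, the value $V^*$ is attained exactly at $\nu=\sigma$. To get strictness at a measure singular to $\sigma$, we need $\Phi(\nu)>\Phi(\sigma)$ for every $\nu\neq\sigma$. Our route is Jensen: $\Phi(\nu)\ge \log\iint e^{x^\top y/\tau}\,d\nu\,d\nu$. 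The kernel $e^{x^\top y/\tau}$ is strictly positive definite on the sphere, since its Gegenbauer coefficients are all positive. Consequently $\sigma$ is the unique minimizer of the energy $\iint e^{x^\top y/\tau}\,d\nu\,d\nu$, and at $\sigma$ Jensen holds with equality because $\E_{y\sim\sigma}e^{x^\top y/\tau}$ is constant in $x$. Combining, we obtain $\Phi(\nu)\ge\log\mathcal{E}(\nu)>\log\mathcal{E}(\sigma)=\Phi(\sigma)$. Part (ii) gives $\nu\ne\sigma$, so $\Ls_\tau(f_c,f_I)>V^*$.

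\emph{Main obstacle.} The final step concludes $\Ls_\tau > V^*$, whereas the stated claim is $>\Ls_\tau^{\mathrm{free}}$, and the setup notes $V^*\ge\Ls_\tau^{\mathrm{free}}$. So $V^*$ is an upper bound on the free infimum, and beating $V^*$ does not by itself beat the free infimum. I would aim to close this by showing the relevant infimum over aligned pairs equals $V^*$, and then reading the statement's ``$\Ls_\tau^{\mathrm{free}}$'' as the aligned-class optimum in the sense the setup intends (``a strict gap to $V^*$ suffices''). This identification, together with rigorously proving that the uniform measure is the unique minimizer, is the step I expect to be most delicate.
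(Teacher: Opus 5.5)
Your parts (i) and (ii) match the paper's Steps 1 and 2. The genuine gap is in part (iii), where the Jensen step runs the wrong way. Write $h_\nu(x):=\E_{y\sim\nu}e^{x^\top y/\tau}$ and $\mathcal{E}(\nu):=\iint e^{x^\top y/\tau}\,d\nu\,d\nu$. Because $\log$ is concave, Jensen gives $\Phi(\nu)=\E_{x\sim\nu}\log h_\nu(x)\le\log\E_{x\sim\nu}h_\nu(x)=\log\mathcal{E}(\nu)$. Equality holds only when $h_\nu$ is $\nu$-a.s.\ constant. For example, $\nu=\tfrac23\delta_a+\tfrac13\delta_b$ with $a\neq b$ already makes the inequality strict.

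So the first link of your chain $\Phi(\nu)\ge\log\mathcal{E}(\nu)>\log\mathcal{E}(\sigma)=\Phi(\sigma)$ is false. An upper bound of $\Phi(\nu)$ by $\log\mathcal{E}(\nu)$ says nothing about how $\Phi(\nu)$ compares with $\Phi(\sigma)$. Your positive-definiteness argument for $\mathcal{E}$ is correct, but it only helps on measures where Jensen is an equality.

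The paper avoids this by importing the uniqueness result of Wang--Isola for the relaxed problem over $\gP(S^{d-1})$. There, Jensen is applied only at a minimizer $\nu^\star$, where a first-variation argument is used to make $h_{\nu^\star}$ $\nu^\star$-a.s.\ constant. This gives $\Phi(\nu^\star)=\log\mathcal{E}(\nu^\star)\ge\log\mathcal{E}(\sigma)=\Phi(\sigma)$, with equality iff $\nu^\star=\sigma$. A minimizer exists, since $\gP(S^{d-1})$ is weakly compact and $\Phi$ is weakly continuous. Hence $\sigma$ is the unique minimizer, and $\Phi(\nu)>\Phi(\sigma)$ for every $\nu\neq\sigma$, in particular for your singular $\nu$ from (ii). You need either to cite that uniqueness or to supply such a stationarity argument; Jensen alone cannot deliver it.

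Separately, the ``main obstacle'' you flag is not an obstacle. You have $\Ls_\tau(f_c,f_I)>V^*$ and $V^*\ge\Ls_\tau^{\mathrm{free}}$, and these chain directly to $\Ls_\tau(f_c,f_I)>\Ls_\tau^{\mathrm{free}}$, which is exactly how the paper concludes. Lying strictly above an upper bound of the infimum means lying strictly above the infimum. No reinterpretation of $\Ls_\tau^{\mathrm{free}}$ as an aligned-class optimum is needed, and adopting one would change the theorem.

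The only real content behind $V^*\ge\Ls_\tau^{\mathrm{free}}$ is that $V^*$ is the loss of some measurable pair. The paper checks this with a measure-preserving Borel isomorphism $T:(S^k,\mu)\to(S^{d-1},\sigma)$, which exists because $\mu$ is atomless, together with the aligned pair $(T,\,T\circ c(\cdot))$.
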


The corresponding proof is provided in \Cref{app:thm-proof}. From a high level, we prove that no aligned pair with a Lipschitz location encoder attains the optimum of contrastive learning. Part (i) states that a Lipschitz map \textbf{cannot increase intrinsic dimension}. Part (ii) shows that the location embeddings and the uniform target (the optimum of contrastive learning) live on disjoint pieces of the sphere with zero overlap. The consequence is that (iii) under alignment, the strict mismatch between $\nu$ and $\sigma$ forces the loss strictly above its unconstrained minimum.

The above findings have direct consequences for existing location encoders. SatCLIP composes a spherical-harmonic feature map $\mathrm{SH}_L : S^2 \to \R^{(L+1)^2}$ of degree $L$, a Siren MLP, and a normalization onto $S^{d-1}$ \citep{klemmer2025satclip,sitzmann2020implicit}. We write $\gF_L^{\mathrm{SC}}$ for the resulting bandwidth-$L$ family under a mild non-degeneracy condition (definitions in \Cref{app:uniflip}). Each piece of the composition is Lipschitz with a constant independent of the trainable parameters, so every $f_c \in \gF_L^{\mathrm{SC}}$ is Lipschitz with a constant uniform across the family (\Cref{lem:uniflip}), and \Cref{thm:lipschitz} applies pointwise.

\begin{corollary}[SatCLIP strict positivity over aligned pairs]
\label{cor:satclip}
Fix $d \geq 4$. For every finite $L \in \mathbb{N}$ and every admissible parameter set, the contrastive loss of any aligned SatCLIP pair is strictly larger than the unconstrained free optimum: even when minimizing jointly over the entire family $\gF_L^{\mathrm{SC}}$ and over all aligned image encoders, the infimum stays strictly above $\Ls_\tau^{\mathrm{free}}$.
\end{corollary}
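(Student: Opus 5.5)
The plan is to reduce the statement to a compactness argument on top of \Cref{thm:lipschitz}(iii). Pointwise strictness for each SatCLIP pair is already immediate, since every $f_c\in\gF_L^{\mathrm{SC}}$ is Lipschitz. The real content is that the \emph{infimum} over the whole family also stays strictly above $\Ls_\tau^{\mathrm{free}}$. To get this, I would show the loss depends continuously on a quantity ranging over a compact set, so that the infimum is attained at some Lipschitz encoder, where \Cref{thm:lipschitz}(iii) applies.

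\emph{Step 1: eliminate $f_I$.} Under alignment, $f_I(I)=f_c(c(I))$ for $p$-a.e.\ $I$, so $f_I$ is determined $p$-a.e.\ by $f_c$. Substituting into \eqref{eq:popnce}, the first term equals $-1/\tau$ because $f_c(c)^\top f_c(c)=1$. The two log-partition terms coincide, so
\begin{equation*}
\Ls_\tau(f_c,f_I)=G(\nu):=-\tfrac{1}{\tau}+\int\log\int e^{x^\top y/\tau}\,d\nu(y)\,d\nu(x),\qquad \nu=f_{c\#}\mu .
\end{equation*}
Conversely, for any $f_c$ the choice $f_I:=f_c\circ c(\cdot)$ is measurable and aligned. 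Hence the joint infimum in the statement equals $\inf\{G(f_{c\#}\mu): f_c\in\gF_L^{\mathrm{SC}}\}$.

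\emph{Step 2: compactness.} By \Cref{lem:uniflip}, every $f_c\in\gF_L^{\mathrm{SC}}$ is $\Lambda$-Lipschitz with $\Lambda$ depending only on $L$ and $d$. I would enlarge the family to $\mathcal{K}_\Lambda$, the set of all $\Lambda$-Lipschitz maps $S^k\to S^{d-1}$. This only lowers the infimum, so it suffices to prove strictness over $\mathcal{K}_\Lambda$. By Arzel\`a--Ascoli, $\mathcal{K}_\Lambda$ is compact in the uniform topology, since it is equicontinuous with values in a compact set and closed under uniform limits. Uniform convergence $f_n\to f$ implies $f_{n\#}\mu\to f_\#\mu$ weakly. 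The functional $G$ is weakly continuous: the kernel $e^{x^\top y/\tau}$ is continuous on the compact $S^{d-1}\times S^{d-1}$ and bounded between $e^{-1/\tau}$ and $e^{1/\tau}$. Consequently the inner integral converges uniformly in $x$, and the logarithm is uniformly continuous on that range. Therefore $f\mapsto G(f_\#\mu)$ is continuous on a compact set and attains its minimum at some $f^\star\in\mathcal{K}_\Lambda$.

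\emph{Step 3: conclude.} The minimizer $f^\star$ is Lipschitz, and the pair $(f^\star, f^\star\circ c)$ is aligned. By \Cref{thm:lipschitz}(iii), $G(f^\star_\#\mu)>\Ls_\tau^{\mathrm{free}}$. Thus
\begin{equation*}
\inf_{\gF_L^{\mathrm{SC}}}G\;\ge\;\min_{\mathcal{K}_\Lambda}G\;>\;\Ls_\tau^{\mathrm{free}}.
\end{equation*}

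The main obstacle is Step 2, specifically checking that \Cref{lem:uniflip} really gives a constant uniform over all admissible parameters. That is where the non-degeneracy condition and $d\ge 4$ presumably enter: the normalization onto $S^{d-1}$ must have its Lipschitz constant controlled by a uniform lower bound on the pre-normalization norm. I take this from the lemma as stated. The second point to double-check is the weak continuity of $G$, which is routine here because the sphere is compact and the kernel is bounded away from $0$. A subtlety is that $\gF_L^{\mathrm{SC}}$ itself need not be closed. This is handled by passing to the closed superset $\mathcal{K}_\Lambda$, where \Cref{thm:lipschitz} still applies to the limit point.
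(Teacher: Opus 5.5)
Your argument is correct and follows the paper's proof: use alignment to eliminate $f_I$ and reduce the loss to $-1/\tau + W(f_{c\#}\mu)$, get an attained minimizer from compactness in the uniform topology plus weak continuity of $W$, then apply the strict inequality of \Cref{thm:lipschitz}(iii) at that Lipschitz minimizer. The one variation is that you get compactness from the larger class $\mathcal{K}_\Lambda$ of all $\Lambda$-Lipschitz maps rather than from $\gF_L^{\mathrm{SC}}$ itself; the paper shows $\gF_L^{\mathrm{SC}}$ is compact as the continuous image of the compact $\Phi_L$, so your closedness worry is moot, though your route yields a slightly stronger bound over all $\Lambda$-Lipschitz encoders. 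Also, $d\ge 4$ enters only through the hypothesis $k=2<d-1$ of \Cref{thm:lipschitz}, not through \Cref{lem:uniflip}, whose constant depends on $(\Phi_L, L, \varepsilon)$ rather than on $L$ and $d$.
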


These results extend from a single Lipschitz encoder to the infimum over the entire bandwidth-$L$ class, so no parameter setting reaches the free optimum. See \Cref{app:cor-proof} for the proof. 

\begin{figure}[t]
  \centering
  \begin{minipage}[t]{0.54\textwidth}
    \vspace{0pt}
    \centering
    \includegraphics[width=0.95\linewidth]{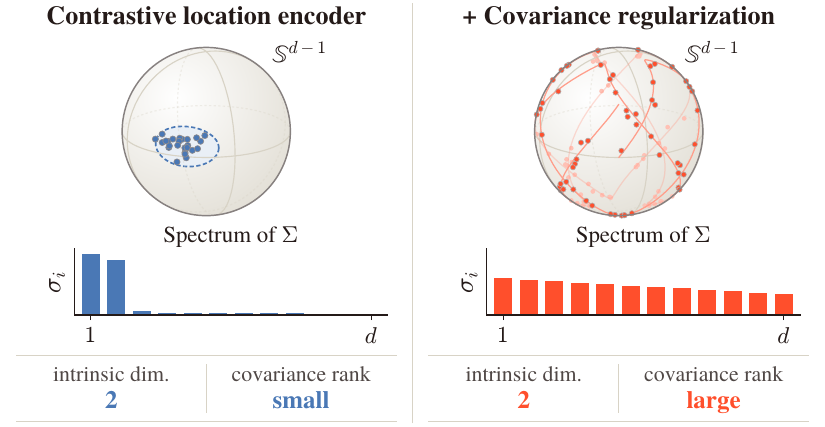}
    \caption{\textbf{Intrinsic dimension is bounded; the covariance is not.}
      Plain contrastive learning collapses the embedding to a 2-d patch with low variability; covariance regularization makes the 2-d manifold wind densely through $S^{d-1}$, raising the covariance rank without changing the intrinsic dimension.}
    \label{fig:id_vs_cov}
  \end{minipage}\hfill
  \begin{minipage}[t]{0.42\textwidth}
    \vspace{0pt}
    \captionof{table}{Effective rank \cite{roy2007effective} of the location-embedding marginal and its ratio to the total representation dimensionality $D$. Higher = more directions used. ``Anchor'' indicates the pretrained target whose spectrum the encoder starts from. \textcolor{NavyBlue}{Blue} indicates training on Sentinel-2, \textcolor{Plum}{Purple} on RGB and \textcolor{PineGreen}{Green} on both Sentinel-1\&2.}
    \label{tab:effrank}
    \vspace{0.6em}
    \centering
    \footnotesize
    \setlength{\tabcolsep}{2.5pt}
    \renewcommand{\arraystretch}{1.15}
    \begin{tabular}{@{}llcc@{}}
      \toprule
      \textbf{Method}             & \textbf{Anchor} & \textbf{Erank} & \textbf{\%\,of $D$} \\
      \midrule
      SatCLIP$^{\dagger}$         & No           & 8   & 3.1  \\
      SatCLIP                     & \textcolor{NavyBlue}{MoCo R50}     & 89  & 34.9 \\
      Copernicus-FM               & \textcolor{Plum}{DINOv2}       & 158 & 20.6 \\
      \textbf{LoCA (ours)}        & \textcolor{PineGreen}{Stage 1} & 158 & 62   \\
      \bottomrule
    \end{tabular}
    \par\vspace{0.3em}
    {\scriptsize $^{\dagger}$image encoder trained from scratch (no MoCo init).}
  \end{minipage}
\end{figure}
\subsection{Geometry, variability collapse, and the role of regularization}
\label{sec:label-assigner}

\Cref{thm:lipschitz}(i) states that the image of a location encoder has intrinsic dimension at most $k$ inside $S^{d-1}$, regardless of architecture or bandwidth. This is intuitively expected: standard encoders compress high-dimensional inputs into a lower-dimensional code; a location encoder does the opposite, decoding a low-dimensional coordinate into a higher-dimensional vector. As a result, \textit{location encoders constitute the wrong substrate for contrastive learning} as they can never attain the point configurations that the training objective optimizes for. \Cref{thm:lipschitz} constrains the geometry of the embedding, not its spectrum: a $k$-dimensional surface can still have full-rank covariance (\Cref{fig:id_vs_cov}). Under \Cref{eq:popnce}, alignment forces the essential range of $f_I$ to lie in $f_c(S^k)$ (\Cref{app:thm-proof}), so both encoders inherit the location side's intrinsic dimension. Empirically, joint training also settles on a surface of low covariance rank (\Cref{tab:effrank}): a $255$d embedding space is used along only a small number of linearly independent directions.

A natural fix is to break the symmetry of joint training by freezing one side, creating a setup in which contrastive objectives can work. Freezing $f_I$ regularizes $f_c$: instead of co-collapsing with another encoder, $f_c$ is fitted against a fixed, externally produced point cloud $\{y_i := f_I(I_i)\}_{i \le N}$ on $S^{d-1}$. The geometric image $f_c(S^k)$ remains $k$-dimensional by \Cref{thm:lipschitz}, but the empirical sample $\{f_c(c_i)\}_{i \le N}$ now tracks $\{y_i\}$, and its spectrum tracks the spectrum of the target cloud. The function thus memorizes high-rank labels by oscillating along its $k$-dimensional surface. Geometry and spectrum decouple (\Cref{fig:id_vs_cov}) since  the manifold stays low-dimensional while the empirical covariance reaches whatever rank the target supplies.

This is \textit{why} existing contrastive location encoders work. SatCLIP \citep{klemmer2025satclip} and GeoCLIP \citep{cepeda2023geoclip} are not performing contrastive learning in the usual sense of jointly learning a representation with another encoder; they instead use the contrastive objective as a \textit{proxy for fitting a coordinate-indexed lookup table} whose values come from a pretrained image encoder acting as a regularizer that provides $f_c$ a high-rank target to fold onto. This adjustment to the image modality is also effective in other domains through the Locked-image Tuning \citep{9878889}.

\section{Adapting Location Encoders to Multimodal Embeddings}
\label{sec:method}

In \Cref{sec:label-assigner} the location encoder is reframed as an adapter that indexes a pretrained label cloud. We now ask: {{\emph{what must a label cloud satisfy for this view to extend to multiple sensors?}} We identify two requirements in \Cref{sec:multimodal-fails} and \Cref{sec:regularization} 
and introduce LoCA that satisfies both in \Cref{sec:loca}. 

\subsection{A location encoder cannot align to two targets}
\label{sec:multimodal-fails}
Extending the \textit{coordinate-indexed lookup table} paradigm to multiple modalities is far from trivial, since the targets it supplies must be mutually consistent across modalities. Formally, suppose we freeze a Sentinel-1 encoder $f_{I_1}$ and a Sentinel-2 encoder $f_{I_2}$, both pretrained independently, and demand a single $f_c$ to align with both. Alignment requires $f_c(c_i) = f_{I_1}(I_i^{S_1}) =: y_i^{S_1}$ and $f_c(c_i) = f_{I_2}(I_i^{S_2}) =: y_i^{S_2}$ on every paired sample, which forces $y_i^{S_1} = y_i^{S_2}$. Independently trained encoders do not satisfy this: their embedding spaces are determined up to an arbitrary orthogonal transformation by their respective objectives and initializations, and there is no mechanism by which paired inputs would land at the same point. The location encoder is presented with two incompatible label sets and cannot fit either. Inserting trainable projection adapters between $f_c$ and each frozen target relaxes this rigidity by absorbing per-modality orthogonal transformations. However, adapters cannot manufacture co-alignment that is absent from the joint distribution of the targets themselves: no choice of adapters can pull $y_i^{S_1}$ and $y_i^{S_2}$ to the same point unless the underlying $f_{I_1}, f_{I_2}$ already agree up to such a transformation. This is a structural limitation and extends to any setup combining $M$ not co-trained encoders. Adapting to multiple modalities therefore requires a target that is \textbf{\textit{co-aligned by construction}}: paired-sample labels for distinct modalities must coincide on $S^{d-1}$, so that $f_c$ is asked to satisfy one constraint per coordinate rather than $M$. 

\subsection{Regularizer embeddings should have high rank}
\label{sec:regularization}

Through the regularizer view of \Cref{sec:label-assigner} the number of linearly independent directions in the representation space of $f_c$ is bounded to that of the target. Alignment forces $f_c(c_i) \approx y_i$, so a low-rank target propagates its collapse to the location encoder. This rules out a natural choice of target \ie, multimodal GFMs, primarily relying on MIM objectives (\Cref{sec:related}). MIM objectives concentrate variance on a small number of directions, a pathology documented for self-supervised methods broadly \citep{jing2022understanding} and for GFMs specifically \citep{hackel2026slimming,robinson2026compressing,rao2025measuring}. 
Using such a model as a pretrained anchor would serve $f_c$ a low-rank label cloud and reproduce the very collapse the adapter is supposed to avoid. We call this the \textbf{\textit{rank requirement}}. This can be met in two ways: a) guide the representation space with external, high-rank anchors \eg Copernicus-FM distills from Dinov2, or b) utilize optimization processes that enforce a high-rank. The first introduces unnecessary axes of variation potentially reflecting exogenous features (\eg from RGB natural imagery) which may limit its utility to domain-relevant applications (see \cref{sec:benchmark}), while the latter creates a label cloud for the target modalities themselves. 

\subsection{LoCA: Location Contrastive Alignment}
\label{sec:loca}
 We introduce LoCA (\underline{Lo}cation \underline{C}ontrastive \underline{A}lignment), a two-stage pipeline that addresses all aforementioned limitations. The full pipeline is illustrated in \Cref{fig:loca} and detailed below.

\textbf{\textit{Stage 1}: Jointly Contrastive Multimodal Pretraining.} The first stage involves training information-rich
Sentinel-1 and Sentinel-2 encoders jointly with the DHEL loss, which enforces the representation space to utilize a big fraction of the available dimensionality \citep{koromilas2024bridging}. Furthermore, joint contrastive training enforces co-alignment as a property of the loss optimum, since paired inputs are pulled to the same embedding by construction.
These two properties result in a co-aligned, high-rank georeferenced label cloud $\{y_i^{\mathrm{multi}}\}$, which can be effectively used as a target.

\textbf{\textit{Stage 2}: Multi-View Contrastive Adapter Fitting.} In this stage we freeze both image encoders, letting the location encoder adapt to a fixed multimodal embedding space. We attach a trainable MLP projection to each uni-modal encoder, and introduce a location encoder $f_c$  and a metadata encoder $f_m$ (a standard MLP that processes all 14 variables as a single feature vector). Notably, \Cref{thm:lipschitz} applies to $f_m$ as well with $k=14$, restricting its usage as a \textit{Stage 1} anchor. We train this four-encoder system with the MV-DHEL \citep{koromilas2025principled} loss. MV-DHEL is a multi-view extension of DHEL whose optimum scales with the number of views without dimensional collapse. Because both image encoders are frozen, their embeddings remain co-aligned and the target space retains its high effective rank. As a result, the empirical spectrum of $f_c$ at training reaches the projection-head dimension, even though $f_c(S^2)$ is still a $2$-dimensional manifold by \Cref{thm:lipschitz}.

\begin{table*}[]
  \caption{\textbf{Downstream evaluation. }\underline{Mean over 10 seeds} (3 for SatBird); metric $\times\,100$, higher is better. {\color{ForestGreen}\textbf{Green}}: best across both probes; {\color{RoyalBlue}\textbf{blue}}: best within probe (MLP, Linear) when not best overall. Environmental tasks predict relative ordering of climate variables. Bottom two rows: head-to-head comparisons (\Cref{tab:head-to-head}). N/A: embeddings not retrievable (\Cref{app:Evaluation}).}
  \label{tab:benchmark-results-main}
  \centering
  \resizebox{\textwidth}{!}{%
  \begin{tabular}{lccccccccccccc}
    \toprule
    & \multicolumn{4}{c}{\textbf{Observation-based FMs}} & \multicolumn{8}{c}{\textbf{Location encoders}} & \\
    \cmidrule(lr){2-5} \cmidrule(lr){6-13}
    \textbf{Task $\downarrow$ Model $\rightarrow$}
      & \multicolumn{2}{c}{\textbf{AlphaEarth}}
      & \multicolumn{2}{c}{\begin{tabular}{@{}c@{}}\textbf{Copernicus}\\[-0.3em]{\scriptsize FM}\end{tabular}}
      & \multicolumn{2}{c}{\textbf{SatCLIP}}
      & \multicolumn{2}{c}{\textbf{GeoCLIP}}
      & \multicolumn{2}{c}{\textbf{GT-Loc}}
      & \multicolumn{2}{c}{\begin{tabular}{@{}c@{}}\textbf{LoCA}\\[-0.3em]{\scriptsize (Ours)}\end{tabular}} & \multirow{2}{*}{\textbf{Metric}} \\
    \cmidrule(lr){2-3} \cmidrule(lr){4-5} \cmidrule(lr){6-7} \cmidrule(lr){8-9} \cmidrule(lr){10-11} \cmidrule(lr){12-13}
      & MLP & Linear & MLP & Linear & MLP & Linear & MLP & Linear & MLP & Linear & MLP & Linear & \\
    \midrule
    \multicolumn{14}{l}{\textit{\textbf{Biodiversity \& Ecology}}} \\
    \addlinespace[1pt]
    Countries
      & N/A & N/A
      & 80.9 & 88.3
      & 95.5 & 96.5
      & 91.5 & 79.8
      & 94.7 & 94.4
      & {\color{RoyalBlue}\textbf{96.4}} & {\color{ForestGreen}\textbf{97.0}}
      & \textit{F1$_{w}$} \\
    Biome
      & N/A & N/A
      & 84.0 & 82.4
      & 89.8 & 86.6
      & 86.9 & 75.6
      & 88.1 & 81.9
      & {\color{ForestGreen}\textbf{91.0}} & {\color{RoyalBlue}\textbf{87.0}}
      & \textit{F1$_{w}$} \\
    Ecoregions
      & N/A & N/A
      & 59.9 & 75.1
      & 79.7 & 84.4
      & 60.6 & 42.6
      & 69.6 & 75.9
      & {\color{RoyalBlue}\textbf{80.2}} & {\color{ForestGreen}\textbf{85.0}}
      & \textit{F1$_{w}$} \\
    iNaturalist
      & 49.1 & 59.4
      & 40.4 & 61.8
      & {\color{RoyalBlue}\textbf{56.4}} & 69.8
      & 44.8 & 58.4
      & 49.3 & 59.3
      & 55.0 & {\color{ForestGreen}\textbf{71.4}}
      & \textit{Acc.} \\
    SatBird (USA summer)
      & {\color{ForestGreen}\textbf{67.5}} & 56.5
      & 59.6 & 57.9
      & 60.7 & 58.2
      & 63.3 & 47.8
      & 63.3 & 52.7
      & 60.8 & {\color{RoyalBlue}\textbf{61.1}}
      & \textit{top-$k$ Acc.} \\
    \midrule
    \multicolumn{14}{l}{\textit{\textbf{Land cover}}} \\
    \addlinespace[1pt]
    reBEN
      & 42.8 & 43.3
      & 57.4 & {\color{RoyalBlue}\textbf{55.2}}
      & 57.3 & 54.9
      & 56.0 & 44.1
      & 56.4 & 52.4
      & {\color{ForestGreen}\textbf{58.3}} & 54.7
      & \textit{F1$_{\mu}$} \\
    \midrule
    \multicolumn{14}{l}{\textit{\textbf{Wildfire forecasting}}} \\
    \addlinespace[1pt]
    Wildfire forecasting
      & {\color{ForestGreen}\textbf{80.5}} & 70.1
      & 74.9 & 71.0
      & 79.1 & 75.8
      & 79.8 & 73.2
      & 79.6 & 75.5
      & 80.2 & {\color{RoyalBlue}\textbf{76.2}}
      & \textit{AUPRC} \\
    \midrule
    \multicolumn{14}{l}{\textit{\textbf{Geophysical \& Socio-economic}}} \\
    \addlinespace[1pt]
    Median income
      & 28.1 & 11.0
      & 32.9 & 26.5
      & 50.2 & 35.2
      & 45.0 & 12.1
      & 45.9 & 31.9
      & {\color{ForestGreen}\textbf{51.9}} & {\color{RoyalBlue}\textbf{39.4}}
      & \textit{$R^2$} \\
    Population
      & 80.2 & 69.6
      & 77.0 & 67.9
      & {\color{ForestGreen}\textbf{82.1}} & 67.9
      & 78.1 & 66.0
      & 78.5 & {\color{RoyalBlue}\textbf{69.7}}
      & 81.8 & 68.2
      & \textit{$R^2$} \\
    Cali.\ housing
      & 54.9 & 31.1
      & 43.3 & 39.2
      & 63.3 & 35.3
      & 77.3 & 48.4
      & {\color{ForestGreen}\textbf{78.3}} & {\color{RoyalBlue}\textbf{66.1}}
      & 65.5 & 36.4
      & \textit{$R^2$} \\
    Air temperature
      & {\color{ForestGreen}\textbf{98.2}} & 42.8
      & 90.6 & 81.0
      & 93.9 & 80.1
      & 94.3 & 32.9
      & 94.2 & 60.4
      & 90.3 & {\color{RoyalBlue}\textbf{81.4}}
      & \textit{$R^2$} \\
    Elevation
      & {\color{ForestGreen}\textbf{97.9}} & 87.4
      & 94.1 & {\color{RoyalBlue}\textbf{90.6}}
      & 89.9 & 64.9
      & 83.0 & 56.0
      & 83.9 & 62.5
      & 89.4 & 64.4
      & \textit{$R^2$} \\
    \midrule
    \rowcolor{orange!12}
    \textit{\textbf{Environmental}}
  & \multicolumn{12}{c}{\textbf{LightGBM ranking}}
  & \\
\cmidrule(lr){2-13}
\addlinespace[1pt]
    Temperature
      & \multicolumn{2}{c}{N/A}
      & \multicolumn{2}{c}{{\color{ForestGreen}\textbf{79.8}}}
      & \multicolumn{2}{c}{59.6}
      & \multicolumn{2}{c}{72.0}
      & \multicolumn{2}{c}{74.4}
      & \multicolumn{2}{c}{77.9}
      & \textit{Ranking acc.} \\
    Precipitation
      & \multicolumn{2}{c}{N/A}
      & \multicolumn{2}{c}{{\color{ForestGreen}\textbf{77.7}}}
      & \multicolumn{2}{c}{70.6}
      & \multicolumn{2}{c}{72.1}
      & \multicolumn{2}{c}{72.5}
      & \multicolumn{2}{c}{{\color{ForestGreen}\textbf{77.7}}}
      & \textit{Ranking acc.} \\
    MSLP
      & \multicolumn{2}{c}{N/A}
      & \multicolumn{2}{c}{73.2}
      & \multicolumn{2}{c}{{\color{ForestGreen}\textbf{83.2}}}
      & \multicolumn{2}{c}{77.8}
      & \multicolumn{2}{c}{79.1}
      & \multicolumn{2}{c}{76.3}
      & \textit{Ranking acc.} \\
    Wind speed
      & \multicolumn{2}{c}{N/A}
      & \multicolumn{2}{c}{{\color{ForestGreen}\textbf{75.6}}}
      & \multicolumn{2}{c}{71.6}
      & \multicolumn{2}{c}{73.8}
      & \multicolumn{2}{c}{71.4}
      & \multicolumn{2}{c}{{\color{ForestGreen}\textbf{75.6}}}
      & \textit{Ranking acc.} \\
    Relative humidity
      & \multicolumn{2}{c}{N/A}
      & \multicolumn{2}{c}{53.5}
      & \multicolumn{2}{c}{{\color{ForestGreen}\textbf{65.2}}}
      & \multicolumn{2}{c}{47.8}
      & \multicolumn{2}{c}{43.5}
      & \multicolumn{2}{c}{47.5}
      & \textit{Ranking acc.} \\
    \midrule
    \textbf{\# Best (out of 17)}
      & \multicolumn{2}{c}{4}
      & \multicolumn{2}{c}{3}
      & \multicolumn{2}{c}{3}
      & \multicolumn{2}{c}{0}
      & \multicolumn{2}{c}{1}
      & \multicolumn{2}{c}{{\color{ForestGreen}\textbf{8}}}
      & \\
    \textbf{\# Top-2 (out of 17)}
      & \multicolumn{2}{c}{4}
      & \multicolumn{2}{c}{6}
      & \multicolumn{2}{c}{8}
      & \multicolumn{2}{c}{3}
      & \multicolumn{2}{c}{3}
      & \multicolumn{2}{c}{{\color{ForestGreen}\textbf{11}}}
      & \\
    \cmidrule(lr){1-13}
    \textbf{LoCA wins vs.\ model}
      & \multicolumn{2}{c}{{\color{ForestGreen}\textbf{5/9}}}
      & \multicolumn{2}{c}{{\color{ForestGreen}\textbf{11/17}}}
      & \multicolumn{2}{c}{{\color{ForestGreen}\textbf{12/17}}}
      & \multicolumn{2}{c}{{\color{ForestGreen}\textbf{12/17}}}
      & \multicolumn{2}{c}{{\color{ForestGreen}\textbf{13/17}}}
      & \multicolumn{2}{c}{--}
      & \\
    \textbf{Pairwise win rate (\%)}
      & \multicolumn{2}{c}{57.8}
      & \multicolumn{2}{c}{35.1}
      & \multicolumn{2}{c}{55.8}
      & \multicolumn{2}{c}{36.4}
      & \multicolumn{2}{c}{45.5}
      & \multicolumn{2}{c}{{\color{ForestGreen}\textbf{68.8}}}
      & \\
    \bottomrule
  \end{tabular}%
  }
\end{table*}

\section{Empirical Evaluation \& Discussion}

\textbf{Experimental setup.} As detailed in \Cref{app:Data} we extend the S2-100K~\cite{klemmer2025satclip} (SatCLIP train set) to align co-located Sentinel-1 (S1) imagery with the original Sentinel-2 (S2) acquisitions and augment them with 14 metadata variables, including elevation, land cover, population density and others. We utilize ResNet-50 backbones for all LoCA image encoders. The location encoder is defined with a degree-$L=40$ spherical harmonic feature map followed by a 2-hidden-layer SIREN, while the metadata encoder is a 3-layer MLP. All modalities share a 2-layer projection head. We train both stages for 100 epochs each. Full implementation details are provided in \Cref{app:impl_details}, and the code and pretrained weights will be made publicly available at \locarepo. \\

\noindent\textbf{Benchmark.} We curate a diverse benchmark by covering all Semantic Factors (SF) of variability inherent to the EO domain, as defined in \citep{kondylatos2025generalization}, excluding the spatial-arrangement SF, which is not relevant to general-purpose location embeddings. Ground sampling distances range from a few meters to multiple kilometers per pixel; target granularities span fine-grained species identification to coarse biome classification; and geographic and thematic domains include Mediterranean wildfire forecasting and bird encounter rate prediction across the USA. Our tasks cover five categories of location-dependent attributes: (i) land cover from reBEN \citep{Clasen2024reBenRB}; (ii) wildfire danger from Mesogeos \citep{kondylatos2023mesogeos}; (iii) long-term climate from five ranking tasks over temporal means of variables from SeasFire \citep{karasante2025seasfire}; (iv) geophysical and socioeconomic structure and (v) biodiversity, both from the SatCLIP evaluation \citep{klemmer2025satclip}, the latter complemented by species distribution from SatBird \citep{satbird_NEURIPS2023}.

  We evaluate downstream performance on frozen embeddings in two ways. (i) For the environmental tasks, we train a LambdaRank objective \citep{burges2006lambdarank} as implemented in LightGBM \citep{ke2017lightgbm} to test whether the representations contain enough information to recover the relative ordering of long-term mean climate values. (ii) For all remaining downstream tasks, we train both a linear probe and an MLP, with identical architecture and hyperparameters across tasks. We compare against two families of baselines: location encoders that are themselves coordinate functions, \ie SatCLIP \citep{klemmer2025satclip}, GeoCLIP \citep{cepeda2023geoclip}, and GT-Loc \citep{shatwell2025gt}; and precomputed embedding fields released by geospatial foundation models, namely AlphaEarth \citep{brown2025alphaearth} and Copernicus-FM \citep{wang2025towards}. Further details are provided in \Cref{app:Evaluation}.

\subsection{Downstream performance of location encoders}
\label{sec:benchmark}

 Across the 17 tasks in \Cref{tab:benchmark-results-main}, LoCA ranks first on 8 and among the top-2 models on 11.
In pairwise comparisons, LoCA outperforms each competing model on the majority of tasks
winning in $68.8\%$ of the cases (see \Cref{tab:head-to-head}). 
Under linear probing, LoCA exhibits the best performance on 8 out of 12 tasks, demonstrating that \textit{a wide spectrum of location-dependent information is readily available in its embeddings}, consistent with the motivation of \Cref{sec:method} and the findings of \Cref{tab:effrank}. 
LoCA excels in \textbf{\textit{Land cover}} and \textbf{\textit{Biodiversity \& Ecology}} tasks, leading on reBEN, Countries, iNaturalist, Biome, and Ecoregions. AlphaEarth leads on SatBird and \textbf{\textit{Wildfire forecasting}}, with LoCA a close second on the latter ($-0.3\%$). \textbf{\textit{Geophysical \& Socio-economic}} results are mixed: LoCA leads on Median Income, SatCLIP on Population (with LoCA following closely), and AlphaEarth on Air Temperature and Elevation. GT-Loc and GeoCLIP stand out on California housing, despite each achieving top-2 performance in only 3 tasks overall. This may reflect their pretraining dataset (MP-16 \citep{larson2017benchmarking}), a global Flickr image dataset, with substantial coverage of California. Its images of streets, houses, and indoor scenes may provide useful priors for house-price estimation. For \textbf{\textit{Environmental}} tasks, SatCLIP leads on MSLP and Relative Humidity, while Copernicus-FM leads on Temperature, followed by LoCA, and ties with LoCA on Precipitation and Wind Speed. However, Copernicus-FM requires multimodal imagery for its embeddings, whereas LoCA uses coordinates alone. Among coordinate-based encoders, LoCA is best on Temperature, Precipitation, and Wind speed.

\begin{table}[]
  \caption{Isolating the contributions of (i) the Stage 1 and (ii) the metadata
  encoder. We report the change in mean linear probe score per task family with respect to the
  full model. See also \Cref{tab:app_ablation}.}
  \label{tab:ablation-main}
  \centering
  \resizebox{\columnwidth}{!}{%
  \begin{tabular}{lccccc}
    \toprule
    \textbf{Removed component} & \textbf{Biodiv.\ \& Ecology} & \textbf{Land cover} & \textbf{Wildfire} & \textbf{Geophysical} & \textbf{Environmental} \\
    \midrule
    \textit{Stage 1} & $-$56.9 & $-$53.6 & $-$26.8 & $-$61.3 & $-$21.1 \\
    Metadata encoder & $-$0.1 & $+$0.5 & $+$0.3 & $-$3.5 & $-$1.3 \\
    \bottomrule
  \end{tabular}%
  }
\end{table}

\begin{table}[]
  \caption{Mean F1 on 10 single-label land-cover classes for 4 supervised encoders, before/after LoCA's \textit{Stage 2}. ``S1+S2'': trained on both modalities; ``All'': concatenated embeddings.}
  \label{tab:reben-resnet18}
  \centering
  \resizebox{0.98\textwidth}{!}{%
  \begin{tabular}{lcccccccccc}
    \toprule
      & \multicolumn{2}{c}{\textbf{Location}}
      & \multicolumn{2}{c}{\textbf{S1}}
      & \multicolumn{2}{c}{\textbf{S2}}
      & \multicolumn{2}{c}{\textbf{S1+S2}}
      & \multicolumn{2}{c}{\textbf{All}} \\
    \cmidrule(lr){2-3}
    \cmidrule(lr){4-5}
    \cmidrule(lr){6-7}
    \cmidrule(lr){8-9}
    \cmidrule(lr){10-11}
      & \textit{base} & \textit{Stage 2}
      & \textit{base} & \textit{Stage 2}
      & \textit{base} & \textit{Stage 2}
      & \textit{base} & \textit{Stage 2}
      & \textit{base} & \textit{Stage 2} \\
    \midrule
    \textbf{Mean F1}
      & 74.7 & \textbf{76.6}
      & 69.6 & \textbf{75.9}
      & 78.4 & \textbf{80.9}
      & 78.1 & \textbf{80.8}
      & 80.9 & {\color{ForestGreen}\textbf{84.3}} \\
    \bottomrule
  \end{tabular}%
  }
\end{table}

\subsection{LoCA increases embeddings variability}
\label{sec:exp-collapse}

In this section we empirically verify that LoCA satisfies the two requirements identified in \Cref{sec:method}.
\noindent\textbf{Co-alignment.} We measure alignment using cosine similarity ($\cos$) and $L_{\text{align}}$, the expected distance between pairs, from \citep{wang2020understanding}. The two imagery encoders trained jointly in \textit{Stage 1} exhibit $\cos = 0.94$ and $L_{\text{align}} = 0.12$, while pairing two independently pretrained encoders for the same modalities yields $L_{\text{align}} = 1.00$ ($\cos = 0.50$), confirming that their embedding spaces are not co-aligned. This misalignment translates to decreased downstream performance, as shown in \Cref{tab:ablation-main}.

\textbf{High effective rank.} Effective rank \citep{roy2007effective} measures how many linearly independent directions an embedding populates and is a standard proxy for linear-probe quality \citep{garrido2023rankmeassessingdownstreamperformance}. Across 343,928 land locations on a $0.25^{\circ}$ global grid (\Cref{tab:effrank}), LoCA populates $158$ of its $255$ dimensions ($62\%$), three times the fraction of Copernicus-FM ($21\%$) and $1.8\times$ that of SatCLIP ($35\%$). Joint training from scratch instead collapses the covariance to an effective rank of $8$ (SatCLIP$^\dagger$). The effective rank of GT-Loc and GeoCLIP is misleading for Earth Observation, since their CLIP-derived encoders inflate it with irrelevant language information, as discussed in \Cref{sec:regularization} and validated in \Cref{tab:benchmark-results-main}, while that of AlphaEarth is unavailable, as its embeddings were only partially retrievable.

\subsection{Ablations on LoCA components}
\noindent\textbf{Impact of \textit{Stage 1}}. \Cref{sec:method} identifies \textit{Stage 1} as the construction that properly supplies the \textit{Stage 2} target. This is validated in \Cref{tab:ablation-main}, where by replacing the jointly-trained S1 and S2 encoders with two encoders pretrained independently with MoCo violates the co-alignment requirement collapsing performance across the entire benchmark confirming the necessity of \textit{Stage 1}. This validates that \textit{SatCLIP cannot be trivially extended to multiple modalities} by freezing two independent encoders.\\
\noindent\textbf{Impact of \textit{Stage 2}}. Encoders trained in a supervised fashion on the same classes converge to a similar embedding structure \citep{koromilas2026neural}, satisfying the goal of \textit{Stage 1}. This enables us to isolate the impact of \textit{Stage 2}, 
 by replacing \textit{Stage 1} with three supervised encoders (one for each of S1, S2, and stacked S1+S2) trained on BigEarthNet \citep{sumbul2021bigearthnet}. We freeze these encoders alongside a pretrained SatCLIP location encoder, and train only their adapters and shared projections under MV-DHEL \citep{koromilas2025principled} (see \Cref{app:supervised}). 
 \textit{Stage 2} improves every encoder's linear probing performance (see \Cref{tab:reben-resnet18}). Concatenating the adapted embeddings raises mean F1 from $80.9\%$ to $84.3\%$ further demonstrating the benefit of \textit{Stage 2} beyond simple multimodal fusion.\\ 
\noindent\textbf{Impact of the metadata encoder.} 
We assess the contribution of the metadata encoder by removing it from \textit{Stage 2} (see \Cref{tab:ablation-main}). The metadata encoder primarily benefits tasks aligned with its geophysical and slow-climate variables: removing it reduces mean scores by $3.5\%$ and $1.3\%$ for Geophysical \& Socio-economic and Environmental tasks respectively.  
Its effect in other task groups is small, with a $0.1\%$ drop in Biodiversity and minor gains in Land Cover and Wildfire tasks when omitted, despite the inclusion of coarse land cover fractions as metadata. We hypothesize that
these coarse descriptors add little of the fine-grained signal required for lower hierarchy multi-label classification.

\section{Conclusion}

In this work, we established the theoretical foundations for multimodal location encoding. We proved fundamental limitations of jointly trained location encoders and identified two requirements for effective multimodal representations: co-alignment and high effective rank. Building on this theoretical ground, we introduced LoCA, a multimodal location encoder that satisfies both and can scale to any number of modalities. LoCA outperforms competing models across a broad range of critical EO tasks 
, while substantially improving embedding space utilization. Our work opens promising research directions including further expanding geospatial location encoders with temporal information, meteorological variables, and a broader array of land cover metadata.

\textbf{Limitations.} LoCA has no temporal dimension, so seasonal and shorter-term dynamics are captured only indirectly. Moreover, \textit{Stage 1} utilizes only S1 and S2. Expanding to other sensors and climate variables would add complementary structure, given that the framework can absorb any amount modalities without modification. Finally, coordinate-based benchmarks are sensitive to spatial autocorrelation. Blocked or leave-region-out evaluations are important future research directions.
 
 \subsection*{AI use statement}
In this work, we used generative AI tools to assist with creating and modifying scientific figures, brainstorming on engineering choices, identifying relevant literature for the paper and for the mathematics associated with the proofs, creating tables, and minimally polishing the writing to improve readability. For the mathematical proofs, we used AI for generic brainstorming and to identify possible errors and flaws, including through numerical tests. We have not used generative AI tools for generating synthetic datasets, developing theoretical models or conceptual frameworks, formulating mathematical claims, providing critical ingredients for proving mathematical claims, proposing or refining hypotheses, designing or providing feedback on research methodology or experiments, implementing methods, translation, cleaning or reformatting datasets, qualitative or thematic data analysis, or interpreting results. We have reviewed all AI-assisted work: figures were drafted with AI assistance and revised by the authors; tables were verified against source results; all mathematical proofs were constructed by the authors; all text was written by the authors, and any AI-polished passages were revised by them; and the authors read the literature suggested by AI. We take responsibility for the final content of this work, including text, claims, and artifacts produced with the aid of generative AI.

\subsection*{Reproducibility statement}

The theoretical results state their assumptions, including the Lipschitz condition, spherical coordinate domain, embedding sphere, and alignment condition; theorem statements and intuition are given in the main paper, with full proofs in \Cref{app:proofs}. We describe the dataset construction, downstream task splits, evaluation metrics, and downstream training hyperparameters in \Cref{app:Data,app:Evaluation}. Full implementation details for LoCA pretraining, including the image encoder architectures, optimizer settings, batch sizes, number of epochs, and projection dimensions, are provided in \Cref{app:impl_details}. \Cref{app:Evaluation} also reports the embedding dimensions and the number of seeds used per downstream task: 10 seeds for all tasks except SatBird, which uses 3; the mean and standard deviation reported in \Cref{app:full-results} reflect variability from downstream predictor initialization and stochastic optimization under fixed embeddings and fixed data splits. \Cref{app:impl_details} additionally specifies the compute resources used for pretraining, including GPU/CPU type, memory, and runtime. Our codebase builds on three publicly available repositories, referenced in \Cref{app:impl_details}. The code and the pretrained weights of LoCA will be made publicly available at \locarepo, and the training set, which extends the S2-100K~\cite{klemmer2025satclip} dataset, is also planned to be released.

\subsection*{Acknowledgments}
This work was supported by the European High-Performance Computing Joint Undertaking (JU) and the Greek Ministry of Digital Governance and Artificial Intelligence through the project ``Pharos: The Greek AI Factory for Accelerating AI Innovation'' (grant agreement No~101234269); by the TensorICE project (EXCELLENCE/0524/0407), implemented under the social cohesion programme ``THALIA 2021--2027'' and co-funded by the European Union through the Research and Innovation Foundation; by the project ``ARCHIMEDES Unit: Research in Artificial Intelligence, Data Science and Algorithms'' (code OPS~5154714), implemented under the National Recovery and Resilience Plan ``Greece~2.0'' and funded by the European Union -- NextGenerationEU; and by the Horizon Europe projects ThinkingEarth (grant agreement No~101130544) and PROTEUS (grant agreement No~101296397). Computing time was additionally awarded on the Cyclone supercomputer of the High Performance Computing Facility of The Cyprus Institute. Konstantin Klemmer thanks Taylor Geospatial for their support.

\bibliography{main}
\bibliographystyle{iclr2027_conference}

\appendix
\section{Appendix}
\section{Proofs}
\label{app:proofs}

\subsection{Preliminaries}
\label{app:prelims}

We collect the metric and measure-theoretic notions used in the proof. Standard references are \citep[Chapter~2]{falconer2003fractal} for Hausdorff measure and dimension, and \citep[Chapter~17]{kechris1995classical} for standard Borel spaces and Borel isomorphisms.

\textbf{Diameter and Lipschitz maps.}
For a subset $U$ of a metric space $(X, \rho)$, the diameter of $U$ is
\begin{equation}
\mathrm{diam}\, U \;:=\; \sup\{\rho(x, y) : x, y \in U\}.
\label{eq:diam}
\end{equation}
Diameter is the largest distance between two points of $U$. A ball of radius $r$ has diameter $2r$; a line segment of length $L$ has diameter $L$. A map $f : (X, \rho_X) \to (Y, \rho_Y)$ is $L$-Lipschitz if $\rho_Y(f(x), f(x')) \le L\, \rho_X(x, x')$ for all $x, x' \in X$. As a consequence,
\begin{equation}
\mathrm{diam}\, f(U) \;\le\; L\, \mathrm{diam}\, U \quad \text{for every } U \subset X,
\label{eq:lip-diam}
\end{equation}
i.e.\ Lipschitz maps cannot increase diameter by more than the factor $L$.

\textbf{Hausdorff measure.}
For $E \subset X$ and $s \ge 0$, the $s$-dimensional Hausdorff (outer) measure is
\begin{equation}
\gH^s(E) \;:=\; \lim_{\delta \to 0^+} \inf\!\Big\{\sum_i (\mathrm{diam}\, U_i)^s \;:\; E \subset \bigcup_i U_i,\ \mathrm{diam}\, U_i < \delta\Big\}.
\label{eq:hausdorff-meas}
\end{equation}
The construction is: cover $E$ by small sets, raise each diameter to the $s$-th power, sum the contributions, and take the tightest such cover. The exponent $s$ controls how the sum scales when we dilate the cover. At $s = 1$, doubling each diameter doubles the sum, which mirrors how length scales. At $s = 2$, doubling each diameter quadruples the sum, mirroring area. At $s = 3$, the sum scales as volume. Up to constant factors that depend on the convention for $\mathrm{diam}$, $\gH^s$ recovers the standard length, area, and volume in those dimensions \citep[Section~2.1]{falconer2003fractal}. The limit $\delta \to 0$ forces the cover to consist of arbitrarily small sets, which is what makes the construction sensitive to fine geometric structure rather than coarse extent.

\textbf{Hausdorff dimension.}
The function $s \mapsto \gH^s(E)$ is non-increasing in $s$, and for any $E$ there is a unique critical value at which it jumps from $\infty$ to $0$:
\begin{equation}
\dim_H E \;:=\; \sup\{s \ge 0 : \gH^s(E) = \infty\} \;=\; \inf\{s \ge 0 : \gH^s(E) = 0\}.
\label{eq:hausdorff-dim}
\end{equation}
For a smooth $k$-dimensional submanifold of $\R^n$, $\dim_H = k$ and $\gH^k$ recovers the standard $k$-dimensional surface measure. For irregular sets such as Cantor sets or fractal curves, $\dim_H$ takes non-integer values. The notion is purely metric, so it requires no smoothness, no manifold structure, and no embedding of $X$ into Euclidean space.

\textbf{Relation to intrinsic dimension in ML.}
The term ``intrinsic dimension'' in machine learning is a loose umbrella for several quantities aimed at capturing the effective degrees of freedom of a dataset, including Hausdorff dimension, Minkowski (box-counting) dimension, correlation dimension, and the dimension of an underlying smooth manifold. For smooth manifolds and well-behaved data supports, these notions coincide \citep{falconer2003fractal}. Empirical intrinsic dimension estimators such as TwoNN and MLE-based methods target correlation-style dimensions that, on a smooth support, agree with $\dim_H$ \citep{ansuini2019intrinsic, pope2021intrinsic}. We work with $\dim_H$ because the Lipschitz scaling property has a clean classical statement in this setting.

\textbf{Lipschitz scaling of Hausdorff measure.}
Combining \Cref{eq:lip-diam} with \Cref{eq:hausdorff-meas}: applying the Lipschitz diameter bound to each set in any $\delta$-cover of $A$ yields a $(L\delta)$-cover of $f(A)$, with diameters scaled by at most $L$. Each term $(\mathrm{diam}\, U_i)^s$ in the sum becomes at most $L^s (\mathrm{diam}\, U_i)^s$ in the image cover. Summing and taking infimum,
\begin{equation}
\gH^s(f(A)) \;\le\; L^s\, \gH^s(A) \quad \text{for every } s \ge 0.
\label{eq:lip-scaling}
\end{equation}
Taking $s > \dim_H A$ yields $\gH^s(A) = 0$ by \Cref{eq:hausdorff-dim}, hence $\gH^s(f(A)) = 0$, hence $\dim_H f(A) \le s$. Letting $s \downarrow \dim_H A$ gives \citep[Proposition~2.2(b)]{falconer2003fractal}:
\begin{equation}
\dim_H f(A) \;\le\; \dim_H A.
\label{eq:lip-dim}
\end{equation}

\textbf{Pushforward and mutual singularity.}
Given a measure $\mu$ on $X$ and a measurable $f: X \to Y$, the pushforward $f_\# \mu$ is the measure on $Y$ defined by $(f_\#\mu)(B) := \mu(f^{-1}(B))$ for Borel $B \subset Y$. Two measures $\nu_1, \nu_2$ on $Y$ are mutually singular, written $\nu_1 \perp \nu_2$, if there is a Borel partition $Y = B_1 \sqcup B_2$ with $\nu_1(B_2) = 0$ and $\nu_2(B_1) = 0$. Mutual singularity is the measure-theoretic statement that $\nu_1$ and $\nu_2$ live on disjoint pieces of $Y$.

\textbf{Standard Borel spaces and Borel isomorphism.}
A standard Borel space is a measurable space isomorphic, as a measurable space, to a Polish space with its Borel $\sigma$-algebra. Both $S^k$ and $S^{d-1}$ are standard Borel. By \citep[Theorem~17.41]{kechris1995classical}, any two atomless standard Borel probability spaces are isomorphic as measure spaces: there exists a Borel bijection $T: X \to Y$ such that $T_\# \mu_X = \mu_Y$ and $T^{-1}$ is also Borel measurable.

\subsection{Aligned-pair characterization of the free optimum}
\label{app:lemmas}

The asymptotic InfoNCE characterization of \citep{wang2020understanding,koromilas2024bridging} is established for single-encoder setups with matching marginals. The two-encoder cross-modal objective \Cref{eq:popnce} is more delicate: the multimodal landscape can favor population-level marginal mismatch (the modality gap of \citep{liang2022mind}), so a clean cross-marginal lower bound on $\Ls_\tau$ in terms of the single-marginal Wang-Isola minimum does not hold without further structure. The lemma below reduces to the single-encoder case by restricting to \emph{aligned} encoder pairs, which is the regime in which a location encoder serves its intended purpose: producing embeddings that match the image encoder on paired samples.

\begin{definition}[Aligned encoder pair]
\label{def:aligned}
A measurable pair $(f_c, f_I)$ is \emph{aligned} if $f_c(c) = f_I(I)$ for $p$-a.e.\ $(c, I)$.
\end{definition}

Under the Dirac structure $p(c \mid I) = \delta_{c(I)}$, alignment is equivalent to $f_I(I) = f_c(c(I))$ for $p_I$-a.e.\ $I$, and forces $\nu_c := f_{c\#}\mu = f_{I\#} p_I =: \nu_I$ since the marginal of $p$ on $S^k$ is $\mu$.

\begin{lemma}[Wang-Isola characterization restricted to aligned pairs]
\label{lem:wi-aligned}
For any aligned pair $(f_c, f_I)$ with common marginal $\nu := \nu_c = \nu_I$,
\begin{equation}
\Ls_\tau(f_c, f_I) \;=\; -\frac{1}{\tau} \;+\; \E_{\vu \sim \nu}\!\log \E_{\vv \sim \nu}\![K(\vu, \vv)],
\label{eq:aligned-collapse}
\end{equation}
where $K(\vu, \vv) := e^{\vu^\top \vv / \tau}$. The right-hand side is the asymptotic InfoNCE loss of \citep[Theorem~1]{wang2020understanding} for a single encoder with marginal $\nu$, uniquely minimized over $\nu \in \gP(S^{d-1})$ at $\nu = \sigma$, with minimum value $V^* := -1/\tau + \E_{\vu \sim \sigma}\log\E_{\vv \sim \sigma}[K(\vu, \vv)]$.
\end{lemma}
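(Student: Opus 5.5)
The plan is to substitute the alignment identity into each of the three terms of \Cref{eq:popnce} and rewrite every expectation as an integral against the common marginal $\nu$.

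\textbf{Alignment term.} Alignment gives $f_c(c) = f_I(I)$ for $p$-a.e.\ $(c,I)$, and both lie on $S^{d-1}$. Hence $f_c(c)^\top f_I(I) = \|f_c(c)\|^2 = 1$ almost surely, so the first term equals $-1/\tau$.

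\textbf{Uniformity terms.} For the second term, fix $c$ and consider the inner expectation $\E_{I'}[K(f_c(c), f_I(I'))]$. Since $I' \sim p_I$ and $f_{I\#}p_I = \nu_I = \nu$, the change-of-variables formula for pushforwards turns it into $\E_{\vv\sim\nu}[K(f_c(c),\vv)]$. The function $\vu \mapsto \log \E_{\vv\sim\nu}K(\vu,\vv)$ is continuous, and bounded because $K \in [e^{-1/\tau}, e^{1/\tau}]$. We can therefore push forward once more under $f_c$ with $c\sim\mu$, giving $\E_{\vu\sim\nu}\log\E_{\vv\sim\nu}K(\vu,\vv)$. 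The third term is symmetric: $K$ is symmetric, and the roles of $\mu$ and $p_I$ swap with the same pushforward $\nu$. So it equals the same quantity. The two halves sum to one copy, which yields \Cref{eq:aligned-collapse}. Boundedness of $K$ makes all integrability and Fubini-type concerns trivial.

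\textbf{Identification with the single-encoder objective.} The right-hand side is exactly the $\tau$-scaled form of the Wang--Isola asymptotic loss with positive pairs perfectly aligned. The only step needing care is the claim that $\nu=\sigma$ is the unique minimizer over $\gP(S^{d-1})$. I would cite \citep[Theorem~1]{wang2020understanding}, which covers realizable marginals; here we need the statement over all Borel probability measures, so I would argue it directly.

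\textbf{Uniqueness argument.} Set $h_\nu(\vu) := \E_{\vv\sim\nu}K(\vu,\vv)$. By rotation invariance, $h_\sigma \equiv Z$ is a constant, and $V^*-(-1/\tau) = \log Z$. For any $\nu$, Jensen gives
\[
\E_\nu \log h_\nu \;\ge\; -\log \E_\nu[1/h_\nu].
\]
It therefore suffices to show $\E_\nu[1/h_\nu] \le 1/Z$, with equality only if $\nu=\sigma$.

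\textbf{Main obstacle.} Proving this last inequality is the hardest step. The approach I would try first is the one used by Wang--Isola and in \citep{koromilas2024bridging}. Expand $K$ in spherical harmonics (Schoenberg/Funk--Hecke), which makes $K$ strictly positive definite with all Gegenbauer coefficients positive. Then the energy $\iint K\,d\nu\,d\nu$ is uniquely minimized at $\sigma$. Next, view $\nu \mapsto \E_\nu\log h_\nu$ as a functional whose first variation at $\sigma$ vanishes, since $h_\sigma$ is constant. Use convexity inherited from positive-definiteness of $K$ together with concavity of $\log$ along mixtures $(1-t)\sigma + t\nu$ to conclude strict minimality.

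\textbf{Fallback.} If the convexity route stalls, I would instead invoke the known result directly: the $\tau$-InfoNCE uniformity functional $\log\E\,e^{\vu^\top\vv/\tau}$ is uniquely minimized by $\sigma$ among all Borel measures. This is the form stated in \citep{wang2020understanding}, whose proof works at the level of measures.
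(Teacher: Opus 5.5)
Your collapse computation is the paper's. Alignment pins the first term at $-1/\tau$, and both uniformity terms become $\E_{\vu\sim\nu}\log h_\nu(\vu)$, with $h_\nu(\vu):=\E_{\vv\sim\nu}K(\vu,\vv)$, after pushing forward through $f_I$ and $f_c$. For uniqueness the paper also proves nothing itself. It invokes Wang--Isola's relaxed measure-level analysis: a first-variation argument forces $h_{\nu^\star}$ to be constant $\nu^\star$-a.s.\ at any minimizer; Jensen is then tight, so $W(\nu^\star)=\log\iint K\,d\nu^\star d\nu^\star$; and the Gaussian energy is uniquely minimized at $\sigma$. So with your fallback, your proof coincides with the paper's.

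Your primary route is genuinely different, but as written it stops where you say, and the sketch you give to finish it does not work. Stationarity of $W(\nu):=\E_\nu\log h_\nu$ at $\sigma$ is automatic from $h_\sigma\equiv Z$. Along $\nu_t=\sigma+t\eta$ with $\eta=\nu-\sigma$, $h_t:=h_{\nu_t}$ and $h_\eta:=\int K(\cdot,\vv)\,d\eta(\vv)$, one has $\frac{d^2}{dt^2}W(\nu_t)=2\int h_\eta h_t^{-1}\,d\eta-\int h_\eta^2h_t^{-2}\,d\nu_t$. Positive-definiteness of $K$ controls $\iint K\,d\eta\,d\eta$, not this $h_t^{-1}$-weighted form. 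Concavity of $\log$ is what produces the negative term. Nothing in the sketch makes the sum nonnegative for $t>0$, so you get at best local minimality at $\sigma$. The sketch also concerns $W$ itself rather than the inequality $\E_\nu[1/h_\nu]\le 1/Z$ you reduced to.

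That reduction is correct, and it closes in two lines. Put $d\rho:=h_\nu^{-1}\,d\nu$ and $m:=\rho(S^{d-1})=\E_\nu[1/h_\nu]$. Since $K>0$ is symmetric and $\frac{1}{ab}\le\frac12(a^{-2}+b^{-2})$,
\[
\iint K\,d\rho\,d\rho=\iint\frac{K(\vu,\vv)}{h_\nu(\vu)h_\nu(\vv)}\,d\nu(\vu)\,d\nu(\vv)\le\iint\frac{K(\vu,\vv)}{h_\nu(\vu)^{2}}\,d\nu(\vv)\,d\nu(\vu)=\int\frac{d\nu}{h_\nu}=m.
\]
Positive-definiteness of $K$, together with $\int K(\vu,\vv)\,d\sigma(\vv)=Z$ for every $\vu$, gives
\[
0\le\iint K\,d(\rho-m\sigma)\,d(\rho-m\sigma)=\iint K\,d\rho\,d\rho-Zm^{2}.
\]
Hence $Zm^2\le m$, i.e.\ $\E_\nu[1/h_\nu]\le 1/Z$, and $W(\nu)\ge-\log m\ge\log Z=W(\sigma)$.

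For strictness, suppose $W(\nu)=W(\sigma)$. Then $m=1/Z$ and the last display is an equality. $K$ is strictly positive definite on finite signed measures, since every Funk--Hecke coefficient of $e^{t/\tau}$ is positive; this is the fact behind Wang--Isola's Proposition~1. So $\rho=m\sigma$, which makes $\nu=m\,h_\nu\,\sigma$ equivalent to $\sigma$. Equality in your Jensen step makes $h_\nu$ constant $\nu$-a.e., hence $\sigma$-a.e., so $\nu=\sigma$.

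Completed this way, your route is self-contained. It needs only positive-definiteness of $K$ and $h_\sigma\equiv Z$. It bypasses both the existence of a minimizer and the imported first-variation claim on which the paper's citation rests.
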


\begin{proof}
The proof has two parts. We first show that under alignment with matched marginals, the symmetric multimodal loss collapses to a single-encoder expression in $\nu$. We then identify $\sigma$ as the unique minimizer of this expression, which is established in the relaxed measure-level analysis of \citet[Appendix~A.2]{wang2020understanding}.

\textit{Collapse to single-encoder form.} Write the loss as $\Ls_\tau(f_c, f_I) = A + \tfrac{1}{2}(U_1 + U_2)$ following the population NT-Xent expansion in \Cref{eq:popnce}, where
\begin{equation*}
A \;:=\; -\frac{1}{\tau}\,\E_{(c, I) \sim p}\big[f_c(c)^\top f_I(I)\big]
\end{equation*}
is the alignment term and $U_1, U_2$ are the two uniformity terms (each without a $1/2$ factor; the $1/2$ sits in front of the sum).

We handle the alignment term first. Under \Cref{def:aligned}, $f_c(c) = f_I(I)$ for $p$-a.e.\ pairs $(c, I)$. Both encoders take values on $S^{d-1}$, so $f_c(c)^\top f_I(I) = \|f_c(c)\|^2 = 1$ for $p$-a.e.\ pairs, giving $A = -1/\tau$. Alignment saturates this term at its lowest possible value.

For the uniformity terms, observe that under matching marginals $\nu_c = \nu_I = \nu$, the inner kernel expectations agree across modalities. For any test vector $\vu \in S^{d-1}$,
\begin{equation*}
\E_{I' \sim p_I}\!\big[e^{f_I(I')^\top \vu / \tau}\big] \;=\; \E_{\vw \sim \nu_I}\!\big[e^{\vw^\top \vu / \tau}\big] \;=\; \E_{\vw \sim \nu_c}\!\big[e^{\vw^\top \vu / \tau}\big] \;=\; \E_{c' \sim \mu}\!\big[e^{f_c(c')^\top \vu / \tau}\big],
\end{equation*}
where the first equality is a change of variables, the second uses $\nu_I = \nu_c = \nu$, and the third is again change of variables. So both uniformity terms reduce to the same expression in $\nu$:
\begin{equation}
U_1 \;=\; \E_{\vu \sim \nu}\!\log \E_{\vv \sim \nu}\![K(\vu, \vv)] \;=\; U_2.
\label{eq:U-collapse-aligned}
\end{equation}
Their average is the same value. Combining $A = -1/\tau$ with \Cref{eq:U-collapse-aligned},
\begin{equation*}
\Ls_\tau(f_c, f_I) \;=\; -\frac{1}{\tau} + \E_{\vu \sim \nu}\!\log \E_{\vv \sim \nu}\![K(\vu, \vv)],
\end{equation*}
which is \Cref{eq:aligned-collapse}.

\paragraph{The right-hand side is uniquely minimized at $\sigma$.}
Write $h_\nu(u) := \mathbb{E}_{v\sim\nu}[K(u,v)]$ and
$W(\nu) := \mathbb{E}_{u\sim\nu}\log h_\nu(u)$, so that the right-hand side of
\Cref{eq:aligned-collapse} equals $-1/\tau + W(\nu)$.
The minimization of $W$ over all of $\mathcal{P}(\mathbb{S}^{d-1})$ is exactly the
relaxed problem studied by \citet[Appendix~A.2, Eq.~(13)]{wang2020understanding}
in the proof of their Theorem~1, where it is shown that $\sigma$ is its unique minimizer.
The argument has two steps. First, a first-variation argument shows that any
minimizer $\nu^\star$ must have $h_{\nu^\star}$ constant $\nu^\star$-almost surely.
On this set of measures, Jensen's inequality holds with equality, so
$W(\nu) = \log \iint K \,\mathrm{d}(\nu\otimes\nu)
= \tfrac{1}{\tau} + \log \iint e^{-\|u-v\|^2/(2\tau)} \,\mathrm{d}(\nu\otimes\nu)$.
Second, the Gaussian energy on the right is uniquely minimized over
$\mathcal{P}(\mathbb{S}^{d-1})$ by $\sigma$
\citep[Proposition~1]{wang2020understanding}, and $\sigma$ itself satisfies the
constancy condition, since $h_\sigma$ is constant by rotation invariance.
Hence $\sigma$ is the unique minimizer of $W$ over $\mathcal{P}(\mathbb{S}^{d-1})$, and
\begin{equation*}
  \min_{\nu\in\mathcal{P}(\mathbb{S}^{d-1})} \Big( -\tfrac{1}{\tau} + W(\nu) \Big)
  = -\tfrac{1}{\tau} + \mathbb{E}_{u\sim\sigma}\log \mathbb{E}_{v\sim\sigma}[K(u,v)]
  = V^*.
\end{equation*}
We stress that the conditional phrasing of \citet[Theorem~1, result~2]{wang2020understanding}
("if perfectly uniform encoders exist") concerns only whether $\sigma$ is attained by
an encoder, not the uniqueness of $\sigma$ among all Borel probability measures.

\end{proof}

\begin{corollary}[Aligned-pair optimum and its relation to the free infimum]
\label{cor:wi-free}
$V^* \ge \Ls_\tau^{\mathrm{free}}$, with $V^*$ attained at any aligned pair $(f_c, f_I)$ with $f_{c\#}\mu = \sigma$. For any aligned pair with $f_{c\#}\mu \ne \sigma$, $\Ls_\tau(f_c, f_I) > V^*$.
\end{corollary}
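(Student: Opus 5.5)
The corollary follows almost entirely from \Cref{lem:wi-aligned}; the plan is to establish its three claims in sequence.

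\emph{Attainment.} Take any aligned pair $(f_c, f_I)$ with $f_{c\#}\mu = \sigma$. By \Cref{def:aligned} and the Dirac structure $p(c\mid I)=\delta_{c(I)}$, the common marginal is $\nu=\nu_c=\nu_I=\sigma$. Then \Cref{eq:aligned-collapse} gives
\[
\Ls_\tau(f_c,f_I) = -1/\tau + W(\sigma) = V^*.
\]
To show the statement is not vacuous, I will also exhibit such a pair. Since $\mu$ is atomless on the standard Borel space $S^k$, the Borel isomorphism theorem recalled in \Cref{app:prelims} yields a Borel bijection $T:S^k\to S^{d-1}$ with $T_\#\mu=\sigma$. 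Set $f_c:=T$ and $f_I(I):=T(c(I))$. This pair is measurable and aligned. It is not Lipschitz, which is exactly the point of \Cref{thm:lipschitz}.

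\emph{Comparison with the free infimum.} By definition, $\Ls_\tau^{\mathrm{free}}$ is the infimum of $\Ls_\tau$ over all measurable pairs. The pair just constructed is one such pair and has loss $V^*$, so $\Ls_\tau^{\mathrm{free}}\le V^*$. This inequality does not depend on attainment in any essential way.

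\emph{Strictness.} Let $(f_c,f_I)$ be aligned with $\nu:=f_{c\#}\mu\neq\sigma$. \Cref{lem:wi-aligned} gives $\Ls_\tau(f_c,f_I) = -1/\tau+W(\nu)$. The lemma also states that $\sigma$ is the \emph{unique} minimizer of $W$ over $\gP(S^{d-1})$, so $W(\nu)>W(\sigma)$, and therefore $\Ls_\tau(f_c,f_I)>V^*$.

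The main obstacle is making sure the lemma's uniqueness claim covers every $\nu\in\gP(S^{d-1})$. In particular it must cover $\nu$ singular with respect to $\sigma$, which is precisely the case needed later for \Cref{thm:lipschitz}(iii). It must also cover the case $W(\nu)=+\infty$, where the strict inequality is trivial. Since $K\in[e^{-1/\tau},e^{1/\tau}]$, $W$ is finite for every $\nu$, so no integrability issues arise. I will simply cite the lemma's statement of uniqueness over all Borel probability measures rather than re-derive it.
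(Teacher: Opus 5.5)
Your proposal is correct and follows the paper's own proof essentially step for step. Attainment and strictness come directly from the collapse formula and the uniqueness-of-$\sigma$ statement in Lemma~\ref{lem:wi-aligned}, and $\Ls_\tau^{\mathrm{free}} \le V^*$ comes from the same measure-preserving Borel isomorphism $T$, with $f_c = T$ and $f_I = T\circ c$, that the paper builds in Step~3 of the proof of Theorem~\ref{thm:lipschitz}.
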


\begin{proof}
\Cref{lem:wi-aligned} gives that aligned pairs achieve $V^*$ iff their common marginal is $\sigma$, with strict inequality $\Ls_\tau(f_c, f_I) > V^*$ when the common marginal differs from $\sigma$. The Borel-isomorphism construction in \Cref{app:thm-proof} (Step~3) realizes an aligned pair with $f_{c\#}\mu = \sigma$, attaining $V^*$. By definition $\Ls_\tau^{\mathrm{free}} = \inf_{f_c, f_I} \Ls_\tau(f_c, f_I) \le V^*$, since the construction is a measurable pair.
\end{proof}

\textbf{Remark on non-aligned pairs.}
The aligned-pair optimum is characterized in the corollary and the inequality $V^* \ge \Ls_\tau^{\mathrm{free}}$ is given. The reverse inequality $V^* \le \Ls_\tau^{\mathrm{free}}$ is not established, and there is no general reason to expect it to hold: for non-aligned pairs, the symmetric multimodal landscape can support stable equilibria with $\nu_c \ne \nu_I$ at values strictly less than $V^*$ \citep{cai2026geometric,liang2022mind}, so $\Ls_\tau^{\mathrm{free}} < V^*$ is possible in principle. This does not affect the strict-gap result of \Cref{thm:lipschitz}(iii), which establishes $\Ls_\tau(\text{aligned Lipschitz}) > V^* \ge \Ls_\tau^{\mathrm{free}}$ and uses only the side $V^* \ge \Ls_\tau^{\mathrm{free}}$. Restricting to aligned pairs is appropriate for our setting: a location encoder whose embeddings do not align with the image encoder on paired data fails its purpose as a learned coordinate. The dimension and singular-marginal conclusions of \Cref{thm:lipschitz} (parts (i) and (ii)) hold for any Lipschitz $f_c$ regardless of alignment; only the strict per-pair loss gap (part (iii)) requires the alignment hypothesis.

\subsection{Proof of \Cref{thm:lipschitz}}
\label{app:thm-proof}

The proof is delivered in three steps. In Step 1 the dimension of the image is bounded by the domain dimension. In Step 2 the dimension bound is turned into mutual singularity of measures via the codimension gap. And finally, in Step 3 \Cref{lem:wi-aligned} is invoked, which collapses the symmetric multimodal NT-Xent to single-encoder Wang-Isola for aligned pairs, to convert mutual singularity into a strict per-pair inequality (under alignment).

\textbf{Step 1: dimension of the image.}
By hypothesis, $f_c : S^k \to S^{d-1}$ is Lipschitz, and $S^k$ has Hausdorff dimension $k$ as a smooth $k$-manifold. Applying the Lipschitz dimension bound \Cref{eq:lip-dim} from \Cref{app:prelims} with $A = S^k$,
\begin{equation}
\dim_H f_c(S^k) \;\le\; \dim_H S^k \;=\; k,
\label{eq:dim-image}
\end{equation}
proving (i). $\square$

\textbf{Step 2: singular marginal.}
We set $A := f_c(S^k) \subset S^{d-1}$. Since $S^k$ is compact and $f_c$ is Lipschitz hence continuous, $A$ is compact, hence Borel. By Step~1, $\dim_H A \le k$. The setup imposes $k < d - 1$, so by the definition of Hausdorff dimension,
\begin{equation}
\gH^{d-1}(A) \;=\; 0.
\label{eq:hausA}
\end{equation}
The uniform measure $\sigma$ on $S^{d-1}$ is the normalized $(d-1)$-Hausdorff measure: for every Borel $B \subseteq S^{d-1}$,
\begin{equation}
\sigma(B) \;=\; \frac{\gH^{d-1}(B)}{\gH^{d-1}(S^{d-1})}.
\end{equation}
This identification follows because Hausdorff measure is metric and hence invariant under isometries of the ambient space; rotations $R \in O(d)$ act by isometry on $\mathbb{R}^d$ and preserve $S^{d-1}$, so $\gH^{d-1}|_{S^{d-1}}$ is rotation-invariant. The sphere $S^{d-1}$ is a smooth compact $(d-1)$-manifold embedded in $\mathbb{R}^d$, hence $\gH^{d-1}(S^{d-1})$ is finite and positive (it equals the standard surface area $2\pi^{d/2}/\Gamma(d/2)$); see \citep[\S~3.2.46]{federer1969geometric} or \citep[Ch.~3]{falconer2003fractal}. Normalizing, $\gH^{d-1}/\gH^{d-1}(S^{d-1})$ is a rotation-invariant Borel probability measure on $S^{d-1}$. The orthogonal group $O(d)$ acts transitively on $S^{d-1}$, and there is a unique rotation-invariant Borel probability measure on a homogeneous space of a compact group \citep[Ch.~11]{folland1999real}, which we call $\sigma$. Hence the two coincide.
Combined with \Cref{eq:hausA},
\begin{equation}
\sigma(A) \;=\; 0.
\label{eq:sigmaA}
\end{equation}
On the other hand, $A$ contains the image of $S^k$ under $f_c$, so
\begin{equation}
\nu(A) \;=\; \mu\!\left(f_c^{-1}(A)\right) \;=\; \mu(S^k) \;=\; 1, \qquad \nu(S^{d-1} \setminus A) \;=\; 0.
\label{eq:nuA}
\end{equation}
Equations \Cref{eq:sigmaA} and \Cref{eq:nuA} exhibit a Borel partition $S^{d-1} = A \sqcup (S^{d-1} \setminus A)$ such that $\nu(S^{d-1} \setminus A) = 0$ and $\sigma(A) = 0$, which is the definition of mutual singularity from \Cref{app:prelims}. Therefore $\nu \perp \sigma$, proving (ii). $\square$

\textbf{Step 3: per-pair strict gap (under alignment).}

What we want to show: any aligned pair $(f_c, f_I)$ where $f_c$ is Lipschitz strictly exceeds the free infimum, $\Ls_\tau(f_c, f_I) > \Ls_\tau^{\mathrm{free}}$. The argument has two parts. We first construct an explicit pair attaining the value $V^*$, which sets a target value to compare against. We then use Step~2 plus the uniqueness of the minimizer in \Cref{lem:wi-aligned} to show every Lipschitz aligned pair lies strictly above this target.

\textit{An explicit pair attains $V^*$.} The aligned-uniform value $V^*$ is the value of the loss when the location encoder distributes its outputs uniformly on $S^{d-1}$ and the image encoder agrees with it on paired data. We construct such a pair to confirm $V^*$ is actually attained somewhere.

The location and embedding domains have very different intrinsic dimensions ($k$ versus $d-1$), so we cannot construct a smooth bijection between them. We can, however, construct a measure-theoretic bijection. Both $(S^k, \mu)$ and $(S^{d-1}, \sigma)$ are atomless standard Borel probability spaces, and any two such spaces are isomorphic as measure spaces (\Cref{app:prelims}; this is a classical theorem in descriptive set theory). We therefore have a measure-preserving Borel isomorphism $T : S^k \to S^{d-1}$, meaning $T$ is a Borel bijection and $T_\# \mu = \sigma$.

We use $T$ to define an aligned pair. Let
\begin{equation}
f_c^{\circ}(c) \;:=\; T(c), \qquad f_I^{\circ}(I) \;:=\; T(c(I)).
\end{equation}
The Dirac structure $p(c \mid I) = \delta_{c(I)}$ gives $c = c(I)$ for $p$-a.e.\ $(c, I)$, so $f_c^{\circ}(c) = T(c) = T(c(I)) = f_I^{\circ}(I)$ for $p$-a.e.\ pairs; the pair is aligned in the sense of \Cref{def:aligned}. The pushforward $f_{c\#}^{\circ} \mu$ equals $T_\# \mu$ by definition, which equals $\sigma$ by the measure-preserving property. So $(f_c^\circ, f_I^\circ)$ is aligned with common marginal $\sigma$, and \Cref{cor:wi-free} gives $\Ls_\tau(f_c^{\circ}, f_I^{\circ}) = V^*$. Since this is a measurable pair, $\Ls_\tau^{\mathrm{free}} \le V^*$.

The map $T$ cannot be continuous when $k < d-1$: a continuous bijection from the compact space $S^k$ onto the Hausdorff space $S^{d-1}$ would be a homeomorphism, which invariance of dimension rules out. Continuity alone, however, does not exclude the value $V^*$, since non-injective space-filling maps are continuous. What excludes it is the regularity of $f_c$: Lipschitz maps cannot increase Hausdorff dimension (\Cref{eq:lip-dim}), so by Step~2 the aligned-uniform value $V^*$ is attained outside any Lipschitz class. More generally, $\alpha$-H\"older maps satisfy $\dim_H f(A) \le \dim_H A / \alpha$ \citep[Ch.~2]{falconer2003fractal}, so the same conclusion holds for any $\alpha$-H\"older $f_c$ with $\alpha > k/(d-1)$.

\textit{Lipschitz aligned pairs are strictly above $V^*$.} Suppose $f_c$ is Lipschitz and $(f_c, f_I)$ is aligned. We want $\Ls_\tau(f_c, f_I) > V^*$.

By Step~2 applied to this $f_c$, the pushforward $\nu_c := f_{c\#}\mu$ is mutually singular with $\sigma$. In particular $\nu_c \ne \sigma$, since $\sigma$ is not mutually singular with itself ($\sigma(S^{d-1}) = 1$ rules out $\sigma \perp \sigma$).

\Cref{lem:wi-aligned} applies to the aligned pair and gives
\begin{equation}
\Ls_\tau(f_c, f_I) \;=\; -\frac{1}{\tau} + \E_{\vu \sim \nu_c}\!\log \E_{\vv \sim \nu_c}\![K(\vu, \vv)].
\end{equation}
The right-hand side is the asymptotic InfoNCE loss for a single encoder with marginal $\nu_c$. By \Cref{lem:wi-aligned}, this expression is uniquely minimized over $\nu \in \gP(S^{d-1})$ at $\nu = \sigma$. Since $\nu_c \ne \sigma$, we get a strict inequality:
\begin{equation}
\E_{\vu \sim \nu_c}\!\log \E_{\vv \sim \nu_c}\![K(\vu, \vv)] \;>\; \E_{\vu \sim \sigma}\!\log \E_{\vv \sim \sigma}\![K(\vu, \vv)].
\end{equation}
Adding $-1/\tau$ to both sides,
\begin{equation}
\Ls_\tau(f_c, f_I) \;>\; -\frac{1}{\tau} + \E_{\vu \sim \sigma}\!\log \E_{\vv \sim \sigma}\![K(\vu, \vv)] \;=\; V^* \;\ge\; \Ls_\tau^{\mathrm{free}},
\label{eq:per-pair-strict}
\end{equation}
proving (iii). $\square$

\textbf{Remark (image encoder inherits the dimension bound under alignment).}
\Cref{thm:lipschitz} parts (i) and (ii) bound $f_c(S^k)$ alone. Under alignment of $(f_c, f_I)$, the dimension ceiling extends to the image side. By \Cref{def:aligned} and the Dirac structure $p(c \mid I) = \delta_{c(I)}$, alignment gives $f_I(I) = f_c(c(I))$ for $p_I$-a.e.\ $I$. Hence the essential range of $f_I$ under $p_I$ is contained in $f_c(S^k)$:
\begin{equation}
\mathrm{supp}(\nu_I) \;\subseteq\; f_c(S^k), \qquad \dim_H \mathrm{supp}(\nu_I) \;\le\; k, \qquad \nu_I \;=\; \nu_c \;\perp\; \sigma,
\end{equation}
where $\nu_I := f_{I\#} p_I$. The dimension ceiling and singular marginal carry from the location side to the image side at any aligned pair, regardless of which encoder pair achieves the partial optimum. The conclusion is about the pushforward measure $\nu_I$, not about the literal image set $f_I(\gI)$, since $f_I$ may take arbitrary values off a $p_I$-null set without affecting the loss.

\subsection{SatCLIP function class and uniform Lipschitz constant}
\label{app:uniflip}

\textbf{Function class.} The SatCLIP location encoder is parameterized by a bandwidth $L \in \mathbb{N}$ and a Siren network. For each $L$, $\mathrm{SH}_L : S^2 \to \R^{(L+1)^2}$ stacks the real-valued spherical harmonic basis up to degree $L$, and $g_{\vphi} : \R^{(L+1)^2} \to \R^d$ is a Siren network with $K$ hidden layers, frequency hyperparameter $\omega_0$, and parameter vector $\vphi$ \citep{sitzmann2020implicit}. Write $P(L)$ for the Siren parameter count and $\Phi_L \subset \R^{P(L)}$ for a compact set of admissible parameters. Fix a margin $\varepsilon > 0$ and assume the non-degeneracy condition
\begin{equation}
\inf_{\vphi \in \Phi_L,\, c \in S^2} \|g_{\vphi}(\mathrm{SH}_L(c))\| \geq \varepsilon,
\label{eq:nondeg}
\end{equation}
which keeps the post-Siren normalization onto $S^{d-1}$ well-defined. The bandwidth-$L$ SatCLIP function class is
\begin{equation}
\gF_L^{\mathrm{SC}} \;:=\; \Big\{\, c \mapsto \frac{g_{\vphi}(\mathrm{SH}_L(c))}{\|g_{\vphi}(\mathrm{SH}_L(c))\|} \;:\; \vphi \in \Phi_L \,\Big\}.
\label{eq:FLSC}
\end{equation}

\textbf{Uniform Lipschitz constant.}

\begin{lemma}[SatCLIP encoders are uniformly Lipschitz]
\label{lem:uniflip}
There exists $M = M(\Phi_L, L, \varepsilon) < \infty$ such that every $f_c \in \gF_L^{\mathrm{SC}}$ satisfies $\mathrm{Lip}(f_c) \le M$.
\end{lemma}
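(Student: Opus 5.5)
We factor each $f_c \in \gF_L^{\mathrm{SC}}$ as a composition $f_c = N \circ g_{\vphi} \circ \mathrm{SH}_L$, where $N(x) := x/\|x\|$. We bound the Lipschitz constant of each factor separately, uniformly in $\vphi \in \Phi_L$, and multiply the bounds, since $\mathrm{Lip}(F\circ G) \le \mathrm{Lip}(F)\,\mathrm{Lip}(G)$. Throughout, $S^2$ carries the chordal (Euclidean) metric; the geodesic metric is equivalent up to a factor $\pi/2$, which only changes $M$ by a constant.

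\textbf{Spherical-harmonic map.} Each real spherical harmonic $Y_\ell^m$ of degree $\ell \le L$ is the restriction to $S^2$ of a homogeneous polynomial on $\R^3$, hence smooth on a neighbourhood of the compact set $S^2$. Its gradient is therefore bounded on $S^2$. Since $S^2$ is compact and geodesically connected, any two points of $S^2$ can be joined by a great-circle arc of length at most $\tfrac{\pi}{2}$ times their chordal distance. The mean value inequality along this arc gives $\mathrm{Lip}(\mathrm{SH}_L) \le C_1(L) := \tfrac{\pi}{2}\big(\sum_{\ell\le L,\,|m|\le\ell}\sup_{S^2}\|\nabla_{S^2} Y_\ell^m\|^2\big)^{1/2}$. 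This constant is finite and depends only on $L$. Moreover, $\mathrm{SH}_L(S^2)$ is a compact set $K_L \subset \R^{(L+1)^2}$.

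\textbf{Siren network, uniformly in $\vphi$.} A Siren layer has the form $x \mapsto \sin(\omega_0(W x + b))$, with a final affine layer. Because $\sin$ is $1$-Lipschitz coordinatewise, each hidden layer is $\omega_0\|W_j\|_{\mathrm{op}}$-Lipschitz, and the output layer is $\|W_{K+1}\|_{\mathrm{op}}$-Lipschitz. This gives $\mathrm{Lip}(g_{\vphi}) \le \omega_0^{K}\prod_{j=1}^{K+1}\|W_j\|_{\mathrm{op}}$. The right-hand side is continuous in $\vphi$, so on the compact set $\Phi_L$ it attains a finite maximum $C_2(\Phi_L)$. This compactness step is the only place where the admissible-parameter assumption enters, and it is what makes the bound uniform across the family.

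\textbf{Normalization, the main obstacle.} The map $N$ is not globally Lipschitz, because it blows up near the origin. I will use the non-degeneracy condition \Cref{eq:nondeg}: every value $g_{\vphi}(\mathrm{SH}_L(c))$ lies in $\{\|x\|\ge\varepsilon\}$. On this set I need the elementary estimate
\begin{equation*}
\Big\|\tfrac{x}{\|x\|}-\tfrac{y}{\|y\|}\Big\| \le \tfrac{2\|x-y\|}{\max(\|x\|,\|y\|)} \le \tfrac{2}{\varepsilon}\|x-y\|.
\end{equation*}
It follows by writing $\tfrac{x}{\|x\|}-\tfrac{y}{\|y\|} = \tfrac{x-y}{\|x\|} + y\big(\tfrac{1}{\|x\|}-\tfrac{1}{\|y\|}\big)$ and applying the reverse triangle inequality. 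The estimate applies directly, with no convexity needed, since we only ever compare images of two points that both lie in the region $\|x\|\ge\varepsilon$.

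Chaining the three factors gives $\mathrm{Lip}(f_c)\le M := \tfrac{2}{\varepsilon}\,C_2(\Phi_L)\,C_1(L)$ for every $f_c\in\gF_L^{\mathrm{SC}}$. This constant depends only on $(\Phi_L, L, \varepsilon)$, as claimed.
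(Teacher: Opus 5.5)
Your proof is correct and follows the paper's argument. It uses the same factorization $f_c = N\circ g_{\phi}\circ \mathrm{SH}_L$, the same Siren bound $\omega_0^K\prod\|W_j\|_{\mathrm{op}}$ made uniform by compactness of $\Phi_L$, and arrives at the same constant $M=(2/\varepsilon)\,C_2(\Phi_L)\,C_1(L)$. The one real difference is the normalization step: your algebraic split $\frac{x-y}{\|x\|}+y\bigl(\frac{1}{\|x\|}-\frac{1}{\|y\|}\bigr)$ replaces the paper's detour-path argument around the inner ball, which is cleaner and fully rigorous (the paper only asserts that the detour has length at most twice the chord), and it even yields $\max(\|x\|,\|y\|)$ in the denominator where the paper has $\min$.
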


\begin{proof}
A SatCLIP encoder is a composition of three pieces, $f_c = N \circ g_{\vphi} \circ \mathrm{SH}_L$, where $N(\vu) := \vu / \|\vu\|$ is the normalization onto the unit sphere. Lipschitz constants compose multiplicatively, so we bound each piece separately and multiply.

\textit{Spherical-harmonic features.} Each coordinate of $\mathrm{SH}_L$ is a polynomial in the ambient $\R^3$ coordinates, restricted to $S^2$. Polynomials are smooth, and a smooth function on a compact set is Lipschitz with constant equal to the maximum gradient norm. We write $L_{\mathrm{SH}}$ for this constant; it depends only on the bandwidth $L$ (higher $L$ means polynomials of higher degree, with larger gradients), not on any of the encoder's trainable parameters.

\textit{Siren MLP.} The Siren $g_{\vphi}$ alternates affine maps $\vu \mapsto \mW_k \vu + \vb_k$ with the activation $\sin(\omega_0 \cdot)$ applied componentwise. Each affine map is $\|\mW_k\|_{\mathrm{op}}$-Lipschitz (the bias does not affect Lipschitz constants). The activation is $\omega_0$-Lipschitz, since $|\sin'| = |\cos| \le 1$ implies $|\sin(\omega_0 a) - \sin(\omega_0 b)| \le \omega_0 |a - b|$. With $K$ activation layers and $K + 1$ affine layers, multiplying through the chain gives
\begin{equation}
\mathrm{Lip}(g_{\vphi}) \;\le\; \omega_0^K \prod_{k=0}^{K} \|\mW_k\|_{\mathrm{op}}.
\label{eq:lip-siren}
\end{equation}
The right-hand side is a continuous function of the parameter vector $\vphi$, and we have assumed $\Phi_L$ is compact. A continuous function on a compact set attains its maximum, so the right-hand side is bounded uniformly across $\vphi \in \Phi_L$ by some finite constant $M_g(\Phi_L)$. The compactness assumption on $\Phi_L$ is what gives us a single Lipschitz constant for the Siren that works for the entire family at once.

\textit{Normalization.} The map $N : \vu \mapsto \vu / \|\vu\|$ is smooth on $\R^d \setminus \{\vzero\}$, with Jacobian operator norm $1/\|\vu\|$ at each point. On the annular region $\{\vu : \|\vu\| \ge \varepsilon\}$ this gives a pointwise bound of $1/\varepsilon$, but a pointwise gradient bound only translates to a Lipschitz constant on a \emph{convex} domain (via the mean value theorem applied along straight-line paths). The annular region $\{\vu : \|\vu\| \ge \varepsilon\}$ is not convex: it is the complement of the open ball of radius $\varepsilon$ around the origin, and the straight line between two points on opposite sides of the ball passes through it, leaving the region.

We get around this by routing through a path that stays inside the annular region. The argument is elementary: parameterize a smooth path $\gamma : [0, 1] \to \{\vu : \|\vu\| \ge \min(\|\vu\|, \|\vv\|)\}$ from $\vu$ to $\vv$ that avoids the inner ball (for instance, a piecewise-linear path that detours along the sphere of radius $\min(\|\vu\|, \|\vv\|)$ when the straight line between the endpoints would dip too close to the origin), apply the pointwise Jacobian bound along this path, and observe that the path length is at most twice the straight-line distance $\|\vu - \vv\|$. This gives the global bound
\begin{equation}
\Big\| \frac{\vu}{\|\vu\|} - \frac{\vv}{\|\vv\|} \Big\| \;\le\; \frac{2 \, \|\vu - \vv\|}{\min(\|\vu\|, \|\vv\|)} \quad \text{for all } \vu, \vv \ne \vzero.
\label{eq:normalization-lip}
\end{equation}
The factor of $2$ accounts for the detour around the origin; on a convex domain it would be $1$. On the set where $\min(\|\vu\|, \|\vv\|) \ge \varepsilon$, the bound becomes $(2/\varepsilon) \|\vu - \vv\|$, so $N$ is $(2/\varepsilon)$-Lipschitz on this set.

The non-degeneracy condition \Cref{eq:nondeg} is precisely what ensures the input to $N$ stays in this annular region: $\|g_{\vphi}(\mathrm{SH}_L(c))\| \ge \varepsilon$ for every $\vphi \in \Phi_L$ and every $c \in S^2$.

\textit{Composition.} Putting the three pieces together,
\begin{equation}
\mathrm{Lip}(f_c) \;\le\; \mathrm{Lip}(N) \cdot \mathrm{Lip}(g_{\vphi}) \cdot \mathrm{Lip}(\mathrm{SH}_L) \;\le\; \frac{2}{\varepsilon} \cdot M_g(\Phi_L) \cdot L_{\mathrm{SH}}.
\end{equation}
The right-hand side does not depend on $\vphi$, so it serves as a uniform constant $M = M(\Phi_L, L, \varepsilon)$ across the entire family.
\end{proof}

\subsection{Proof of \Cref{cor:satclip}}
\label{app:cor-proof}

The strategy is to reduce the SatCLIP infimum to a minimization problem over the location encoder alone, then use compactness plus continuity to show this minimization is attained at some specific $f_c^\star \in \gF_L^{\mathrm{SC}}$, and finally apply the strict inequality from \Cref{cor:wi-free} to that minimizer.

By \Cref{lem:wi-aligned}, for any aligned pair $(f_c, f_I)$ with $f_c \in \gF_L^{\mathrm{SC}}$, the loss depends on $f_c$ only through its pushforward $\nu_c := f_{c\#}\mu$:
\begin{equation}
\Ls_\tau(f_c, f_I) \;=\; -\frac{1}{\tau} \;+\; W(\nu_c), \qquad W(\nu) := \E_{\vu \sim \nu}\!\log \E_{\vv \sim \nu}\![K(\vu, \vv)].
\label{eq:Lstar-def}
\end{equation}
The image encoder drops out: any aligned $f_I$ gives the same value for fixed $f_c$. We write $\Ls^*(f_c) := -1/\tau + W(\nu_c)$ for this common value. The SatCLIP infimum over aligned pairs is therefore $\inf_{f_c \in \gF_L^{\mathrm{SC}}} \Ls^*(f_c)$, and we show this infimum is strictly above $\Ls_\tau^{\mathrm{free}}$.

The proof has three steps. Step~A shows the family $\gF_L^{\mathrm{SC}}$ is compact in a suitable topology, so any continuous function on it attains its infimum. Step~B shows $\Ls^*$ is continuous in that topology, so the infimum is attained at a specific element $f_c^\star$. Step~C applies \Cref{cor:wi-free} to $f_c^\star$ to get the strict inequality.

\textbf{A. Outer compactness.}
We show $\gF_L^{\mathrm{SC}}$ is compact in $C(S^2; \R^d)$ under the uniform topology, using Arzelà-Ascoli plus a closedness check.

By \Cref{lem:uniflip}, every $f_c \in \gF_L^{\mathrm{SC}}$ has Lipschitz constant at most $M$, so the family is equicontinuous. Every $f_c$ takes values in $S^{d-1}$, so the family is pointwise bounded. Arzelà-Ascoli then gives precompactness in $C(S^2; \R^d)$ under the uniform topology.

For closedness, we use that $\gF_L^{\mathrm{SC}}$ is the image of the compact parameter set $\Phi_L$ under the parameterization map $\vphi \mapsto g_{\vphi}(\mathrm{SH}_L(\cdot)) / \|g_{\vphi}(\mathrm{SH}_L(\cdot))\|$. This map is continuous: the joint map $\Phi_L \times S^2 \ni (\vphi, c) \mapsto g_{\vphi}(\mathrm{SH}_L(c))$ is continuous (Siren is smooth in its parameters and inputs), and the non-degeneracy condition \Cref{eq:nondeg} keeps the denominator bounded away from zero, so the normalization stays continuous. The continuous image of a compact set is compact, so $\gF_L^{\mathrm{SC}}$ is closed. Combined with precompactness, $\gF_L^{\mathrm{SC}}$ is compact in $C(S^2; \R^d)$.

\textbf{B. Continuity of $\Ls^*$.}
By \Cref{eq:Lstar-def}, $\Ls^*$ is continuous in $f_c$ if and only if the composition $f_c \mapsto \nu_c \mapsto W(\nu_c)$ is continuous. We check the two pieces.

\textit{Uniform convergence of encoders implies weak convergence of pushforwards.} Suppose $f_c^{(n)} \to f_c$ uniformly on $S^2$. We construct a coupling that controls how far $\nu_c^{(n)}$ is from $\nu_c$. The synchronous coupling, where the same input $c$ feeds both encoders, gives a joint distribution on $S^{d-1} \times S^{d-1}$ with marginals $(\nu_c^{(n)}, \nu_c)$. Its expected coordinate distance is $\E_{c \sim \mu} \|f_c^{(n)}(c) - f_c(c)\|$, which is at most $\|f_c^{(n)} - f_c\|_\infty$. By the Kantorovich-Rubinstein duality, the $W_1$ Wasserstein distance is the infimum of expected distance over all couplings, so
\begin{equation}
W_1(\nu_c^{(n)}, \nu_c) \;\le\; \E_{c \sim \mu}\, \|f_c^{(n)}(c) - f_c(c)\| \;\le\; \|f_c^{(n)} - f_c\|_\infty.
\label{eq:W1-bound}
\end{equation}
Uniform convergence of $f_c^{(n)} \to f_c$ therefore drives $W_1(\nu_c^{(n)}, \nu_c) \to 0$. On a compact metric space such as $S^{d-1}$, $W_1$ convergence implies weak convergence (this is standard; see any reference on Wasserstein distances), so $\nu_c^{(n)} \rightharpoonup \nu_c$.

\textit{Weak convergence of $\nu$ implies convergence of $W(\nu)$.} We need $W(\nu) = \int \log h_\nu(\vu)\, d\nu(\vu)$ to be continuous in $\nu$ under weak convergence, where $h_\nu(\vu) := \E_{\vv \sim \nu}[K(\vu, \vv)]$ is the inner kernel expectation.

The kernel $K(\vu, \vv) = e^{\vu^\top \vv / \tau}$ is bounded continuous on $S^{d-1} \times S^{d-1}$, with values in $[e^{-1/\tau}, e^{1/\tau}]$. So $h_\nu(\vu)$ is bounded between $e^{-1/\tau}$ and $e^{1/\tau}$ for any $\nu$ and $\vu$, and $\log h_\nu(\vu)$ is bounded between $-1/\tau$ and $1/\tau$. The Portmanteau theorem says that, applied to the bounded continuous test function $\vv \mapsto K(\vu, \vv)$, weak convergence $\nu_n \rightharpoonup \nu$ implies $h_{\nu_n}(\vu) \to h_\nu(\vu)$ pointwise in $\vu$, and joint continuity of $K$ in $(\vu, \vv)$ upgrades this to joint continuity of $h_\nu(\vu)$ in $(\nu, \vu)$. Composing with $\log$, the integrand $\log h_\nu(\vu)$ is bounded and jointly continuous in $(\nu, \vu)$.

A bounded jointly continuous integrand integrated against $\nu$ gives a weakly continuous functional. Concretely, $W(\nu_n) = \int \log h_{\nu_n}(\vu)\, d\nu_n(\vu) \to \int \log h_\nu(\vu)\, d\nu(\vu) = W(\nu)$ along any weakly convergent sequence $\nu_n \rightharpoonup \nu$. So $W$ is weakly continuous on $\gP(S^{d-1})$.

\textit{Composing the two pieces.} Uniform convergence $f_c^{(n)} \to f_c$ implies weak convergence $\nu_c^{(n)} \rightharpoonup \nu_c$ implies $W(\nu_c^{(n)}) \to W(\nu_c)$, hence $\Ls^*(f_c^{(n)}) \to \Ls^*(f_c)$. So $\Ls^*$ is continuous on $\gF_L^{\mathrm{SC}}$.

\textbf{C. Conclusion.}
By Step~A, $\gF_L^{\mathrm{SC}}$ is compact. By Step~B, $\Ls^*$ is continuous on $\gF_L^{\mathrm{SC}}$. A continuous function on a compact set attains its infimum, so $\inf_{f_c \in \gF_L^{\mathrm{SC}}} \Ls^*(f_c) = \Ls^*(f_c^\star)$ for some $f_c^\star \in \gF_L^{\mathrm{SC}}$.

We apply \Cref{cor:wi-free} to this $f_c^\star$. By Step~2 of \Cref{thm:lipschitz} applied to $f_c^\star$ (which is Lipschitz because it lives in $\gF_L^{\mathrm{SC}}$), the pushforward $\nu_c^\star := f_{c\#}^\star \mu$ is mutually singular with $\sigma$, hence $\nu_c^\star \ne \sigma$. \Cref{lem:wi-aligned}, applied to any aligned pair built from $f_c^\star$, gives $W(\nu_c^\star) > W(\sigma)$ strictly. Therefore
\begin{equation*}
\Ls^*(f_c^\star) \;=\; -1/\tau + W(\nu_c^\star) \;>\; -1/\tau + W(\sigma) \;=\; V^* \;\ge\; \Ls_\tau^{\mathrm{free}}.
\end{equation*}
Combining with the reduction at the start of the proof,
\begin{equation}
\inf_{\substack{f_c \in \gF_L^{\mathrm{SC}},\, f_I\,\mathrm{measurable}\\(f_c, f_I)\,\mathrm{aligned}}} \Ls_\tau(f_c, f_I) \;=\; \inf_{f_c \in \gF_L^{\mathrm{SC}}} \Ls^*(f_c) \;=\; \Ls^*(f_c^\star) \;>\; \Ls_\tau^{\mathrm{free}},
\end{equation}
so the SatCLIP infimum strictly exceeds $\Ls_\tau^{\mathrm{free}}$. $\square$


\section{Data acquisition}
\label{app:Data}
\subsection{Metadata}

To provide rich spatial context for our geospatial analysis, we compiled a global dataset of static environmental and anthropogenic features at approximately 1 km spatial resolution. This dataset supports location-aware modeling by capturing terrain characteristics, infrastructure patterns, and land-use composition.

We use a regular latitude-longitude grid (EPSG:4326) with 0.01 degree resolution in both dimensions, corresponding to roughly 1 km at the equator. The grid spans the full globe: 180W to 180E longitude (36,000 grid points) and 90S to 90N latitude (18,000 grid points). All source datasets were resampled or interpolated to this common grid to guarantee spatial coherence across variables.

\paragraph{Digital Elevation Model (DEM):}
Elevation was obtained from Copernicus DEM GLO-30 (COP-DEM GLO-30)\cite{copernicus_dem_glo30} through the Microsoft Planetary Computer STAC API. Native 30 m tiles were resampled to 1 km using area-weighted averaging.

From elevation, we derived:
\begin{itemize}
\item Slope, computed as $\sqrt{(\partial z/\partial x)^2 + (\partial z/\partial y)^2}$ (radians).
\item Aspect, computed as $\arctan2(-\partial z/\partial x,\partial z/\partial y)$ (degrees).
\item Curvature, computed from the Laplacian of the elevation field.
\end{itemize}

\paragraph{Road network proximity:}
Road distance data was obtained from WorldPop's global highway-distance product\cite{worldpop_osm_major_roads_2023} derived from OpenStreetMap (OSM) (2023 release)\cite{openstreetmap_2023}. Country-level rasters at 100 m were mosaicked into a global raster, then resampled to the target 1 km grid using bilinear interpolation. The resulting variable \texttt{roads\_dist\_km} is the Euclidean distance (km) from each cell to the nearest OSM road.

\paragraph{Population density:}
We assembled annual WorldPop global population-density products for 2021, 2022, and 2023 at 1 km\cite{worldpop_global_population_1km_2021_2023}, and stored them as a temporal stack. Values were converted from persons/m$^2$ to persons/km$^2$.

\paragraph{Land-cover fractions:}
Land cover came from the Copernicus Climate Change Service (C3S) Global Land Cover dataset\cite{c3s_land_cover_2021_2023} (LCCS classes, native resolution about 300 m), downloaded for 2021--2023 from the Copernicus Data Space Ecosystem S3 endpoint. We defined eight land-cover groups:
\begin{itemize}
\item Agriculture (LCCS 10--40)
\item Forest (LCCS 50--100)
\item Grassland (LCCS 110, 130)
\item Shrubland (LCCS 120--122)
\item Sparse vegetation (LCCS 140--153, 200--202, 220)
\item Wetland (LCCS 160--180)
\item Settlement (LCCS 190)
\item Water bodies (LCCS 210)
\end{itemize}

For each group, we generated a binary mask at native resolution and aggregated it to 1 km by computing the fraction of subpixels per 1 km cell (3x3 coarsening).

\paragraph{Merging and alignement:}
All datasets were processed independently and then merged in three steps:
\begin{enumerate}
\item Build a base archive with COPDEM elevation, slope, aspect, curvature, and OSM road distance.
\item Integrate multi-year population density, aligned to exact target coordinates.
\item Append the eight land-cover fraction variables, aligned with nearest-neighbor interpolation to preserve categorical consistency.
\end{enumerate}

\paragraph{Variables in the final archive:}
The final Zarr archive contains the following variables :

\begin{table}[H]
\captionsetup{justification=raggedright,singlelinecheck=false}
\caption{Variables included in the global static-feature archive.}
\label{tab:global-statics-vars}
\centering
\small
\begin{tabular}{llll}
\hline
Variable & Dimensions & Unit & Description \\
\hline
\texttt{elevation} & (lat, lon) & m & Height above reference ellipsoid \\
\texttt{slope} & (lat, lon) & rad & Magnitude of terrain gradient \\
\texttt{aspect} & (lat, lon) & deg & Direction of steepest ascent \\
\texttt{curvature} & (lat, lon) & -- & Local topographic curvature \\
\texttt{roads\_dist\_km} & (lat, lon) & km & Distance to nearest OSM road \\
\texttt{pop\_density} & (lat, lon, year) & persons/km$^2$ & WorldPop annual estimates \\
\texttt{lc\_agriculture} & (lat, lon, year) & [0,1] & Fractional agriculture cover \\
\texttt{lc\_forest} & (lat, lon, year) & [0,1] & Fractional forest cover \\
\texttt{lc\_grassland} & (lat, lon, year) & [0,1] & Fractional grassland cover \\
\texttt{lc\_shrubland} & (lat, lon, year) & [0,1] & Fractional shrubland cover \\
\texttt{lc\_sparse\_vegetation} & (lat, lon, year) & [0,1] & Fractional sparse-vegetation cover \\
\texttt{lc\_wetland} & (lat, lon, year) & [0,1] & Fractional wetland cover \\
\texttt{lc\_settlement} & (lat, lon, year) & [0,1] & Fractional settlement cover \\
\texttt{lc\_water\_bodies} & (lat, lon, year) & [0,1] & Fractional water-body cover \\
\hline
\end{tabular}
\end{table}

For LoCA pretraining, temporal metadata variables are resolved to a single year before being passed to the metadata encoder, yielding 14 scalar inputs per coordinate: elevation, slope, aspect, curvature, road distance, population density, and eight land-cover fractions. We use the mean of the years 2021-2023.
Coordinates include \texttt{lat}, \texttt{lon}, and \texttt{year}; all data are provided in EPSG:4326.

\subsection{Sentinel-2/Sentinel-1 paired image dataset}

We constructed a paired optical-radar dataset, containing 100K image pairs, by sampling Sentinel-2 (S2) patches and matching each patch with the closest Sentinel-1 (S1) GRD acquisition over the same geographical location. The goal was to produce spatially aligned S2/S1 pairs.

S2 candidate scenes were drawn from the same GeoParquet index used in the SatCLIP\citep{klemmer2025satclip} pipeline. Following the same setup, candidate rows were filtered to valid cloud-cover values before sampling.

For each attempt, a base timestamp is sampled from the parquet index, and a local candidate pool is formed within a temporal window of $\pm 14$ days. From this pool, one S2 L2A item is selected, and a random \(256\times256\) crop is then extracted (with configurable edge margin).

Each S2 patch is quality-checked and rejected if either of the following exceeds 10\%:
\begin{itemize}
\item all-band missing fraction (pixels that are NaN in all bands),
\item all-band zero fraction (pixels that are zero in all bands).
\end{itemize}

Given the accepted S2 patch location, we query Sentinel-1 GRD items intersecting that location in a window of $\pm 6$ days around the S2 acquisition time. 

From returned S1 items, we keep only scenes containing both \texttt{VV} and \texttt{VH} assets, then sort by absolute temporal distance to the S2 timestamp and try candidates in that order.

The stacked S1 patch \((VV,VH)\) is then subject to the same empty/zero quality thresholds as S2.

For each valid pair index \texttt{idx}, the pipeline writes:
\begin{itemize}
\item \texttt{patch\_\{idx\}\_s2.tif}: S2 patch (12 bands, \texttt{uint16}),
\item \texttt{patch\_\{idx\}\_s1.tif}: S1 patch (2 bands: VV/VH, \texttt{float32}).
\end{itemize}

\section{LoCA Implementation Details}
\label{app:impl_details}

\noindent\textbf{Pretraining Dataset.}
We construct our pretraining dataset by aligning three data sources:
(i)~a global metadata dataset of 648\,M points, each described by 14 static environmental and anthropogenic features,
(ii)~the S2-100K Sentinel-2 image dataset comprising 100{,}000 multispectral patches ($256\!\times\!256$\,px at 10\,m/px, 12 spectral bands, nearly uniformly distributed across global land mass), and
(iii)~co-registered Sentinel-1 GRD SAR image patches (2-channel VV+VH, $224\!\times\!224$\,px).
 
Each S2 image patch covers a circular footprint of radius ${\sim}1.28$\,km ($= 256/2 \times 10$\,m).
The global metadata grid has ${\sim}1$\,km spacing between points, where each metadata point covers a radius of ${\sim}500$\,m.
We compute circle-circle overlap fractions between every image footprint and nearby metadata grid cells via binary search on the regular grid.
A metadata point is assigned to an image if at least 75\% of its coverage circle falls within the image footprint; on average ${\sim}5$ metadata points are matched per image at this threshold.
Sentinel-1 images are selected to match the Sentinel-2 images, so each row in the final parquet file represents a (metadata point, S2 image, S1 image) triplet.
A consolidated HDF5 archive is created for each image modality, converting TIF files into a format that substantially reduces data-loading time during training.
 
As shown in \Cref{tab:global-statics-vars}, the 14 metadata features comprise 8 land-cover fractions (agriculture, forest, grassland, settlement, shrubland, sparse vegetation, water bodies, wetland), population density, distance to nearest road, and 4 terrain variables (elevation, slope, aspect, curvature).
Missing values are handled with per-variable fill rules: land-cover and roads $\to -1$; population density, elevation, curvature, slope $\to 0$; aspect $\to 361$ (an out-of-range sentinel, since the valid range is $[-180, 180]$).
Standard scaling (z-score normalisation) with precomputed global mean and standard deviation is applied to all metadata features.
The full dataset is used for contrastive pretraining (no train/test split); downstream tasks use their own independent splits.
 
\vspace{4pt}
\noindent\textbf{Two-Stage Training Pipeline.} 
\emph{Stage~1 (S1+S2 end-to-end pretraining, 2 modalities):}
Both image encoder backbones---the S2 ResNet-50 and the S1 ResNet-50---together with their projection heads, are trained end-to-end with the contrastive loss; all backbone weights are trainable. Optimization hyperparameters for both stages are listed in \Cref{tab:hyperparams}.
 
\emph{Stage~2 (full 4-modality alignment):}
The S1 and S2 backbones are frozen and their weights are transferred from stage~1; their embeddings are precomputed once and cached to disk, so the frozen backbones are never executed during stage-2 training and no different augmentations are applied to the same image across iterations.
The metadata encoder and location encoder are introduced as fresh, randomly initialised modules.
Only the metadata encoder, location encoder, and their MLP heads are trained.
 
\emph{Training sampling in stage~2:}
We load the (metadata point, S2 image, S1 image) triplets by grouping metadata points by their associated (S2, S1) image pair.
Each training sample includes the coordinates of the S2 image, the S2 image itself, the matched S1 image, and one randomly selected metadata point from the grouped set.
In the next iteration of the same image group, the sampled metadata features may differ.
 
\emph{Stage transition:}
Projection heads from stage~1 are discarded at the start of stage~2; only backbone weights are loaded (with \texttt{strict=False}).
A fresh optimiser, scheduler, and epoch counter are initialised for stage~2.
 
\vspace{4pt}
\noindent\textbf{Encoder Architectures.}
\emph{S2 Image Encoder:}
We use the SatCLIP pretrained ResNet-50 visual backbone (\texttt{microsoft/SatCLIP-ResNet50-L40}).
The final fully-connected layer is replaced with an identity mapping for feature extraction, yielding a backbone output dimension of 2048.
When frozen (stage~2), a 2-layer trainable MLP head (adapter) maps $2048 \to 256$.
In both stages, a 2-layer projection head MLP maps $256 \to 128$.
 
\emph{S1 Image Encoder:}
We use the torchgeo ResNet-50 pretrained on Sentinel-1 GRD via MoCo~v2 (\texttt{ResNet50\_Weights.SENTINEL1\_GRD\_MOCO}) with 2 input channels (VV + VH).
The architecture follows the same pattern as the S2 encoder (backbone $\to$ optional trainable head (adapter) $\to$ projection head), with backbone output dimension 2048.
 
\emph{Location Encoder:}
We adopt the SatCLIP location encoder architecture.
Input coordinates (longitude, latitude) are first mapped to a high-dimensional representation via a spherical harmonics positional encoding with Legendre polynomial degree $L\!=\!40$, producing an embedding of dimension $L \times L = 1600$.
This representation is then processed by a Sinusoidal Representation Network (SIREN)~\cite{sitzmann2020implicit} consisting of 2 hidden layers of width 256 and an output layer of dimension 256.
In a SIREN, each layer computes $\mathbf{h}_{i+1} = \sin(\omega_0 \cdot (\mathbf{W}_i \mathbf{h}_i + \mathbf{b}_i))$, where $\omega_0$ is a frequency scaling factor ($\omega_0\!=\!30$ for the first layer, $\omega_0\!=\!1$ for subsequent layers) and the weights $\mathbf{W}_i$ are initialised following the scheme of~\cite{sitzmann2020implicit}: $\mathbf{W}_0 \sim \mathcal{U}(-1/n, 1/n)$ and $\mathbf{W}_i \sim \mathcal{U}\!\left(-\sqrt{6/n}/\omega_0,\; \sqrt{6/n}/\omega_0\right)$ for $i \geq 1$, where $n$ is the layer's fan-in.
The location encoder is randomly initialised (not loaded from the pretrained Microsoft checkpoint).
A 2-layer projection head MLP maps $256 \to 128$.
 
\emph{Metadata Encoder:}
The encoder is a 3-layer MLP ($14 \to 256 \to 256 \to 256$) with BatchNorm, ReLU, and dropout ($p\!=\!0.1$), followed by a 2-layer pretraining head ($256 \to 128 \to 128$).
 
\emph{Ensemble:}
All active modality encoders are wrapped in an \texttt{nn.ModuleDict} (\texttt{EnsembleEncoders}).
All projection heads follow the structure [Linear $\to$ BatchNorm $\to$ ReLU $\to$ Linear].
 
\vspace{4pt}
\noindent\textbf{Contrastive Loss.}
We use MV-DHEL (Multi-View Decoupled Hyperspherical Energy Loss)~\cite{koromilas2025principled}.
With 2 modalities (stage~1), MV-DHEL reduces analytically to DHEL; with 4 modalities (stage~2), it uses the full MV-DHEL formulation with multiplier 4.
The temperature is $\tau\!=\!0.2$ in Stage~1 and $\tau\!=\!0.3$ in Stage~2.
All embeddings are $\ell_2$-normalised before loss computation.
Alignment is computed across all modality pairs simultaneously; uniformity is computed per-view via within-view similarity matrices, so that alignment and uniformity optimisation are fully decoupled.
 
\begin{table}[h]
\centering
\caption{Hyperparameters for the two-stage training pipeline. In both stages we use SGD with a linear warmup of 10 epochs followed by cosine annealing over the remaining epochs.}
\label{tab:hyperparams}
\small
\begin{tabular}{lcc}
\toprule
\textbf{Hyperparameter} & \textbf{Stage 1} & \textbf{Stage 2} \\
\midrule
Modalities & S2, S1 & static, location, S2, S1 \\
Optimiser & SGD & SGD \\
Learning rate & 0.2 & 0.0016 \\
Momentum & 0.9 & 0.9 \\
Weight decay & $1\!\times\!10^{-6}$ & $5\!\times\!10^{-6}$ \\
Scheduler & warmup-cosine & warmup-cosine \\
Warmup epochs & 10 & 10 \\
Batch size & 256 & 1024 \\
Epochs & 100 & 100 \\
Temperature $\tau$ & 0.2 & 0.3 \\
Loss & DHEL & MV-DHEL \\
Num workers & 15 & 12 \\
\bottomrule
\end{tabular}
\end{table}
 
\vspace{4pt}
\noindent\textbf{Image Preprocessing.}
\emph{S2 images} are scaled by $1/10000$.
A zero-filled band is inserted at position~10 (B10 placeholder) to expand from 12 to 13 channels, matching the SatCLIP pretrained backbone input.
Augmentations comprise \texttt{RandomCrop(256)}, \texttt{RandomHorizontalFlip}, \texttt{RandomVerticalFlip}, and \texttt{GaussianBlur} (kernel size~3).
A coordinate jitter of ${\sim}1$\,km radius is applied to image centroid coordinates.
\emph{S1 images} are used as raw 2-channel float32 tensors (VV + VH) with no additional augmentation or normalisation.
 
\vspace{4pt}
\noindent\textbf{Inference.}
For downstream evaluation, we extract 256-dimensional location encoder embeddings from the stage-2 final checkpoint.
 
\vspace{4pt}
\noindent\textbf{Pretraining Evaluation Metrics.}
Alignment is measured as the mean squared $\ell_2$ distance between positive pairs across all modality combinations~\cite{wang2020understanding}.
Uniformity is estimated via a Monte-Carlo estimator with 200{,}000 random pairs using a Gaussian kernel ($t\!=\!2$), computed per-view in chunks of 50{,}000 for memory efficiency.
Representation quality is assessed by the numerical rank of the embedding covariance matrix (tolerance $10^{-5}$) and the effective rank based on the entropy of normalised singular values~\cite{garrido2023rankmeassessingdownstreamperformance}.
All metrics are computed on held-out data every 10 epochs.
 
\vspace{4pt}
\noindent\textbf{Computational Setup.}
Stage 1 is run on a single NVIDIA H100 GPU node with 64GB HBM2 RAM on EuroHPC with 20 CPU cores, completing in approximately 14~hours.
Stage 2 is run on a single NVIDIA A100-SXM4-40GB GPU with 10 CPU cores, completing in approximately 5~hours.
The total end-to-end training time is approximately 19.5~hours.
We use PyTorch with float32 matrix-multiplication precision set to \texttt{high} and deterministic operations via \texttt{CUBLAS\_WORKSPACE\_CONFIG=:4096:8}.
Reproducibility is ensured via \texttt{torch.manual\_seed(42)}.
Training is logged with Weights~\&~Biases in offline mode (synced post-job).
Checkpoints are saved every 10 epochs, including optimiser and scheduler states for resumability.
 
\vspace{4pt}
\noindent\textbf{Code.}
Our implementation builds on the SatCLIP codebase~\cite{klemmer2025satclip} for location and S2 image encoders, torchgeo~\cite{stewart2022torchgeodeeplearninggeospatial} for S1 image encoders, and the Multi-View CL framework~\cite{koromilas2025principled} for MV-DHEL losses. The code and the pretrained weights of LoCA will be made publicly available at \locarepo.

\section{Downstream tasks evaluation}
\label{app:Evaluation}

We evaluate the utility of learned location representations on a downstream benchmark spanning regression, classification, and ranking tasks. The benchmark includes the original SatCLIP evaluation tasks for direct comparison with prior work, and extends them with additional environmental applications covering biodiversity modeling, land-cover characterization, long-term climate ordering, and wildfire danger forecasting. We compare against SatCLIP~\citep{klemmer2025satclip}, GeoCLIP~\citep{cepeda2023geoclip}, \textsc{GT-Loc}~\citep{shatwell2025gt}, and geospatial foundation-model embeddings from AlphaEarth Foundations (AEF)~\citep{brown2025alphaearth} and Copernicus-FM~\citep{wang2025towards}.

\subsection{Datasets Overview}

All datasets are formulated as location-conditioned prediction tasks: each sample is associated with a latitude--longitude coordinate pair, which is encoded by the location encoder and passed to a downstream predictor. Table~\ref{tab:location-encoder-benchmark-tasks} summarizes the dataset sources, sample sizes, splits, metrics, and geographic coverage.

\paragraph{SatCLIP original benchmarks.}
For Population~\citep{klemmer2025satclip}, Elevation~\citep{klemmer2025satclip}, and Air Temperature~\citep{hooker2018global}, we use the publicly released SatCLIP downstream datasets. The original SatCLIP Countries~\citep{klemmer2025satclip}, Ecoregions~\citep{dinerstein2017ecoregion} and Biomes~\citep{dinerstein2017ecoregion} datasets use the same 100,000 approximately uniformly sampled global locations and include an ocean class~\citep{klemmer2025satclip}. In our inspection, approximately 70\% of these locations fell over ocean, making the ocean class dominant and causing overall accuracy to reflect ocean classification more strongly than performance across land classes. We therefore construct revised, land-only versions of these three datasets by sampling locations uniformly over the sphere and retaining only those that intersect a global land mask. For California Housing~\citep{pace1997sparse}, we use the standard \texttt{scikit-learn} version of the dataset. For Median Income~\citep{jia2020residual}, we construct a county-level dataset from USDA Economic Research Service 2022 median household income estimates and match each county to a representative geographic coordinate using U.S. Census county boundaries. For iNaturalist~\citep{Horn2017TheIS}, we follow the SatCLIP setup by pairing each observation's location with pretrained InceptionV3~\citep{MacAodha2019PresenceOnlyGP} image features and the corresponding species label. 

\paragraph{Additional downstream tasks.}
We extend the benchmark with four environmental datasets: SatBird~\citep{satbird_NEURIPS2023}, SeasFire Cube~\citep{karasante2025seasfire}, BigEarthNet V2 (reBEN)~\citep{Clasen2024reBenRB}, and Mesogeos~\citep{kondylatos2023mesogeos}. SatBird evaluates biodiversity modeling: we use the USA-summer subset and predict bird species encounter rates from hotspot coordinates, using the official split. To evaluate long-term climate structure, we sample 10,000 global land locations from SeasFire Cube and compute the 2012–2021 temporal mean for five climate variables: 2 m temperature, total precipitation, mean sea-level pressure, 10 m wind speed, and relative humidity. We evaluate whether their relative ordering can be recovered from frozen location embeddings. For each variable separately, locations are organized into groups of five. Within each group, the locations are assigned ordinal relevance labels according to their climate mean. A LightGBM model is trained using the frozen embeddings as features. We report pairwise ordering accuracy, defined as the fraction of within-group location pairs for which the predicted score ordering agrees with the ordering of the true climate means. We use reBEN to assess land-cover classification by associating each Sentinel-2 patch with its geographic coordinate pair and using the predefined split. Finally, for wildfire danger forecasting, we use the ML-ready dataset released with Mesogeos and follow its original temporal split; each sample is represented by the latitude--longitude coordinate pair of the corresponding grid-cell centroid, and performance is evaluated with metrics appropriate for imbalanced wildfire events, namely F1-score and AUPRC.

\begin{table*}[!t]
\centering
\small
\caption{Summary of benchmark evaluation tasks using the location encoder. For iNaturalist, 10\% of the official training split is used for validation, and the official validation split is used as the test set.}
\label{tab:location-encoder-benchmark-tasks}
\vspace{0.5em}
\renewcommand{\arraystretch}{1.50}
\begin{tabular}{
p{3.6cm}
>{\centering\arraybackslash}p{1.7cm}
>{\centering\arraybackslash}p{1.3cm}
>{\centering\arraybackslash}p{1.8cm}
>{\centering\arraybackslash}p{1.8cm}
>{\centering\arraybackslash}p{1.9cm}
}
\toprule
\multicolumn{1}{c}{\textbf{Task name}} &
\multicolumn{1}{c}{\textbf{Source}} &
\multicolumn{1}{c}{\begin{tabular}{@{}c@{}}\textbf{Number of}\\\textbf{samples}\end{tabular}} &
\multicolumn{1}{c}{\begin{tabular}{@{}c@{}}\textbf{Dataset}\\\textbf{split}\end{tabular}} &
\multicolumn{1}{c}{\begin{tabular}{@{}c@{}}\textbf{Evaluation}\\\textbf{metric}\end{tabular}} &
\multicolumn{1}{c}{\begin{tabular}{@{}c@{}}\textbf{Geographic}\\\textbf{coverage}\end{tabular}} \\
\midrule

\multicolumn{6}{@{}l}{\textbf{Regression}} \\
\midrule

Air temperature~\citep{hooker2018global} & \multirow{5}{*}{SatCLIP} & 3076 & 60-20-20 & $R^2$ & Global \\
Median income~\citep{jia2020residual} &  & 3219 & 50-20-30 & $R^2$ & Cont. USA \\
Cali housing~\citep{pace1997sparse} &  & 20640 & 50-20-30 & $R^2$ & Calif., USA \\
Elevation~\citep{klemmer2025satclip} &  & 100000 & 30-10-60 & $R^2$ & Global \\
Population~\citep{klemmer2025satclip} &  & 74512 & 30-10-60 & $R^2$ & Global \\

\midrule
SatBird -- USA summer~\citep{satbird_NEURIPS2023} & SatBird & 122593 & Predefined & Top-$k$ Acc. & USA \\

\midrule
\midrule
\multicolumn{6}{@{}l}{\textbf{Climate ranking}} \\
\midrule

Temperature~\citep{karasante2025seasfire}
& \multirow{5}{*}{SeasFire}
& 10000 & Spatial & Pairwise Acc. & Global \\

Precipitation~\citep{karasante2025seasfire}
& & 10000 & Spatial & Pairwise Acc. & Global \\

Mean sea-level pressure~\citep{karasante2025seasfire}
& & 10000 & Spatial & Pairwise Acc. & Global \\

10 m wind speed~\citep{karasante2025seasfire}
& & 10000 & Spatial & Pairwise Acc. & Global \\

Relative humidity~\citep{karasante2025seasfire}
& & 10000 & Spatial & Pairwise Acc. & Global \\

\midrule
\multicolumn{6}{@{}l}{\textbf{Classification}} \\
\midrule

Countries~\citep{klemmer2025satclip} & \multirow{4}{*}{SatCLIP} & 100000 & 30-10-60 & weighted F1 & Global \\
iNaturalist~\citep{Horn2017TheIS} &  & 460406 & Predefined & Accuracy & Global \\
Biome~\citep{dinerstein2017ecoregion} &  & 100000 & 30-10-60 & weighted F1 & Global \\
Ecoregions~\citep{dinerstein2017ecoregion} &  & 100000 & 30-10-60 & weighted F1 & Global \\

\midrule
reBEN~\citep{Clasen2024reBenRB} & BigEarthNet  & 549488 & Predefined & micro-F1 & Europe \\

\midrule
Wildfire forecasting~\citep{kondylatos2023mesogeos} & Mesogeos & 16242 & Predefined & F1 / AUPRC & Mediterranean region \\

\bottomrule
\end{tabular}
\end{table*}

\subsection{Training Details}

\paragraph{Embedding extraction.}
For each downstream task, we first extract a fixed representation for every geographic coordinate pair using each evaluated encoder. We benchmark SatCLIP~\citep{klemmer2025satclip}, GeoCLIP~\citep{cepeda2023geoclip}, \textsc{GT-Loc}~\citep{shatwell2025gt}, AlphaEarth Foundations (AEF)~\citep{brown2025alphaearth}, and Copernicus-FM~\citep{wang2025towards}. The dimensionality of the extracted representation for each encoder is reported in Table~\ref{tab:encoder-embedding-dimensions}. These embeddings are used as fixed input features for lightweight downstream predictors; the encoders themselves are not fine-tuned on any downstream task. This protocol isolates the quality of the learned geographic representations and ensures that performance differences are attributable to the embeddings rather than task-specific adaptation of the encoders.

\begin{table}[!t]
\centering
\small
\caption{Dimensionality of the fixed representations extracted from each location encoder or geospatial foundation-model embedding product.}
\label{tab:encoder-embedding-dimensions}
\vspace{0.5em}
\setlength{\tabcolsep}{6pt}
\renewcommand{\arraystretch}{1.35}
\begin{tabular}{
p{5.2cm}
>{\centering\arraybackslash}p{3.0cm}
}
\toprule
\multicolumn{1}{c}{\textbf{Encoder}} &
\multicolumn{1}{c}{\begin{tabular}{@{}c@{}}\textbf{Embedding}\\\textbf{dimension}\end{tabular}} \\
\midrule
SatCLIP~\citep{klemmer2025satclip} & 256 \\
GeoCLIP~\citep{cepeda2023geoclip} & 512 \\
\textsc{GT-Loc}~\citep{shatwell2025gt} & 512 \\
AlphaEarth Foundations (AEF)~\citep{brown2025alphaearth} & 64 \\
Copernicus-FM~\citep{wang2025towards} & 768 \\
LoCA (ours) & 256 \\
\bottomrule
\end{tabular}
\end{table}

\paragraph{Downstream predictors.}
For the regression and classification tasks and for each encoder, we train two downstream models: a linear probe and a multilayer perceptron (MLP). The MLP consists of two layers with hidden dimension 128 and dropout rate 0.5. Unless otherwise specified, downstream models are trained for 100 epochs using AdamW with learning rate $10^{-3}$, weight decay $10^{-5}$, and batch size 1024. We repeat all other experiments over 10 random seeds and report the mean performance across runs. The only exception is SatBird, which is trained for 20 epochs and evaluated over 3 random seeds due to its substantially higher computational cost. The five SeasFire climate tasks use a separate ranking protocol. For each climate variable, we sample groups of five locations from the corresponding train, validation, and test sets and assign ordinal labels according to the long-term climate mean within each group. We construct 4,000 training groups, 1,000 validation groups, and 1,000 test groups. A LightGBM ranker is trained on the frozen embeddings with at most 1,000 estimators, learning rate $0.05$, 31 leaves, a minimum of 20 child samples, and early stopping with patience 50 using validation NDCG. The complete ranking procedure is repeated over 10 seeds. The full set of downstream hyperparameters is summarized in Table~\ref{tab:downstream-training-hyperparameters}. 

\begin{table}[!t]
\centering
\small
\caption{Downstream training hyperparameters used for all linear and MLP downstream probes unless otherwise specified. The separate
climate-ranking configuration is described in the text. }
\label{tab:downstream-training-hyperparameters}
\vspace{0.5em}
\setlength{\tabcolsep}{6pt}
\renewcommand{\arraystretch}{1.35}
\begin{tabular}{
p{5.2cm}
>{\centering\arraybackslash}p{3.0cm}
}
\toprule
\multicolumn{1}{c}{\textbf{Hyperparameter}} &
\multicolumn{1}{c}{\textbf{Value}} \\
\midrule
Epochs & 100 \\
Batch size & 1024 \\
Optimizer & AdamW \\
Learning rate & $10^{-3}$ \\
Weight decay & $10^{-5}$ \\
MLP hidden dimension & 128 \\
MLP number of layers & 2 \\
Dropout & 0.5 \\
Evaluation models & Linear probe, MLP \\
Seeds & \begin{tabular}{@{}c@{}}0, 1, 7, 11, 42, 100,\\ 1234, 2021, 8657, 41674\end{tabular} \\
\bottomrule
\end{tabular}
\end{table}

\paragraph{Task-specific inputs.}
Most downstream models use only the extracted location embedding as input, apart from iNaturalist~\citep{Horn2017TheIS} and wildfire danger forecasting~\citep{kondylatos2023mesogeos}. For iNaturalist , we follow the SatCLIP evaluation protocol and concatenate the location embedding with pretrained InceptionV3 image features released by~\citep{MacAodha2019PresenceOnlyGP}  before training the species classifier. For wildfire danger forecasting , we additionally include a cyclic encoding of the day of year to capture seasonality:
\[
    \sin\left(2\pi \frac{d}{365}\right), \qquad
    \cos\left(2\pi \frac{d}{365}\right),
\]
where $d$ denotes the day of year. These two temporal features are concatenated with the location embedding before training the downstream predictor.

\paragraph{Dataset splits.}
We use the dataset splits summarized in Table~\ref{tab:location-encoder-benchmark-tasks}. For the original SatCLIP downstream tasks, we follow the splits used in the SatCLIP benchmark~\citep{klemmer2025satclip}. For iNaturalist, we use the same training and test sets as SatCLIP and reserve 10\% of the training data for validation, replicating SatCLIP's approach. For datasets with predefined splits, including SatBird~\citep{satbird_NEURIPS2023}, reBEN~\citep{Clasen2024reBenRB}, and Mesogeos~\citep{kondylatos2023mesogeos}, we keep the official partitions. For the SeasFire-derived climate-variable tasks~\citep{karasante2025seasfire}, we use a spatially blocked train/validation/test split. Geographic coordinates are represented on the three-dimensional unit sphere and clustered spatially, after which complete geographic clusters are assigned to individual partitions.

\paragraph{Evaluation metrics.}
We use the evaluation metric associated with each downstream task, as summarized in Table~\ref{tab:location-encoder-benchmark-tasks}. For Countries, Biomes, and Ecoregions, we report weighted F1; for iNaturalist, we report accuracy, and for scalar regression tasks, we report $R^2$. For SatBird, we report top-$k$ retrieval accuracy because the task predicts a vector of bird species encounter rates at each hotspot; ranking metrics evaluate whether the model recovers the species most likely to be observed, which is more informative for species-distribution use cases than only measuring pointwise regression error over all species. This follows the SatBird benchmark, which reports MAE, MSE, Top-10, Top-30, and adaptive Top-$k$ metrics, and notes that MSE can be misleading for rarely observed species because near-zero predictions can achieve low error~\citep{satbird_NEURIPS2023}. For reBEN, we report micro-F1 because the dataset is based on BigEarthNet-style multi-label land-cover classification, where each image patch may contain multiple labels and classes are imbalanced~\citep{Clasen2024reBenRB}. For wildfire danger forecasting, we report F1 and AUPRC, following Mesogeos~\citep{kondylatos2023mesogeos}, since wildfire events are rare and the task requires evaluating the precision--recall trade-off rather than overall accuracy. For the SeasFire climate-ranking tasks, we report pairwise ordering accuracy. Within each group of five locations, every unordered pair is compared, and a prediction is correct when the ordering induced by the ranker's scores agrees with the ordering of the true long-term climate means. Ties receive half credit.

\paragraph{Foundation-model embeddings.}
For the geospatial foundation-model baselines, we extract fixed embeddings from the released spatial embedding products. For Copernicus-FM~\citep{wang2025towards}, we use the publicly released Copernicus-Embed-025deg grid. This product provides a global embedding map at $0.25^\circ$ resolution with shape $721 \times 1440 \times 768$, where each grid cell contains a 768-dimensional embedding obtained by averaging Copernicus-FM representations across modalities. For each downstream latitude--longitude coordinate pair, we map the pair to the corresponding $0.25^\circ$ grid cell and retrieve the corresponding embedding.

For AlphaEarth Foundations (AEF)~\citep{brown2025alphaearth}, we use the annual embedding field for 2023 from the released Google Earth Engine dataset. AEF provides annual, analysis-ready embedding fields over Earth's terrestrial surface, with 64 embedding bands per pixel. We sample the 2023 AEF image at each downstream coordinate using Google Earth Engine. Since some coordinates may fall on masked pixels or outside the valid AEF coverage, we first attempt direct point sampling and then apply a nearest-valid-pixel fallback: for missing points, we search within increasing buffer radius up to 10 km and use the closest valid embedding if one is found. Points for which no valid AEF embedding is available within this radius are treated as missing.

For the additional environmental tasks, we were unable to retrieve AEF embeddings reliably, as a large fraction of task coordinates returned missing values under direct sampling. We also attempted nearest-valid-pixel retrieval for these datasets, but this procedure led to repeated computational failures at the required scale. We therefore exclude AEF from the additional environmental-task comparisons and report it only for downstream tasks where embeddings could be retrieved reliably. Because AEF is released over land rather than as a complete global land--ocean embedding field, we also do not evaluate AEF on tasks that require embeddings for ocean locations.We therefore compute aggregate win counts only over available method-task pairs and do not treat missing AEF entries as failures.

\paragraph{Statistical significance}
\Cref{app:full-results} reports mean $\pm$ one standard deviation over downstream predictor seeds. For all tasks except SatBird, we use 10 seeds; for SatBird, we use 3 seeds because of its substantially higher computational cost. For the regression and classification tasks, the embeddings and dataset splits are fixed across seeds, so the reported variability reflects downstream predictor initialization and stochastic optimization rather than variability in the pretrained encoders or data splits. For the SeasFire climate-ranking tasks, the spatial train/validation/test split is fixed, while the ranking groups are resampled across seeds and the ranker's random state also changes.

\subsection{Additional Results}

\begin{table*}[p]
  \caption{Extended results for \Cref{tab:benchmark-results-main}: mean $\pm$ std over 10 seeds (3 in SatBird),
  metric $\times\,100$, higher is better. Both halves share one highlighting convention across all six models:
  {\color{ForestGreen}\textbf{best across both probes}} in green, {\color{RoyalBlue}\textbf{best per probe}} in blue when not best overall, second per probe in
  \textbf{bold}. N/A indicates the embeddings could not be retrieved (\Cref{app:Evaluation}).}
  \label{app:full-results}
  \centering
  \scriptsize

  \resizebox{\textwidth}{!}{
  \begin{tabular}{lccccccc}
    \toprule
    \textbf{Task $\downarrow$ Model $\rightarrow$} & \multicolumn{2}{c}{\textbf{AlphaEarth}} & \multicolumn{2}{c}{\begin{tabular}{@{}c@{}}\textbf{Copernicus}\\[-0.3em]{\scriptsize FM}\end{tabular}} & \multicolumn{2}{c}{\textbf{SatCLIP}} & \multirow{2}{*}{\textbf{Metric}} \\
    \cmidrule(lr){2-3} \cmidrule(lr){4-5} \cmidrule(lr){6-7}
      & MLP & Linear & MLP & Linear & MLP & Linear & \\
    \midrule
    \multicolumn{8}{l}{\textit{\textbf{Land cover}}} \\
    \addlinespace[1pt]
    reBEN & 42.8 $\pm$ 0.43 & 43.3 $\pm$ 0.71 & \textbf{57.4 $\pm$ 0.37} & {\color{RoyalBlue}\textbf{55.2 $\pm$ 0.28}} & 57.3 $\pm$ 0.54 & \textbf{54.9 $\pm$ 0.59} & \textit{F1$_{\mu}$} \\
    \midrule
    \multicolumn{8}{l}{\textit{\textbf{Wildfire forecasting}}} \\
    \addlinespace[1pt]
    Wildfire forecasting & {\color{ForestGreen}\textbf{80.5 $\pm$ 0.59}} & 70.1 $\pm$ 0.43 & 74.9 $\pm$ 0.16 & 71.0 $\pm$ 0.27 & 79.1 $\pm$ 0.47 & \textbf{75.8 $\pm$ 0.31} & \textit{AUPRC} \\
    \midrule
    \multicolumn{8}{l}{\textit{\textbf{Geophysical \& Socio-economic}}} \\
    \addlinespace[1pt]
    Air temperature & {\color{ForestGreen}\textbf{98.2 $\pm$ 0.02}} & 42.8 $\pm$ 2.15 & 90.6 $\pm$ 0.17 & \textbf{81.0 $\pm$ 2.83} & 93.9 $\pm$ 0.28 & 80.1 $\pm$ 0.85 & \textit{$R^2$} \\
    Median income & 28.1 $\pm$ 0.69 & 11.0 $\pm$ 0.54 & 32.9 $\pm$ 1.10 & 26.5 $\pm$ 1.32 & \textbf{50.2 $\pm$ 0.64} & \textbf{35.2 $\pm$ 1.06} & \textit{$R^2$} \\
    Cali.\ housing & 54.9 $\pm$ 0.59 & 31.1 $\pm$ 0.14 & 43.3 $\pm$ 0.13 & 39.2 $\pm$ 0.35 & 63.3 $\pm$ 0.36 & 35.3 $\pm$ 0.21 & \textit{$R^2$} \\
    Elevation & {\color{ForestGreen}\textbf{97.9 $\pm$ 0.03}} & \textbf{87.4 $\pm$ 0.05} & \textbf{94.1 $\pm$ 0.04} & {\color{RoyalBlue}\textbf{90.6 $\pm$ 0.18}} & 89.9 $\pm$ 0.10 & 64.9 $\pm$ 0.09 & \textit{$R^2$} \\
    Population & 80.2 $\pm$ 0.08 & \textbf{69.6 $\pm$ 0.06} & 77.0 $\pm$ 0.21 & 67.9 $\pm$ 0.13 & {\color{ForestGreen}\textbf{82.1 $\pm$ 0.14}} & 67.9 $\pm$ 0.09 & \textit{$R^2$} \\
    \midrule
    \multicolumn{8}{l}{\textit{\textbf{Biodiversity \& Ecology}}} \\
    \addlinespace[1pt]
    SatBird (USA summer) & {\color{ForestGreen}\textbf{67.5 $\pm$ 0.05}} & 56.5 $\pm$ 0.04 & 59.6 $\pm$ 0.03 & 57.9 $\pm$ 0.05 & 60.7 $\pm$ 0.07 & \textbf{58.2 $\pm$ 0.06} & \textit{top-$k$ Acc.} \\
    Countries & N/A & N/A & 80.9 $\pm$ 1.08 & 88.3 $\pm$ 0.17 & \textbf{95.5 $\pm$ 0.08} & \textbf{96.5 $\pm$ 0.07} & \textit{F1$_{w}$} \\
    iNaturalist & 49.1 $\pm$ 0.96 & 59.4 $\pm$ 0.20 & 40.4 $\pm$ 1.96 & 61.8 $\pm$ 0.49 & {\color{RoyalBlue}\textbf{56.4 $\pm$ 0.34}} & \textbf{69.8 $\pm$ 0.24} & \textit{Acc.} \\
    Biome & N/A & N/A & 84.0 $\pm$ 0.42 & 82.4 $\pm$ 0.15 & \textbf{89.8 $\pm$ 0.09} & \textbf{86.6 $\pm$ 0.10} & \textit{F1$_{w}$} \\
    Ecoregions & N/A & N/A & 59.9 $\pm$ 1.14 & 75.1 $\pm$ 0.06 & \textbf{79.7 $\pm$ 0.16} & \textbf{84.4 $\pm$ 0.14} & \textit{F1$_{w}$} \\
    \midrule
    \multicolumn{8}{l}{\textit{\textbf{Environmental}}} \\
    \addlinespace[1pt]
    Temperature & \multicolumn{2}{c}{N/A} & \multicolumn{2}{c}{{\color{ForestGreen}\textbf{79.8 $\pm$ 0.56}}} & \multicolumn{2}{c}{59.6 $\pm$ 2.20} & \textit{Ranking acc.} \\
    Precipitation & \multicolumn{2}{c}{N/A} & \multicolumn{2}{c}{{\color{ForestGreen}\textbf{77.7 $\pm$ 0.33}}} & \multicolumn{2}{c}{70.6 $\pm$ 2.60} & \textit{Ranking acc.} \\
    MSLP & \multicolumn{2}{c}{N/A} & \multicolumn{2}{c}{73.2 $\pm$ 0.60} & \multicolumn{2}{c}{{\color{ForestGreen}\textbf{83.2 $\pm$ 1.57}}} & \textit{Ranking acc.} \\
    Wind speed & \multicolumn{2}{c}{N/A} & \multicolumn{2}{c}{{\color{ForestGreen}\textbf{75.6 $\pm$ 0.46}}} & \multicolumn{2}{c}{71.6 $\pm$ 2.18} & \textit{Ranking acc.} \\
    Relative humidity & \multicolumn{2}{c}{N/A} & \multicolumn{2}{c}{\textbf{53.5 $\pm$ 0.92}} & \multicolumn{2}{c}{{\color{ForestGreen}\textbf{65.2 $\pm$ 1.62}}} & \textit{Ranking acc.} \\
    \bottomrule
  \end{tabular}
  }

  \vspace{1em}

  \resizebox{\textwidth}{!}{
  \begin{tabular}{lccccccc}
    \toprule
    \textbf{Task $\downarrow$ Model $\rightarrow$} & \multicolumn{2}{c}{\textbf{GeoCLIP}} & \multicolumn{2}{c}{\textbf{GT-Loc}} & \multicolumn{2}{c}{\begin{tabular}{@{}c@{}}\textbf{LoCA}\\[-0.3em]{\scriptsize (Ours)}\end{tabular}} & \multirow{2}{*}{\textbf{Metric}} \\
    \cmidrule(lr){2-3} \cmidrule(lr){4-5} \cmidrule(lr){6-7}
      & MLP & Linear & MLP & Linear & MLP & Linear & \\
    \midrule
    \multicolumn{8}{l}{\textit{\textbf{Land cover}}} \\
    \addlinespace[1pt]
    reBEN & 56.0 $\pm$ 0.18 & 44.1 $\pm$ 0.05 & 56.4 $\pm$ 0.35 & 52.4 $\pm$ 0.04 & {\color{ForestGreen}\textbf{58.3 $\pm$ 0.35}} & 54.7 $\pm$ 0.22 & \textit{F1$_{\mu}$} \\
    \midrule
    \multicolumn{8}{l}{\textit{\textbf{Wildfire forecasting}}} \\
    \addlinespace[1pt]
    Wildfire forecasting & 79.8 $\pm$ 0.25 & 73.2 $\pm$ 0.23 & 79.6 $\pm$ 0.60 & 75.5 $\pm$ 0.05 & \textbf{80.2 $\pm$ 0.43} & {\color{RoyalBlue}\textbf{76.2 $\pm$ 0.08}} & \textit{AUPRC} \\
    \midrule
    \multicolumn{8}{l}{\textit{\textbf{Geophysical \& Socio-economic}}} \\
    \addlinespace[1pt]
    Air temperature & \textbf{94.3 $\pm$ 0.07} & 32.9 $\pm$ 0.20 & 94.2 $\pm$ 0.13 & 60.4 $\pm$ 0.41 & 90.3 $\pm$ 0.44 & {\color{RoyalBlue}\textbf{81.4 $\pm$ 0.68}} & \textit{$R^2$} \\
    Median income & 45.0 $\pm$ 0.44 & 12.1 $\pm$ 0.12 & 45.9 $\pm$ 0.60 & 31.9 $\pm$ 0.13 & {\color{ForestGreen}\textbf{51.9 $\pm$ 0.50}} & {\color{RoyalBlue}\textbf{39.4 $\pm$ 0.67}} & \textit{$R^2$} \\
    Cali.\ housing & \textbf{77.3 $\pm$ 0.36} & \textbf{48.4 $\pm$ 0.11} & {\color{ForestGreen}\textbf{78.3 $\pm$ 0.21}} & {\color{RoyalBlue}\textbf{66.1 $\pm$ 0.05}} & 65.5 $\pm$ 0.65 & 36.4 $\pm$ 0.12 & \textit{$R^2$} \\
    Elevation & 83.0 $\pm$ 0.27 & 56.0 $\pm$ 0.01 & 83.9 $\pm$ 0.18 & 62.5 $\pm$ 0.01 & 89.4 $\pm$ 0.11 & 64.4 $\pm$ 0.01 & \textit{$R^2$} \\
    Population & 78.1 $\pm$ 0.15 & 66.0 $\pm$ 0.02 & 78.5 $\pm$ 0.18 & {\color{RoyalBlue}\textbf{69.7 $\pm$ 0.01}} & \textbf{81.8 $\pm$ 0.11} & 68.2 $\pm$ 0.02 & \textit{$R^2$} \\
    \midrule
    \multicolumn{8}{l}{\textit{\textbf{Biodiversity \& Ecology}}} \\
    \addlinespace[1pt]
    SatBird (USA summer) & \textbf{63.3 $\pm$ 0.06} & 47.8 $\pm$ 0.11 & \textbf{63.3 $\pm$ 0.01} & 52.7 $\pm$ 0.01 & 60.8 $\pm$ 0.02 & {\color{RoyalBlue}\textbf{61.1 $\pm$ 0.06}} & \textit{top-$k$ Acc.} \\
    Countries & 91.5 $\pm$ 0.20 & 79.8 $\pm$ 0.02 & 94.7 $\pm$ 0.06 & 94.4 $\pm$ 0.02 & {\color{RoyalBlue}\textbf{96.4 $\pm$ 0.03}} & {\color{ForestGreen}\textbf{97.0 $\pm$ 0.02}} & \textit{F1$_{w}$} \\
    iNaturalist & 44.8 $\pm$ 1.33 & 58.4 $\pm$ 0.24 & 49.3 $\pm$ 1.23 & 59.3 $\pm$ 0.25 & \textbf{55.0 $\pm$ 0.34} & {\color{ForestGreen}\textbf{71.4 $\pm$ 0.19}} & \textit{Acc.} \\
    Biome & 86.9 $\pm$ 0.21 & 75.6 $\pm$ 0.03 & 88.1 $\pm$ 0.11 & 81.9 $\pm$ 0.02 & {\color{ForestGreen}\textbf{91.0 $\pm$ 0.07}} & {\color{RoyalBlue}\textbf{87.0 $\pm$ 0.04}} & \textit{F1$_{w}$} \\
    Ecoregions & 60.6 $\pm$ 0.66 & 42.6 $\pm$ 0.03 & 69.6 $\pm$ 0.30 & 75.9 $\pm$ 0.03 & {\color{RoyalBlue}\textbf{80.2 $\pm$ 0.13}} & {\color{ForestGreen}\textbf{85.0 $\pm$ 0.13}} & \textit{F1$_{w}$} \\
    \midrule
    \multicolumn{8}{l}{\textit{\textbf{Environmental}}} \\
    \addlinespace[1pt]
    Temperature & \multicolumn{2}{c}{72.0 $\pm$ 0.73} & \multicolumn{2}{c}{74.4 $\pm$ 0.92} & \multicolumn{2}{c}{\textbf{77.9 $\pm$ 0.90}} & \textit{Ranking acc.} \\
    Precipitation & \multicolumn{2}{c}{72.1 $\pm$ 0.49} & \multicolumn{2}{c}{72.5 $\pm$ 0.63} & \multicolumn{2}{c}{{\color{ForestGreen}\textbf{77.7 $\pm$ 1.02}}} & \textit{Ranking acc.} \\
    MSLP & \multicolumn{2}{c}{77.8 $\pm$ 0.71} & \multicolumn{2}{c}{\textbf{79.1 $\pm$ 0.69}} & \multicolumn{2}{c}{76.3 $\pm$ 1.07} & \textit{Ranking acc.} \\
    Wind speed & \multicolumn{2}{c}{73.8 $\pm$ 0.72} & \multicolumn{2}{c}{71.4 $\pm$ 1.63} & \multicolumn{2}{c}{{\color{ForestGreen}\textbf{75.6 $\pm$ 1.49}}} & \textit{Ranking acc.} \\
    Relative humidity & \multicolumn{2}{c}{47.8 $\pm$ 0.85} & \multicolumn{2}{c}{43.5 $\pm$ 1.38} & \multicolumn{2}{c}{47.5 $\pm$ 1.41} & \textit{Ranking acc.} \\
    \bottomrule
  \end{tabular}
  }
\end{table*}

\subsubsection{Comparison among state-of-the-art encoders}

\Cref{tab:head-to-head} gives a pairwise view of the results in \Cref{tab:benchmark-results-main}. For each model and task we use its best probe (the LightGBM score for climate ranking), and each pair is compared only on the tasks both models were evaluated on: 9 for pairs involving AlphaEarth, 17 otherwise. Ties at the displayed precision count for neither model, and the win rate pools a model's wins over all of its pairwise comparisons.

LoCA is the only encoder that beats every other model on a majority of shared tasks, and it has the highest win rate (68.8\%). Its advantage is therefore broad. It is never outperformed on land-cover or ecological classification, and its losses concentrate on geophysical and climate targets. Its most frequent loss, air temperature, comes from the MLP probe: under the linear probe LoCA obtains the best air-temperature score of all models (\Cref{tab:benchmark-results-main}). This is consistent with its high effective rank (\Cref{tab:effrank}), since location information is linearly accessible in LoCA's embeddings, whereas other encoders need a non-linear probe to reach it. The most controlled comparison is with SatCLIP, which shares LoCA's location-encoder architecture: LoCA wins 12 of these 17 tasks, which isolates the benefit of its co-aligned, high-rank multimodal target.

Beyond LoCA, the ranking follows how each encoder's target is formed. SatCLIP, which like LoCA aligns to EO imagery, is second by majority wins (4/5). The encoders anchored to models trained on natural images rank lower: GT-Loc and GeoCLIP win majorities only against each other, and Copernicus-FM wins no majority despite its 768-dimensional, imagery-derived embeddings. This supports the argument of \Cref{sec:regularization} that rank obtained from an external anchor partly reflects axes of variation that are not specific to EO, and does not translate into downstream performance. AlphaEarth is LoCA's closest competitor, since LoCA wins 5 of 9 tasks.

\begin{table}[t]
  \caption{Head-to-head comparison over the 17 tasks of \Cref{tab:benchmark-results-main}
  (best probe per model). Cell $(i,j)$: tasks on which the row model outperforms the column model, out of those both were evaluated on;
  {\color{ForestGreen}\textbf{green}} marks a majority. Ties at displayed precision count for neither model. LoCA is the only encoder that beats every other encoder on a majority of shared tasks.}
  \label{tab:head-to-head}
  \centering
  \resizebox{\columnwidth}{!}{%
  \begin{tabular}{lcccccc|cc}
    \toprule
    Row beats column $\rightarrow$ & LoCA & SatCLIP & AlphaEarth & GT-Loc & GeoCLIP & Copernicus FM & \begin{tabular}{@{}c@{}}Majority\\wins\end{tabular} & \begin{tabular}{@{}c@{}}Win\\rate\end{tabular} \\
    \midrule
    \textbf{LoCA (Ours)} & -- & {\color{ForestGreen}\textbf{12/17}} & {\color{ForestGreen}\textbf{5/9}} & {\color{ForestGreen}\textbf{13/17}} & {\color{ForestGreen}\textbf{12/17}} & {\color{ForestGreen}\textbf{11/17}} & {\color{ForestGreen}\textbf{5/5}} & 68.8\% \\
    SatCLIP & 5/17 & -- & {\color{ForestGreen}\textbf{5/9}} & {\color{ForestGreen}\textbf{11/17}} & {\color{ForestGreen}\textbf{10/17}} & {\color{ForestGreen}\textbf{12/17}} & 4/5 & 55.8\% \\
    AlphaEarth & 4/9 & 4/9 & -- & {\color{ForestGreen}\textbf{6/9}} & {\color{ForestGreen}\textbf{6/9}} & {\color{ForestGreen}\textbf{6/9}} & 3/5 & 57.8\% \\
    GT-Loc & 4/17 & 6/17 & 3/9 & -- & {\color{ForestGreen}\textbf{12/17}} & {\color{ForestGreen}\textbf{10/17}} & 2/5 & 45.5\% \\
    GeoCLIP & 5/17 & 7/17 & 3/9 & 4/17 & -- & {\color{ForestGreen}\textbf{9/17}} & 1/5 & 36.4\% \\
    Copernicus FM & 4/17 & 5/17 & 3/9 & 7/17 & 8/17 & -- & 0/5 & 35.1\% \\
    \bottomrule
  \end{tabular}}
\end{table}

\subsubsection{Supervised co-aligned targets}\label{app:supervised}
\Cref{tab:reben-resnet18-perclass} provides the per-class results for the supervised BigEarthNet experiment summarized in the main paper. Rows marked ``(ours)'' use the same frozen base encoders as the corresponding baseline row, but train the adapters and shared projection head with MV-DHEL.

\begin{table*}[t]
  \caption{Per-class F1 on BigEarthNet land-cover classification (10 classes),
  comparing pure location encoders (rows 1--2) against supervised ResNet-18
  image encoders with and without the MV-DHEL multimodal training.
  ``Mean F1'' is the unweighted mean across the 10 class-wise F1 scores;
  remaining columns report class-wise F1.
  \textbf{Best per column} is highlighted in {\color{ForestGreen}\textbf{green}}
  and \textbf{second best} in \textbf{bold}.}
  \label{tab:reben-resnet18-perclass}
  \centering
  \resizebox{\textwidth}{!}{%
  \begin{tabular}{lcccccccccccc}
    \toprule
    \textbf{Model}
      & \textbf{Mean F1}
      & \begin{tabular}[c]{@{}c@{}}\textbf{Arable}\\\textbf{Land}\end{tabular}
      & \begin{tabular}[c]{@{}c@{}}\textbf{Mixed}\\\textbf{Forest}\end{tabular}
      & \begin{tabular}[c]{@{}c@{}}\textbf{Coniferous}\\\textbf{Forest}\end{tabular}
      & \begin{tabular}[c]{@{}c@{}}\textbf{Transitional}\\\textbf{Woodland}\end{tabular}
      & \begin{tabular}[c]{@{}c@{}}\textbf{Broad-leaved}\\\textbf{Forest}\end{tabular}
      & \begin{tabular}[c]{@{}c@{}}\textbf{Urban}\\\textbf{Fabric}\end{tabular}
      & \begin{tabular}[c]{@{}c@{}}\textbf{Inland}\\\textbf{Waters}\end{tabular}
      & \begin{tabular}[c]{@{}c@{}}\textbf{Complex}\\\textbf{Cultivation}\end{tabular}
      & \begin{tabular}[c]{@{}c@{}}\textbf{Marine}\\\textbf{Waters}\end{tabular}
      & \textbf{Pastures} \\
    \midrule

    Location
      & 74.7 & 71.0 & 77.1 & 80.4 & 67.5 & 73.8 & 65.8 & 72.8 & 69.6 & 89.3 & 79.6 \\
    Location (ours)
      & 76.6 & 71.4 & 79.8 & 83.8 & 68.5 & 77.6 & 63.0 & 73.0 & 70.7 & \textbf{97.8} & 80.6 \\

    \midrule

    S1
      & 69.6 & 72.9 & 70.0 & 73.1 & 63.8 & 64.0 & 75.9 & 78.4 & 52.5 & 95.2 & 50.1 \\
    S1 (ours)
      & 75.9 & 76.1 & 74.6 & 76.7 & 67.0 & 70.2 & \textbf{78.0} & 84.7 & 64.8 & 95.6 & 70.9 \\

    \midrule

    S2
      & 78.4 & 78.5 & 78.4 & 84.5 & 68.9 & 76.7 & 73.8 & 82.5 & 67.9 & 96.1 & 76.6 \\
    S2 (ours)
      & \textbf{80.9} & \textbf{81.7} & \textbf{81.7} & \textbf{86.9} & \textbf{71.4} & 79.1 & 75.1 & 83.5 & 71.4 & 96.7 & \textbf{81.2} \\

    \midrule

    S1+S2
      & 78.1 & 78.0 & 78.4 & 83.7 & 67.8 & 76.6 & 72.2 & 83.5 & 67.9 & 97.2 & 76.1 \\
    S1+S2 (ours)
      & 80.8 & \textbf{81.7} & \textbf{81.7} & 86.4 & 70.9 & 78.9 & 74.3 & 83.6 & \textbf{72.0} & 97.3 & \textbf{81.2} \\

    \midrule

    All
      & \textbf{80.9} & 81.0 & 80.9 & 85.8 & 70.6 & \textbf{79.2} & 76.9 & \textbf{85.5} & 71.3 & \textbf{97.8} & 79.8 \\
    All (ours)
      & {\color{ForestGreen}\textbf{84.3}} & {\color{ForestGreen}\textbf{83.9}}
      & {\color{ForestGreen}\textbf{83.6}} & {\color{ForestGreen}\textbf{88.7}}
      & {\color{ForestGreen}\textbf{73.8}} & {\color{ForestGreen}\textbf{83.5}}
      & {\color{ForestGreen}\textbf{80.3}} & {\color{ForestGreen}\textbf{89.0}}
      & {\color{ForestGreen}\textbf{77.0}} & {\color{ForestGreen}\textbf{99.3}}
      & {\color{ForestGreen}\textbf{83.9}} \\

    \bottomrule
  \end{tabular}%
  }
\end{table*}

The MV-DHEL adapter stage improves the mean F1 for every modality family. The location encoder improves from $74.7$ to $76.6$, S1 from $69.6$ to $75.9$, S2 from $78.4$ to $80.9$, S1+S2 from $78.1$ to $80.8$, and the full four-modality system from $80.9$ to $84.3$. The largest gain is for S1, the weakest standalone image modality, suggesting that co-aligned multimodal supervision is especially helpful when one view is individually less discriminative. The full system, All (ours), obtains the best score in every class, supporting the claim that Stage~2 can exploit an already co-aligned supervised target even without Stage~1.

\subsubsection{Ablating LoCA's Multi-View Components.}
The purpose of this ablation is to separate the two distinct roles played by
LoCA's additional views: joint Stage~1 pretraining should construct a coherent
Sentinel-1/Sentinel-2 target for Stage~2, while the metadata encoder should add
complementary non-image signal only on tasks where such signal is useful.
Table~\ref{tab:app_ablation} reports the complete 17-task version of the
ablation.

The main failure mode is the removal of joint Stage~1 pretraining: when the
Sentinel-1 and Sentinel-2 targets are replaced by independently pretrained MoCo
encoders, Stage~2 no longer receives a co-aligned multi-view target, and
performance collapses on nearly all regression tasks and is substantially worse
on classification tasks.  For example, reBEN
drops from $54.7$ to $1.1$ micro-F1, air-temperature prediction from $81.4$ to
$-5.8$ $R^2$, and Countries from $97.0$ to $4.4$ weighted F1. The effect is particularly clear on the climate-ranking tasks: pairwise
accuracy lies between $49.6\%$ and $50.6\%$ across all five variables, i.e., approximately chance-level ordering. This supports the role of
\textit{Stage~1} in constructing a coherent multimodal target before adapting the location encoder and re-enforces the fact that SatCLIP cannot be trivially extended to multiple modalities

Removing metadata gives a more selective effect, which is the expected behavior. The metadata view is not expected to help uniformly
across all tasks: when Sentinel-1, Sentinel-2, or coarse geography already
capture the relevant signal, adding metadata should have little effect. Metadata improves the full model over the
metadata-free variant on 9 of the 17 tasks. The largest gains are observed for several geophysical and climate-ranking tasks, including air
temperature ($76.0$ to $81.4$), elevation ($57.1$ to $64.4$), MSLP ranking ($70.0\%$ to $76.3\%$), and wind-speed ranking ($71.2\%$ to $75.6\%$). 

Overall, the ablation indicates two qualitatively different effects. \textit{Stage~1} is necessary for obtaining a useful multimodal location
representation, with its removal causing broad performance collapse and chance-level climate ranking. The metadata view instead contributes
task-dependent complementary information: it improves several targets, but does not provide a uniform gain across all downstream tasks.

\begin{table}[t]
  \caption{Ablation isolating the contributions of (i) the joint \textit{Stage-1} contrastive pretraining and
  (ii) the metadata encoder, under the linear probe. Values are metric $\times\,100$; higher is better.
  Highlighting follows \Cref{tab:benchmark-results-main}. Without \textit{Stage 1} the embeddings collapse:
  downstream probes drop to (or below) the trivial predictor and environmental ranking is at chance ($50$).
  All variants are evaluated on all 17 tasks.}
  \label{tab:app_ablation}
  \centering
  \begin{tabular}{lcccc}
    \toprule
    \textbf{Task $\downarrow$ Variant $\rightarrow$}
      & \begin{tabular}{@{}c@{}}\textbf{LoCA}\\[-0.3em]{\scriptsize w/o \textit{Stage 1}}\end{tabular}
      & \begin{tabular}{@{}c@{}}\textbf{LoCA}\\[-0.3em]{\scriptsize w/o metadata}\end{tabular}
      & \begin{tabular}{@{}c@{}}\textbf{LoCA}\\[-0.3em]{\scriptsize (full)}\end{tabular}
      & \textbf{Metric} \\
    \midrule
    \multicolumn{5}{l}{\textit{\textbf{Biodiversity \& Ecology}}} \\
    \addlinespace[1pt]
    Countries & 4.4 & \textbf{96.9} & {\color{ForestGreen}\textbf{97.0}} & \textit{F1$_{w}$} \\
    Biome & 9.7 & \textbf{85.8} & {\color{ForestGreen}\textbf{87.0}} & \textit{F1$_{w}$} \\
    Ecoregions & 1.7 & {\color{ForestGreen}\textbf{85.4}} & \textbf{85.0} & \textit{F1$_{w}$} \\
    iNaturalist & 56.6 & {\color{ForestGreen}\textbf{71.6}} & \textbf{71.4} & \textit{Acc.} \\
    SatBird (USA summer) & 44.5 & {\color{ForestGreen}\textbf{61.2}} & \textbf{61.1} & \textit{top-$k$ Acc.} \\
    \midrule
    \multicolumn{5}{l}{\textit{\textbf{Land cover}}} \\
    \addlinespace[1pt]
    reBEN & 1.1 & {\color{ForestGreen}\textbf{55.2}} & \textbf{54.7} & \textit{F1$_{\mu}$} \\
    \midrule
    \multicolumn{5}{l}{\textit{\textbf{Wildfire forecasting}}} \\
    \addlinespace[1pt]
    Wildfire forecasting & 49.5 & {\color{ForestGreen}\textbf{76.5}} & \textbf{76.2} & \textit{AUPRC} \\
    \midrule
    \multicolumn{5}{l}{\textit{\textbf{Geophysical \& Socio-economic}}} \\
    \addlinespace[1pt]
    Median income & -7.9 & \textbf{36.5} & {\color{ForestGreen}\textbf{39.4}} & \textit{$R^2$} \\
    Population & -0.8 & \textbf{66.0} & {\color{ForestGreen}\textbf{68.2}} & \textit{$R^2$} \\
    Cali.\ housing & -1.5 & {\color{ForestGreen}\textbf{36.8}} & \textbf{36.4} & \textit{$R^2$} \\
    Air temperature & -5.8 & \textbf{76.0} & {\color{ForestGreen}\textbf{81.4}} & \textit{$R^2$} \\
    Elevation & -0.7 & \textbf{57.1} & {\color{ForestGreen}\textbf{64.4}} & \textit{$R^2$} \\
    \midrule
    \rowcolor{orange!12}
    \textit{\textbf{Environmental}} & \multicolumn{3}{c}{\textbf{LightGBM ranking}} & \\
    \cmidrule(lr){2-4}
    \addlinespace[1pt]
    Temperature & 49.9 & {\color{ForestGreen}\textbf{84.4}} & \textbf{77.9} & \textit{Ranking acc.} \\
    Precipitation & 49.6 & \textbf{75.9} & {\color{ForestGreen}\textbf{77.7}} & \textit{Ranking acc.} \\
    MSLP & 49.9 & \textbf{70.0} & {\color{ForestGreen}\textbf{76.3}} & \textit{Ranking acc.} \\
    Wind speed & 50.6 & \textbf{71.2} & {\color{ForestGreen}\textbf{75.6}} & \textit{Ranking acc.} \\
    Relative humidity & {\color{ForestGreen}\textbf{49.8}} & 47.1 & \textbf{47.5} & \textit{Ranking acc.} \\
    \midrule
    \textbf{\# Best (out of 17)} & 1 & 7 & {\color{ForestGreen}\textbf{9}} & \\
    \bottomrule
  \end{tabular}
\end{table}

\section{Extended Related Work}

\subsection{Geospatial Foundation Models (GFMs) \& Embeddings.}
Several GFMs have relied on Masked Image Modelling (MIM) objectives as their pretraining strategy. ScaleMAE~\cite{reed2023scale} and Cross-Scale MAE~\cite{tang2023cross} utilize MIM with GSD-aware positional encoding and cross-prediction for multi-scale learning. USatMAE~\cite{irvin2023usat} and SpectralMAE~\cite{zhu2023spectralmae} introduce spectral-aware masking for paired NAIP/Sentinel-2 and hyperspectral data respectively. Extending the MIM paradigm, Presto~\cite{tseng2023lightweight}, FoMo-Net~\cite{bountos2025fomo}, DOFA~\cite{xiong2024neural},  Copernicus-FM~\cite{wang2025towards}, OlmoEarth~\cite{herzog2025olmoearth}, THOR~\cite{forgaard2026thor}  modify the MIM framework to train sensor agnostic encoders.  
Finally, Galileo~\cite{tseng2025galileo} blends MIM with contrastive objectives introducing a dual global/local contrastive loss for multi-scale feature learning.
Following a different direction, SkySense~\cite{guo2024skysense} aligns WorldView RGB, Sentinel-2, and Sentinel-1 through contrastive and cross-modal objectives. In a similar fashion, CROMA~\cite{fuller2023croma} and OmniSAT~\cite{astruc2024omnisat} fuse Sentinel-1/2 via contrastive and reconstruction losses, with CROMA limited to single timestamps and OmniSAT extending to time series and aerial data.
Recent works have explored generative approaches for the creation of GFMs
TerraMind~\cite{jakubik2025terramind} is the first any-to-any generative multimodal EO model, pretrained autoregressively on dual-scale token- and pixel-level representations across nine modalities. DiffusionSat~\cite{khanna2023diffusionsat} conditions on geolocation metadata for temporal generation, super-resolution, and in-painting. SatDiFuser~\cite{jia2025can} demonstrates the potential of diffusion-based approaches for the creation of GFMs.  Driven by the increasing capabilities of GFMs, recent works have utilized them to store precomputed location embeddings, promising to replace raw observations altogether. AlphaEarth~\cite{brown2025alphaearth} fuses optical, radar, LiDAR, and climate reanalysis data into 64-dimensional embeddings at 10m resolution, released annually. Copernicus-FM~\cite{wang2025towards} provides a global embedding map at $0.25^{\circ}$ by averaging multi-sensor representations. TESSERA~\cite{feng2026tessera} applies a Barlow Twins pretraining framework, releasing annual embeddings with \textit{int8} quantization. However, the pretraining schemes underlying these models are observation-based rather than location-based, yielding degenerate representations that undermine their stated purpose. 

\subsection{Location Encoders.} Location encoders follow a different paradigm, aiming to learn location-specific representations beyond single observations, identifying the key characteristics and conditions that govern and connect geospatial locations. In \cite{MacAodha2019PresenceOnlyGP}  the authors introduced an early location encoder training a fully connected residual network (FCNet) to estimate the probability of object categories at a given location and time. Sphere2Vec~\cite{mai2023sphere2vec} proposed a multi-scale encoding scheme based on Double Fourier Sphere (DFS) features that preserves geodesic distances on the spherical surface. In SirenNet~\cite{russwurm2023geographic} the authors introduced spherical harmonic basis functions as positional encoding which they utilized in a sinusoidal representation networks. Contrastive learning has emerged as the dominant paradigm for training location encoders at scale. GeoCLIP~\cite{cepeda2023geoclip} aligns CLIP-based image embeddings with GPS coordinates. SatCLIP~\cite{klemmer2025satclip} extends the contrastive paradigm to satellite imagery, training a location encoder against Sentinel-2, frozen image encoders to produce general-purpose location embeddings. Recent approaches have moved beyond unimodal location encoders. GAIR~\cite{liu2025gair} jointly aligns remote sensing imagery, street-view photographs, and geolocation through two contrastive losses: one aligning RS with street-view embeddings and another anchoring each visual modality to the location representation. This pairwise formulation, however, does not scale gracefully: each additional modality requires new loss terms that increasingly distort the shared embedding space. GT-Loc~\cite{shatwell2025gt} incorporates temporal context by mapping images, locations, and timestamps into a joint embedding space, pairing a frozen pretrained image encoder with learned location and time encoders optimized via a GeoCLIP-style loss for image--location alignment and a cross-entropy loss for image--time alignment. While effective for three modalities, the heterogeneous loss design again constrains extensibility. TIGeR~\cite{shatwell2026tiger} addresses pairwise completeness by aligning image--location--time triplets through six InfoNCE losses covering all pairwise combinations. This is suboptimal for two reasons: a)~the competing objectives produce conflicting gradients that hinder the optimization process, and b) the number of required losses grows quadratically with the number of modalities, making further extension impractical. We address these limitations by generalizing the SatCLIP framework to arbitrarily many modalities through a single multi-view contrastive objective.

\subsection{Multi-View Contrastive Learning.} The effectiveness of leveraging multiple views in CL has been demonstrated by aggregating pairwise losses from views generated through various means, such as multiple projection heads \cite{wang2024adaptive}, image scales \cite{li2023global}, classes \cite{shah2023multi}, and feature levels \cite{xu2022multi}. However, the way to efficiently utilize multiple views in Contrastive Learning has been indipendently studied. 
 An early approach by \cite{tian2020contrastive} framed multi-view learning as a collection of independent tasks, treating each combination of views separately and simply aggregating pairwise losses. Recently, \cite{shidani2024polyview} has incorporated another aggregation method, while utilizing optimal transport on a high similarity tensor was proposed by \cite{piran2024contrasting}. The MV-DHEL \citep{koromilas2025principled} objective was recently proposed and is able to properly utilize all view interactions resulting to consistent performance gains that scale with the number of views and embeddings that avoid dimensionality collapse. In this work we draw from the MV-DHEL benefits to train multimodal location encoders.

\section{Licenses and terms for existing assets}
\label{app:asset_licenses}

We use existing datasets, models, and code only for research purposes and credit the original creators through the citations in the main paper and appendix. \Cref{tab:asset-licenses} summarizes the licenses and terms of use for the external assets used for pretraining, metadata construction, baselines, and downstream evaluation. We do not redistribute third-party raw imagery, pretrained baseline checkpoints, or third-party benchmark datasets unless their license explicitly permits redistribution.

\begin{table*}[p]
\centering
\footnotesize
\caption{Existing assets used in this work and their licenses or terms of use. ``Use in this paper'' describes how the asset is used; it does not imply that we redistribute the original asset.}
\label{tab:asset-licenses}
\renewcommand{\arraystretch}{1.20}
\setlength{\tabcolsep}{5pt}
\begin{tabular}{
>{\raggedright\arraybackslash}p{0.23\textwidth}
>{\raggedright\arraybackslash}p{0.32\textwidth}
>{\raggedright\arraybackslash}p{0.36\textwidth}
}
\toprule
\textbf{Asset} & \textbf{Use in this paper} & \textbf{License / terms}  \\
\midrule

SatCLIP S2-100K dataset
& Starting point for coordinate sampling and Sentinel-2 pairing.
& Research purposes only; license field not specified in the dataset card. \\
\midrule

SatCLIP code and pretrained models
& Baseline location encoder; architecture reference.
& MIT License. \\
\midrule

Copernicus DEM GLO-30
& Elevation, slope, aspect, and curvature metadata.
& Copernicus DEM free license. \\
\midrule

WorldPop population density
& Population-density metadata.
& Creative Commons Attribution 4.0 International (CC BY 4.0). \\
\midrule

WorldPop distance-to-roads product
& Road-distance metadata.
& Creative Commons Attribution 4.0 International (CC BY 4.0). \\
\midrule

OpenStreetMap
& Underlying source for road-distance data.
& Open Data Commons Open Database License 1.0 (ODbL 1.0). \\
\midrule

C3S / ESA CCI Land Cover
& Land-cover fraction metadata.
& Dataset page lists ESA CCI licence, CC-BY licence, and VITO licence. \\
\midrule

AlphaEarth Foundations satellite embedding dataset
& Observation-based embedding-field baseline.
& Creative Commons Attribution 4.0 International (CC BY 4.0). \\
\midrule

Copernicus-FM and Copernicus-Embed-025deg
& Observation-based embedding-field baseline.
& Code: Apache-2.0 with third-party MIT/CC-BY-NC-4.0 components; pretrained weights and datasets: CC BY 4.0; Copernicus-Embed-025deg: CC BY 4.0. \\
\midrule

GeoCLIP
& Baseline location encoder.
& MIT License. \\
\midrule

GT-Loc
& Baseline location encoder.
& Creative Commons Attribution-NonCommercial 4.0 International (CC BY-NC 4.0). \\
\midrule

SatBird
& Biodiversity downstream benchmark.
& Repository license: GPL-3.0. \\
\midrule

BigEarthNet v2.0 / reBEN
& Land-cover downstream benchmark; supervised encoder experiment.
& Community Data License Agreement -- Permissive, Version 1.0 (CDLA-Permissive-1.0). \\
\midrule

SeasFire Cube
& Long-term climate-ranking downstream tasks.
& Creative Commons Attribution 4.0 International (CC BY 4.0). \\
\midrule

Mesogeos
& Wildfire danger forecasting downstream task.
& Creative Commons Attribution 4.0 International (CC BY 4.0). \\
\midrule

iNaturalist / iNat competition data
& Species-prediction downstream task.
& iNaturalist competition terms: non-commercial research and educational use; no redistribution of images. \\
\midrule

RESOLVE Ecoregions 2017
& Countries/biome/ecoregion-style ecological labels and evaluation resources.
& Creative Commons Attribution 4.0 International (CC BY 4.0). \\
\midrule

California Housing via scikit-learn
& Downstream regression benchmark.
& scikit-learn is BSD-3-Clause; original dataset is from StatLib. \\
\midrule

USDA ERS median household income data
& Median-income downstream regression benchmark.
& U.S. Government public-domain information / USDA website terms. \\
\midrule

U.S. Census TIGER/Line county boundaries
& County-coordinate construction for median-income task.
& U.S. Census Bureau public geographic data. \\

\bottomrule
\end{tabular}
\end{table*}

\end{document}